\documentclass[journal]{IEEEtran}

\usepackage[sort, numbers]{natbib}
\usepackage{amssymb}
\usepackage{tabularx}
\usepackage{graphicx}
\usepackage{subfigure}
\usepackage{pgfplots}
\usepackage{tikz}
\usepackage{pgfplotstable}
\usepackage{algorithm}
\usepackage{algorithmic}
\usepackage{multirow}
\usepackage{amsmath,amsfonts}
\usepackage{array}
\usepackage{stfloats}
\usepackage{fixltx2e}
\usepackage{verbatim}
\usepackage{balance}
\usepackage{stmaryrd}
\hyphenation{op-tical net-works semi-conduc-tor IEEE-Xplore}
\def\BibTeX{{\rm B\kern-.05em{\sc i\kern-.025em b}\kern-.08em
    T\kern-.1667em\lower.7ex\hbox{E}\kern-.125emX}}

\begin{document}
\title{FedDriveScore: Federated Scoring Driving Behavior with a Mixture of Metric Distributions}
\author{Lin~Lu,~\IEEEmembership{Member,~IEEE}, 
\thanks{Lin Lu are with the College of Computer and Information Technology, China Three Gorges University, Yichang, 443517, China. }}  

\markboth{Journal of \LaTeX\ Class Files,~Vol.~18, No.~9, September~2020}%
{Lu \MakeLowercase{\textit{et al.}}}

\maketitle

\begin{abstract}
Scoring the driving performance of various drivers on a unified scale, based on how safe or economical they drive on their daily trips, is essential for the driver profile task. Connected vehicles provide the opportunity to collect real-world driving data, which is advantageous for constructing scoring models. However, the lack of pre-labeled scores impede the use of supervised regression models and the data privacy issues hinder the way of traditionally data-centralized learning on the cloud side for model training. To address them, an unsupervised scoring method is presented without the need for labels while still preserving fairness and objectiveness compared to subjective scoring strategies. Subsequently, a federated learning framework based on vehicle-cloud collaboration is proposed as a privacy-friendly alternative to centralized learning. This framework includes a consistently federated version of the scoring method to reduce the performance degradation of the global scoring model caused by the statistical heterogeneous challenge of local data. Theoretical and experimental analysis demonstrate that our federated scoring model is consistent with the utility of the centrally learned counterpart and is effective in evaluating driving performance.
\end{abstract}

\begin{IEEEkeywords}
driver profile, driving behavior scoring, privacy preservation, federated learning, vehicle-cloud collaboration, driving data analysis
\end{IEEEkeywords}

\section{Introduction}
\IEEEPARstart{S}{coring} driving behavior is a common way to assess driving performance of drivers in the driver profile task \cite{ellison2015driver, azadani2021driving}. This task is different from other primary tasks of driving behavior analysis in that its purpose is not just detecting a specific type of behavior or style. Instead, it uses detected risky or economic maneuvers as scoring metrics and then calculates a score that represents the level of competence of the drivers. As a result, drivers could be ranked or compared on a common scale based on how safe or economical they drive. Profiling drivers with scores creates valuable opportunities for applications such as Usage-Based Insurance (UBI)~\cite{handel2014insurance, yin2018advanced}, fleet management~\cite{lu2021bi}. Taking UBI as an example, an insurance company can rank its customers and calculate personalized premiums according to their personal information and the scores recorded in their profile. It can significantly reduce insurance costs for safe drivers and help develop good driving habits for risky drivers.

Today, with the popularity of (intelligent) connected vehicles and the Internet of Vehicles (IoV), it is possible to study naturalistic driving behavior by collecting drivers' driving data sourced from automobile sensors during their everyday trips. Driving data such as velocity, steering angle, braking intensity, and fuel consumption are critical for building the scoring function/models for the above applications. With real-world driving data shared from the driver population, these applications can detect risky driving maneuvers such as sudden acceleration, abrupt lane change, and harsh braking and then develop evaluating metrics. Afterwards, the goal is to build a scoring function/model which is then used to calculate a summarizing score for each driver or his trip.

However, in practical applications, achieving this goal is faced with two challenges: 1) the absence of label information being fed to a regression algorithm for training a scoring model. As recognized, it is costly to obtain already scored samples from real-world driving data, resulting in the unavailability of the labeled information (e.g. risky scores). This makes the driver scoring problem different from the typical regression problem and henceforth requires unsupervised modeling methods. 2) the increasing privacy issue during individual driving data collection and sharing. Current modeling methods usually follow the Centralized Learning (CL) paradigm, where driving data sets are gathered on a central server, resulting in concerns about driver privacy exposure. Suppose that an insurer collects drivers' vehicle data to build a scorecard for UBI purposes; a curious insurer is likely to infer a driver's driving habits, address, income level and even cause data leakage to external adversaries~\cite{zhou2018location}. Worse still, the CL way is now subject to serious privacy restrictions, such as the \textit{GDPR}\footnote{https://gdpr-info.eu/} regulation, or the closely related \textit{guidelines on processing personal data in the context of connected vehicles and mobility-related applications}~\cite{de2021european}. Consequently, there is an urgent need for unsupervised and privacy-protecting scoring solutions.

This paper presents a federated unsupervised scoring solution, named FedDriveScore, to address the two challenges. In this solution, an unsupervised scoring strategy inspired by the distribution mixture\footnote{https://en.wikipedia.org/wiki/Mixture\_distribution} is introduced to tackle the first challenge. The strategy first assigns to each evaluation metric a score related to its Cumulative Probability Density (CPD) and expectation type.
Then, considering the correlation of different metrics, the weighting method CRiteria Importance Through Intercreteria Correlation (CRITIC)~\cite{diakoulaki1995determining} is used to weigh the importance of different evaluation metrics. Finally, the driving score is a mixture density (i.e. the weighted sum of individual score) of all metrics. This strategy is termed CRITIC-DM, which has the advantage of objectively and fairly evaluating/comparing driving performance. 

For the second challenge, Federated Learning (FL)~\cite{MAL-083,TRUONG2022103692} instead of CL is used to implement our CRITIC-DM scoring strategy. In FL settings, connected vehicles can collaboratively update model-related parameters with their driving data, but the data is still kept local. The central server in the cloud coordinates the collaboration process, where the locally updated parameters, rather than the driving data, of the edge vehicles are collected and aggregated globally to produce a federated scoring model. However, as a price to protect privacy, the performance of the federated scoring model suffers from the statistical heterogeneity of the driving data of the driver population. Driving data is kept in different connected vehicles, whose local data are Non-Independently and Identically Distributed (Non-IID) due to the personalized driving behaviors of various drivers. As a result, the utility (i.e., model performance) of the global scoring model is generally affected. Therefore, maintaining the utility consistency between the FL- and CL-based CRITIC-DM models is a key issue. To address this issue, we introduce a Consistently Federated version for CRITIC-DM (represented as CF4CRITIC-DM), aiming to ensure consistency of the federated model to that of CL version. Theoretical analysis and experimental tests of our solution demonstrate that our solution is effective in achieving a lossless and privacy-friendly evaluation of driving performance in the driver profile task.

\section{Related Work}
\newtheorem{definition}{Definition}[section]
\newtheorem{them}{Theorem}[section]
\newtheorem{lema}{Lemma}[section]
\newtheorem{assum}{Assumption}[section]
\newtheorem{proof}{Proof}[section]

This section first reviews previous literature on unsupervised scoring methods for driver profiling and then introduces the preliminary of FL and its application progress in related area. Finally, the literature gap is discussed as the motivation for our proposed idea.

\subsection{Unsupervised Scoring Methods}
In order to narrow the relevant work within the scope of this study, a simple and descriptive definition of driving behavior scoring is given below.

\begin{definition}{
Scoring driving behavior defines the process of using vehicle sensor data to extract evaluation metrics of a trip driven by a (human) driver first. Then, historical trip metrics collected from a driver population are later used to formulate a scoring model, which calculates a profile (a score) for every driver for downstream applications.
}\end{definition}

Mathematically, given $\boldsymbol{x}\in \mathbb{R}^d$ as a set of metrics used to evaluate driver driving performance and letting $\mathcal{F}$ be the target scoring model, our goal can be simply formulated as $\mathcal{F}\{\boldsymbol{x}_1, ..., \boldsymbol{x}_d \}$ $\rightarrow y$, where $y$ is the final score. By doing so, different driver's driving performance can be quantitatively mapped into a comparable space, where higher scores are associated with better driving performance, while poorer driving performance is assigned a lower score. As mentioned, it is easy to obtain $\boldsymbol{x}$ but difficult to obtain the corresponding labeled score $y$ for everyday driving trips. To this end, several studies have proposed quantitatively measuring driving performance in an unsupervised manner.

To begin with, the established evaluation metrics shall be evident enough for profiling drivers with their probability of crash risk and/or eco-driving skill. Recently, \citet{SINGH2021106349} presented a systematic review of studies on driver behavior profiling. This review revealed different parameters adopted by researchers to profile drivers, such as vehicle speed, acceleration, braking, mileage, fuel consumption, etc. These parameters are usually used to generate two types of evaluation metrics. One is the frequency of critical driving events related to the total distance / time traveled~\cite{li2011trip}. For example, the number of rapid accelerations per km can be obtained by dividing the number of rapid accelerations in a driving trip by the distance traveled. The other type is the ratio of events related to the total driving distance/time. For example, the idle ratio is obtained by dividing the idle duration in a driving trip by its total duration. Next, by establishing a fixed number of metrics, trips of different lengths from different drivers can be transformed into the same dimension.

\citet{castignani2015driver} proposed a mobile phone scoring application: SenseFleet for vehicle-independent driver profiling. In SenseFleet, any single trip is scored with a value between 0 and 100 (with 100 being the best possible score). When a trip starts, a set of fuzzy logic was defined to detect and count different risky events of different types (e.g. speeding, hard acceleration, hard deceleration, and aggressive steering). Then, for each combination of type and severity of an event, the system reduces the score by a predefined number of points. As specified in their previous work~\cite{castignani2013driver}, they calculated the center of gravity of the fuzzy output curve obtained as a score, which was then used to rank the different drivers. It can be noted that the effectiveness of this method relies heavily on the pre-defined rules in the fuzzy system.

\citet{6856461} also introduced a mobile phone application "DriveSafe" to score driver driving behavior and alert them when they behaved unsafely. Their program first detects acceleration, braking, and turning events for each trip. The product of the number and intensity of a driving event divided by the distance traveled is then taken as a metric (named indicator in this study).
Next, each indicator is modeled as a Gaussian distribution, and its Cumulative Density Function (CDF) is used to get a score for a specific indicator value. Let $\boldsymbol{x}_i$ be a specific value of the $i^{th}$ metric, then it will be scored as 1-CDF$_i({\boldsymbol{x}_i})$, where CDF$_i({\boldsymbol{x}_i})$ is the Cumulative Probability Density (CPD).  
At the beginning of each trip, drivers get 10 points, and, depending on their driving behavior, they were penalized by deducting the scores for each driving event they perform per km. The final per-trip score is the average of the 3 indicator scores. Later in 2019, they illustrated the potential application of "DriveSafe" to score a group of old age drivers using naturalistic driving data obtained from mobile phone sensors~\cite{bergasa2019naturalistic}. In their study, the distribution nature of the indicators was used instead of the subjective rules used by~\citet{castignani2013driver}, but simply averaging the different indicator scores neglects the underlying correlation and importance of these indicators.

A similar term to driving behavior scoring is driving style scoring or recognizing driving style into continuous index~\cite{martinez2017driving}. \citet{zgl2018AFU} proposed a fully unsupervised driver scoring framework that attempts to assign a driving score based on the probability of the occurrence of different driving styles according to the geometry of the road topology and traffic conditions. The basis of their study was the dependence of normal driving patterns on road geometry (trajectory) and traffic flow characteristics. They first learned these road-dependent driving norms and assigned higher scores to drivers that conform to the norms. The scoring strategy involves simple joint probabilities estimated by the frequency of samples that lie in preclustered driving styles.
\citet{mohammadnazar2021classifying} first extracted the driving segments of drivers on different types of roads, and then used the K-Means algorithm to group these segments to obtain three styles, marked 1 to 3, respectively. Then, to quantitatively evaluate the change in driving style of the same driver, the driving score is defined as the average value of the number of three driving styles that occurred for each type of road. Recently, \citet{schoner2021safety} presented an objective scoring system that assigns safety scores to the observed driving style and aggregates them to provide an overall safety score for a given driving session(trip). The safety score was developed by matching the safety indexes with the maneuver-based parameter ranges processed from real-world highway traffic data. For each parameter, the ranges of normal parameters were identified first by statistical analysis, and its allocation to the safety index spread (0 to 1) was based on driving physics. The assessment of a complete drive in a summarizing score is based on a time-dependent recording and weighted averaging of the scores of the single maneuvers. These methods distinguish themselves in that the driving styles shall be clustered first, which limits their scalability.

To alleviate subjective dependence on weights and scoring rules in previous scoring methods, \citet{liu2017driving} adopted the Entropy Weight Analytical Hierarchy Process (EW-AHP) to assign objective weights to the different evaluation metrics to score drivers based on their driving behavior. In this study, various driving factors such as mileage, driving time, traffic flow, speeding, traffic violations, hard deceleration and acceleration, and severe maneuvers were used to rate drivers based on driving safety. The Entropy Weight (EW)~\cite{deng2000inter} calculates their weights from the perspective of information entropy. However, the AHP method requires manual comparison and weighting of different factors, which makes it susceptible to subjective personal biases. In view of this, we proposed an improved scoring approach~\cite{lu2021bi} to handle large-scale driving trips. This method also adopted the EW method but relied on the CDF of metrics (similar to the idea of \cite{6856461}) to give a weighted sum, which is the final score. It totally uses the statistical information of the big driving data, which yields strong objectiveness. However, the correlation between evaluation metrics is ignored by the EW method, which motivated us to present a new unsupervised scoring method, namely the CRITIC-DM mentioned above.

\subsection{Federated Learning}
Federated learning~\cite{MAL-083, TRUONG2022103692} highlights itself on collaborative machine learning tasks in a decentralized environment. In a decentralized environment, private data sets are scattered among various devices or silos (referred to as clients uniformly). Clients can establish connections to a central server, which orchestrates the entire training process and updates the global model. Instead of sharing their private data, the clients influence an overarching model through various parameter updating techniques (typically local training and global aggregation). In this way, FL seeks to build models that can benefit from everyone's data without being exposed. 
Moreover, it is robust to device offline cases and can handle the imbalanced distribution of local data samples.\footnote{https://en.wikipedia.org/wiki/Federated\_learning}

Given a total of $K$ clients that each has a subset of samples belonging to a homogeneous metric space $X\in \mathbb{R}^{n\times d}$, FL aims to train an aggregated global model, by optimizing the following Eq.~\ref{eq:fl}:
\begin{equation}
\label{eq:fl}
\begin{aligned}
\min &\quad L(X;\mathcal{\hat{F}})=  \mathop{\mathbb{E}}\limits_{\forall X^{(i)} \subset X} [L(X^{(i)};\mathcal{\hat{F}})] \\
& \mathcal{\hat{F}} = Agg(\mathcal{\hat{F}}_1,...,\mathcal{\hat{F}}_K) &
\end{aligned}
\end{equation}
where $X^{(i)}$ is the feature sample set of client $i$, and $\mathcal{\hat{F}}_i$ is the local model trained on $X^{(i)}$. $L$ is the training loss, $Agg$ is some aggregation strategy, such as FedAvg~\cite{mcmahan17a}, to produce the global model $\mathcal{\hat{F}}$.

FL achieves privacy preservation by keeping local datasets inaccessible to others. However, as a price, the accuracy of the learned model is somewhat sacrificed. According to the following definition given by~\cite{yang2019federated}, there is a utility/performance gap between the FL model and the conventionally non-federated model. When choosing FL to build a CRITIC-DM based scoring model, our primary goal is to allow its FL version to be approximate or equal to the performance of its CL version.

\begin{definition}[$\delta$-Accuracy]{
Assume that $\mathcal{F}$ is a model trained on a centralized data set $X=\{ X_1 \cup, ...,\cup X_K \}$, the accuracy of which is represented by $h$. The accuracy of the FL model $\hat{\mathcal{F}}$, denoted $\hat{h}$, should be very close to the accuracy of $\mathcal{F}$, denoted $h$. Let $\delta$ be a non-negative real number; then the federated learning algorithm has a loss of accuracy $\delta$ if $|h - \hat{h}|<\delta$.
}\end{definition}

FL in IoV has attracted increasing attention today. 
For example, \citet{elbir2020federated} designed a distributed training framework for FL-based Machine Learning (ML) models as an effective learning solution for vehicular networks and edge intelligence, in contrast to classical ML techniques based on centralized training on cloud servers. \citet{9360666} proposed a conceptual framework for vehicle-oriented networks: FVN (Federated Vehicular Network). They focused on the use of FVN to support FL. \citet{9205482} reviewed and summarized recent FL applications in vehicular networks.

Specifically, for the privacy-preserved analysis of driving behavior, \citet{rizzo2015privacy} considered that commercial insurance companies use private driving data (e.g., acceleration events, average acceleration, braking events) to predict driving style, which infringes on personal privacy. Therefore, they proposed vertical FL for insurance companies and connected vehicles to jointly train a secure decision tree based on historical vehicle travel data without compromising their privacy. In their work, the Paillier algorithm for homomorphic addition is used to implement a secure summing during the decision tree training process. \citet{chhabra2023privacy} applied FL to driver behavior analysis in which connected vehicles in the network collaborate to train CNN-LSTM and CNN-Bi-LSTM deep learning models for driver behavior classification (safe, unsafe, or fatigue) without sharing raw data. To overcome the lack-of-label problem faced by supervised FL, our recent work~\cite{lu2023federated} focused on unsupervised FL to recognize driving styles from drivers' private trajectories. In this study, a federated version of the K-means algorithm was proposed to classify drivers into 3 different styles. 
To our knowledge, the work considering scoring driving behavior under the prospect of privacy of the driver is very limited. 

\subsection{Knowledge Gap}
It should be noted that for existing scoring methods, it is difficult to compare those unsupervised scoring methods and argue which is better. We previously presented a scoring method based on the EW and metrics' distributions to achieve fairness and objectiveness to a large population of drivers. However, the correlation of different metrics is not considered by EW, resulting in unreasonable weights assignment. To solve this problem, the CRITIC~\cite{diakoulaki1995determining} method was further adopted, incorporating the inter-correlation of metrics. As a result, this study presents a new method for scoring driving behavior, where each evaluation metric is modeled as a distribution form, and the CRITIC method is used to calculate the weights of different metrics. Then the CRITIC weighted sum of the CPD-based metrics' scores is taken as the summarizing score.

The preservation of privacy is another neglected area in the field of driver profile. FL under the IoV is a promising way to perform driver profiling with privacy preserved. Moreover, related works~\cite{rizzo2015privacy, lu2023federated} also used Homomorphic Encryption (HE), specifically the Paillier crytosystem, to protect model parameters. Therefore, we later proposed an HE-based FL framework to build the above CRITIC-DM model. In short, the lack of consideration of both the unsupervised and the privacy-preserving driving behavior scoring problem motivated this study. 

\section{Problem Formulation}
Generally speaking, scoring driving behavior is to measure the driving performance of a group of drivers based on metrics related to safety, economy, and convenience. We consider a set of historical driving trip data from this driver group, each trip will generate a vector of $d$ evaluation metrics (denoted as $\boldsymbol{x}$), and each metric has a predetermined qualitative goal of expectation (e.g. the bigger, the better). For example, the frequency of risky maneuvers normalized by trip distance could be treated as evaluation indicators, and surely their expectations are the lower, the better. 

For the problem of how to obtain the final score, we assume that the driving score $y$ is the outcome of mixture densities from $d$ metrics. This is inspired by the assumption of mixture distribution. With this assumption, a random variable's cumulative distribution function (and the probability density function if it exists) can be expressed as a convex combination (i.e. a weighted sum, with non-negative weights that sum to 1) of other distribution functions and density functions. The individual distributions that are combined to form the mixture distribution are called mixture components, and the weights associated with each component are called mixture weights.

Specifically in this study, our problem is transformed to find a mixture model $\mathcal{F}$ depending on CDFs, weights, and also the expectation types of evaluation metrics. Formally, the score of $\mathcal{F}(\boldsymbol{x})$ is defined by Eq.~\ref{eq:wsm}, where $\boldsymbol{w}$ is the mixture weights, $\top$ is the vector transformation operation. $\boldsymbol{q}$ is a metric-level score vector, which is given by metric-level score functions. $f_i(\cdot)$ is the $i^{th}$ metric-level score function dependent on its PDF / CDF and expectation type.

\begin{equation}
\label{eq:wsm}
\begin{aligned}
y=\mathcal{F}(x) &= \boldsymbol{w}^{\top} \cdot \boldsymbol{q} = \sum_{i=1}^d \boldsymbol{w}_i \cdot f_i(\boldsymbol{x}_i ) \\
&\quad \boldsymbol{x} \in X \\
s.t.& \quad \boldsymbol{q}_i =  f_i(\boldsymbol{x}_i) \in [0, 1] \\
& \quad \boldsymbol{w} >= 0, \sum_{i=1} \boldsymbol{w}_i =1
\end{aligned}
\end{equation}

The expectation types are predetermined, so the objective of building $\mathcal{F}$ is to find reasonable weights and scoring functions at the metric level. Reasonability here means objectivity and effectiveness, which respectively refer to the ability to avoid subjective biased guidance on metric weight and function, and to correctly distinguish and rank the quality of driving performance.

In the FL setting, driving data need not be collected, but kept locally in different vehicles. Let $X=\{X^{(1)}, ..., X^{(K)}\}$ be the local data set from $K$ vehicles. Our goal with FL is to train a federated scoring model (denoted $\hat{\mathcal{F}}$) without revealing or sharing the privacy of clients. Formally, this objective is defined as following:
\begin{equation}
\label{eq:fl_scoring}
\begin{aligned}
&\qquad \mathcal{\hat{F}}=\boldsymbol{w}^{\top} \cdot \vec f(x) \\
&s.t. 
\begin{array}{c}
\boldsymbol{w}=Agg_w(\boldsymbol{w}^{(1)},...,\boldsymbol{w}^{(K)}) \\
f(x)= Agg_f(f^{(1)},...,f^{(K)}) \\
\end{array}
\end{aligned}
\end{equation}
where $\boldsymbol{w}^{(k)}$ and $f^{(k)}$ come from client $k$. $\vec f=[f_1(\cdot),...,f_d(\cdot)]$ is a function vector that contains $d$ metric-level scoring functions. $Agg_w$ and $Agg_f$ are aggregating policies to produce global weights and scoring functions oriented to metrics. 

A common aggregation strategy is FedAvg~\cite{mcmahan17a}, in which the global scoring model is fit per client to produce local updates, then local updates are averaged to be new global model parameters. However, a drawback of FedAvg is that its utility suffers from the Non-IID of clients' data. In this study, due to the personalized driving behaviors of various drivers, there exists the following situation.

\textbf{Metric distribution skewness}: Local metric data for each vehicle are drawn from a different distribution and cannot be seen as a representative sample of the overall distribution. That is, for vehicle $i$ and $j$, there exists $\mathcal{P}(X^{(i)})\neq \mathcal{P}(X^{(j)}) \neq \mathcal{P}(X)$. Such a skewness results in statistical heterogeneity of evaluation metrics, where the statistics of the local metric data are biased estimates of $X$. Simply averaging local metric statistics to be the global metric statistics is not a wise choice. 

As a result, with the statistical heterogeneity of the evaluation metrics, maintaining the utility consistency of the federated scoring model with the CL scoring model is the key issue for our solution. To make the performance of $\hat{\mathcal{F}}$ approximate that of the model $\mathcal{F}$ as possible, the ultimate optimization direction is defined as follow: 
\begin{equation}
\textbf{min}\ \delta \triangleq \textbf{min}\ \mathbb{E}[\mathcal{\hat{F}}(X) - \mathcal{F}(X)] 
\end{equation}

By solving this problem, we can achieve strong consistency between both models, thus ensuring the objectiveness and effectiveness of the federated scoring model. The proposed solution will be introduced in the next section.

\section{Proposed Solution}
\label{sec:method}
In view of the above goals and problems, this section first presents a FL framework, under which we then
introduce the CF4CRITIC-DM method for the training and prediction of the scoring model. The main steps of this method include the definition of evaluation metrics, secure observation of the metric distribution, model training, and scoring processes.

\subsection{FedDriveScore Framework}
\label{sec:sys_arch}
Standing on the IoV infrastructure, our FedDriveScore framework consists of three roles: a coordinator, an arbiter, and a group of clients(vehicles). The coordinator is a cloud server responsible for coordinating the whole task, the clients hold their private data and respond to the server's requests, and the arbiter only provides auxiliary functions like providing homomorphic encryption keys. Fig.~\ref{fig:sys_overview} illustrates the architecture of the proposed FedDriveScore solution, aiming at cross-vehicle FL for the CRITIC-DM model. The workflow involves three stages:

\begin{figure}[!htpb]
\centering
\includegraphics*[width=3.5in]{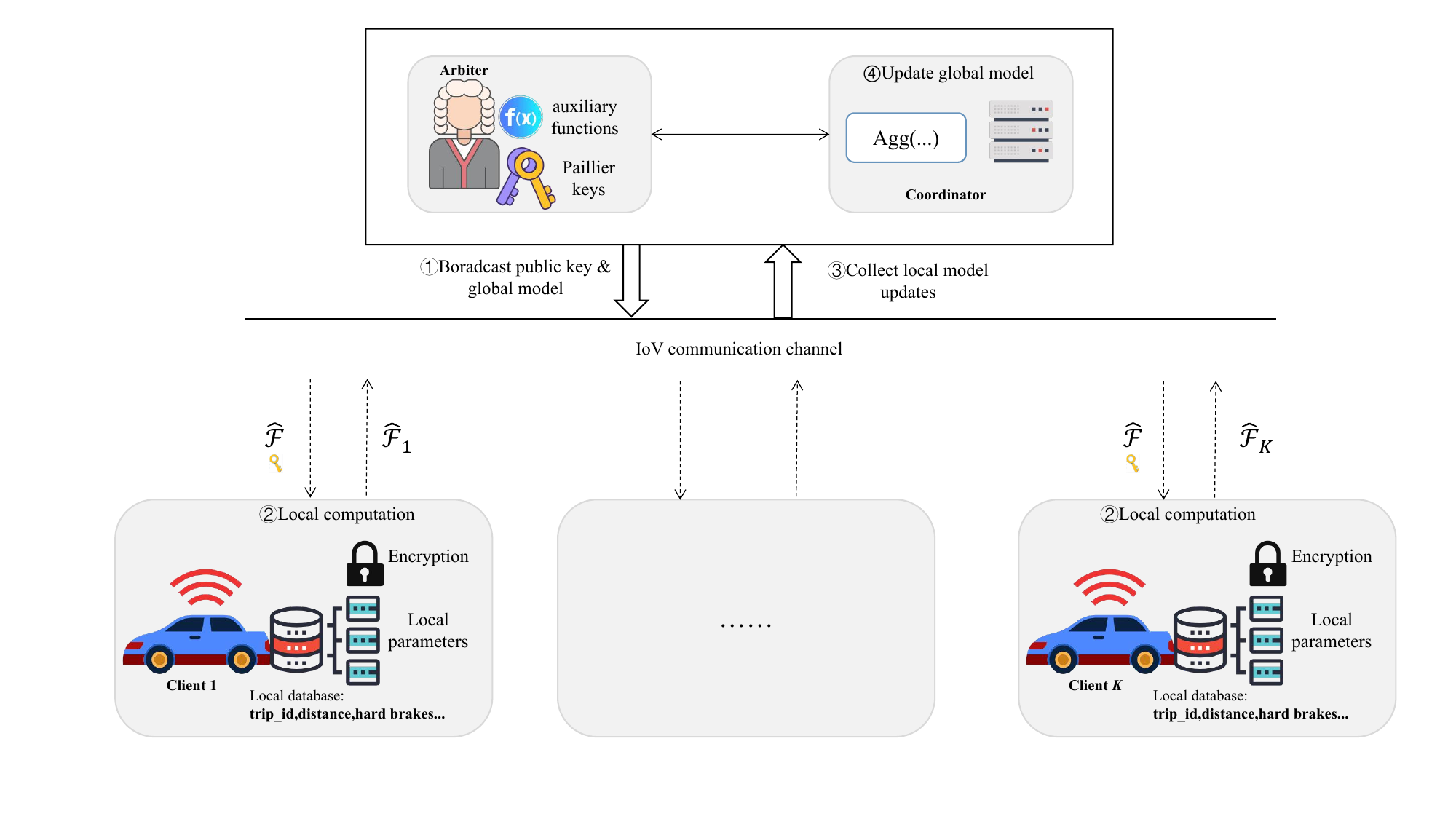}
\caption{The overview of FedDriveScore Framework via IoV.}
\label{fig:sys_overview}
\end{figure}

The first stage is the \textbf{Setup} stage, which is mainly used for the essential parameters and cryptographic keys for all entities. At the beginning, each client pre-installs a pair of public-private keys for signature and authentication. The model owner chooses a server that can communicate with clients and launch this collaborative modeling job. The arbiter, as a trusted third party, is deployed to publish the credible HE-based public key and functions for the clients and the server. The participating roles will verify each other using digital signature technology. 

Then, they enter the \textbf{Collaboration} stage to update the parameters of the global model. In this stage, each client updates the server's global model with its data and returns the required but encrypted parameters. Next, the server aggregates those encrypted parameters of the clients with the help of the arbiter and again broadcasts the updated global model to the clients for the next iteration. Finally, in the model \textbf{Inference} stage, the server sends the global model to all clients to obtain only the \textit{id-score} pairs of the whole population.

\subsection{CF4CRITIC-DM method}

\subsubsection{Definition of Evaluation Metrics}
Defining the evaluation metrics is an important prerequisite to transform variable-length driving data into a fixed-length evaluation space. For example, daily trips of drivers may last from minutes to hours. Therefore, we should decide the number of scoring metrics and determine which are related to safety, economy, and other criteria. The concerned expert usually provides the number and extraction rule to obtain evaluation metrics from the driving data. In doing so, a corresponding metric vector is generated for each driving trip.

In addition, different evaluation metrics have their own types of expectations to promote safe and economical driving. We summarize three types that can be used to evaluate driving behavior, as shown in Table~\ref{tab:criteria_types}. Among them, the positive type indicates that the larger the value of the metric, the better the performance; the negative type indicates that the smaller the value, the better the performance. If a metric is an oscillating indicator, then a low or high value will result in poor performance. As can be seen, these metrics provide accurate information on how drivers drive, but different metrics may have conflict goals. In the experimental section, we will list specific evaluation metrics and their expectation types for different data sets.

\begin{table}[!htbp]\small
\caption{The expectation types of metrics and their qualitative objectives on evaluating driving performance.}
\centering
\begin{tabularx}{0.49\textwidth}{|>{\setlength{\hsize}{0.2\hsize}}X|>{\setlength{\hsize}{0.27\hsize}}X|>{\setlength{\hsize}{0.53\hsize}}X|}
\hline
\textbf{Expectation Type} & \multicolumn{2}{l|}{\textbf{Example}} \\ \hline
Positive & average speed & The higher the average speed, the better the convenience to complete a trip \\ \hline
Negative & hard braking frequency & The fewer frequency of sudden braking, the better the safe driving performance\\ \hline
Oscillator & average engine speed & The lower or higher the average speed, the more it will deviate from the economical speed area, so the economical performance will be worse \\ \hline
\end{tabularx}
\label{tab:criteria_types}
\end{table}

\subsubsection{Observation of Metric Distributions}
In the context of FL, the extraction of metrics is one of the client's local data preprocessing processes. Assume that $D^{(i)}$ represents the data from the driving trip in vehicle $i$, who will convert it into the corresponding metric data set $X^{(i)}$ according to the pre-defined metric extraction program. The program is distributed by the cloud server. However, the local metric data set is only a partial description of the overall driving performance of the driver and cannot reflect the global distribution of metrics. Therefore, the distribution of each metric on the overall data $X$ should be and can be observed jointly using the secure histogram method.

The purpose of the secure histogram is to observe the distributions of metrics via their global histogram plots shown in Fig.~\ref{fig:sec_hist}, without the local data of the clients being collected and exposed. As we can see from Fig.~\ref{fig:sec_hist}, the clients only return the bin counts of each metric to the server. The server accumulates the local bin counts of all metrics and visualizes them in the form of a histogram. The probability density function of different metrics can then be determined through the histogram, such as the exponential distribution and the normal distribution shown in the right plot in Fig.~\ref{fig:sec_hist} . These distributions are usually instantiated based on statistics such as the sample mean and standard deviation. Therefore, one goal of our FL method is to estimate the global descriptive statistics of all evaluation metrics.

\begin{figure}[!htbp]
\centering
\includegraphics*[width=2.9in]{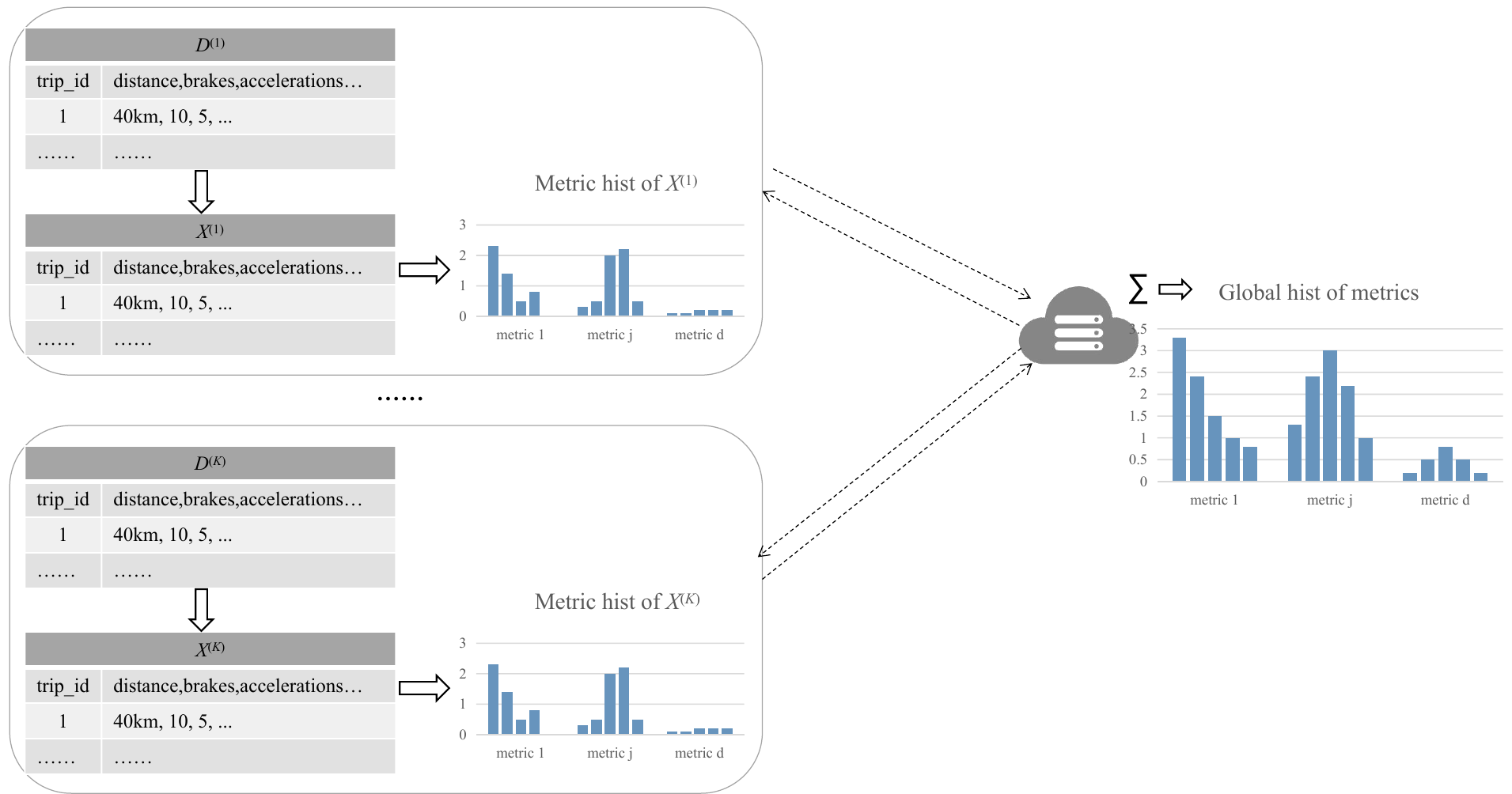}
\caption{Illustration of secure histogram for observing metric distributions.}
\label{fig:sec_hist}
\end{figure}

Next, we establish the link of the metric-level scoring functions to their CDFs. This is because the cumulative density of any distribution is between 0 and 1, and a traditional scoring system should always output a score that can be scaled to between 0 and 1. Therefore, the CPD can be used to establish the following correlation: (1) to maximize the positive metric, our method assumes that a higher CPD means that the driver is more skilled and his driving performance is better; (2) for an oscillatory metric, a value close to its expected mean $\mu$ indicates better driving performance; and (3) for a negative metric, a lower CPD represents better driving performance. This correlation determines that the form of the function $f$ is to obtain the CDFs of all metrics.

\subsubsection{Model Training Process}
Before continuing, we define some variable notation. Specifically, we let $G = \{n, \boldsymbol{s}, \boldsymbol{o}, \boldsymbol{u}, \boldsymbol{v}\}$ denote the number of samples, the vector of the sum of metrics, the vector of the squared sum of metrics, and the vectors of the maximum and minimum values of metrics, respectively. $\boldsymbol{\mu}$ and $\boldsymbol{\sigma}^2$ are the mean and variance vectors of the metrics, respectively. Additionally, let $\llbracket{\cdot}\rrbracket$ be the HE representation of any variable.

\begin{algorithm}[htbp]\small
\caption{The CF4CRITIC-DM algorithm}
\label{alg:flscorecard}
\begin{algorithmic}[1]
\STATE Clients prepare inputs: $X=\{X^{(1)} \cup, ...,\cup X^{(K)}\}$;
\STATE Initialize the arbiter, distributes the Paillier public key;
\STATE Initialize server, and the maximum training iterations $T$, clients selection rate $\tau$. 
\WHILE{$t<T$}
\STATE Server selects a fraction of clients (denoted as $I^t$) and broadcast $G^t$
\FOR{each selected client $i$ in parallel}
\STATE Compute local summary items, denoted as $G_i$.
\STATE Encrypt these items, and return $\llbracket{G_i}\rrbracket = \{ \llbracket{n}\rrbracket,\llbracket{\boldsymbol{s}}\rrbracket, \llbracket{\boldsymbol{o}}\rrbracket,
\llbracket{\boldsymbol{u}}\rrbracket, \llbracket{\boldsymbol{v}}\rrbracket \}_i$ to the server
\ENDFOR
\STATE The coordinator aggregates local statistical items to obtain $\llbracket{G^{t+1}}\rrbracket$
\STATE Coordinator sends $\llbracket{G^{t+1}}\rrbracket$ to the arbiter for computing $\boldsymbol{\mu}^{t+1}, \boldsymbol{\sigma}^{t+1}$
\STATE Coordinator broadcasts $G^{t+1}, \boldsymbol{\mu}^{t+1}, \boldsymbol{\sigma}^{t+1}$ to all clients;
\FOR{each client $i$ in selected clients}
\STATE Resets local statistics, i.e. $G_i^{t+1}=G^{t+1}$;
\STATE Normalize $X^{(k)}$ to unity with $\boldsymbol{u}, \boldsymbol{v}$;
\STATE Compute the local covariance matrix, denoted as $A_i^t$; 
\STATE Return $A_i^t$ to the coordinator
\ENDFOR
\STATE Coordinator aggregates the local covariance matrices to generate $A^{t+1}$
\STATE Coordinator updates the global weights $\boldsymbol{w}^{t+1}$
\ENDWHILE
\STATE Coordinator receives the final model parameters: $\boldsymbol{\mu}^T$, $\boldsymbol{\sigma}^T$, $\boldsymbol{w}^T$
\FOR{metric $j=1$ to $d$}
\STATE Formalize the CDF of $F_j$ according to the distribution type of $j$
\ENDFOR
\STATE Output $\hat{\mathcal{F}}$ that consists $\boldsymbol{w}$ and $\vec f$
\end{algorithmic}
\end{algorithm}

The federated training process of the CRITIC-DM based scoring model is present in Algo.~\ref{alg:flscorecard}, which mainly includes the initialization phase, the vehicle-cloud collaborative computing phase, and the scoring model construction phase. The first three steps of Algo.~\ref{alg:flscorecard} are performed during the initialization phase. Clients prepare their local metric samples and reach a consensus on the chosen arbiter. Then, the arbiter generates a pair of Paillier public and private keys (in the experiment, the key length is 1024 bits) and broadcasts the public key to each client. The arbiter holds its private key, but provides some auxiliary functions, such as performing comparison operations, to other parties that need to decrypt the intermediate parameters first. The coordinator in the initial step ($t$=0) will set $n$ to 0, and set $\boldsymbol{s}^0$ and $\boldsymbol{o}^0$ to a vector of zeros and $\boldsymbol{u}^0$ a vector of very large numbers, and vice versa for $\boldsymbol{v}^0$.

The algorithm then enters the vehicle-cloud collaborative computing stage. In each round of federated communication in the outer loop, the coordinator randomly selects a fraction of clients (denoted as $I^t$) and then coordinates two tasks. The first task (steps 6 to 12) aims to update the global statistics of the metrics, which helps to determine the PDF/CDF parameters of all the metrics. During the update process, the client's local statistical items are in encrypted form to resist server curiosity. The next task (Steps 13 to 20) aims to update the CRITIC weights of the metrics. The two tasks are alternatively repeated for $T$ rounds, and the coordinator receives the optimal global mean, standard deviation, and weight vectors of the metrics.

Steps 22 to 25 describe the scoring model construction phase. For the distribution type observed in the above section, the coordinator instantiates a PDF for each metric and then outputs a CDF vector, denoted as $\Phi = [\Phi_1, ..., \Phi_i,..., \Phi_d]$. This CDF vector, together with the weight vector, can then be used to construct the desired global scoring model $\mathcal{\hat{F}}$. Finally, the coordinator broadcasts the scoring model to all vehicles for the model scoring process.

In order to instantiate the distribution functions for all metrics, we must estimate their PDF parameters, usually mean, variance or standard deviation. On the other hand, the CRITIC method also requires maximum and minimum values for data normalization. Therefore, the first task is to estimate the global statistical items (minimum, maximum, mean and variance) of each metric. If the vehicle responds to the request from the cloud server, it will calculate five local statistical items locally and return them to the server. Before returning, these local statistics will be encrypted to be cipher-text using the Paillier public key. The coordination server can still aggregate these cipher-texts, and Paillier's additive homomorphic encryption has little impact on the performance of the model. 

Suppose $S^t, O^t, U^t, V^t$ is a collection of local statistical items collected by the coordination server in round $t$, it will follow the federation protocol to aggregate them according to Eq.~\ref{eq:agg_sum} (that is, step 12 in Algo.~\ref{alg:flscorecard}) to update the global statistical items. In this equation, $n_i$ is the sample number, $S^t(i,j)$ 
 and $O^t(i,j)$ respectively denote the sample sum and the sample square sum of the $j^{th}$ metric of the $i^{th}$ client in $I^t$.

\begin{equation}
\label{eq:agg_sum}
\begin{aligned}
\llbracket{n^{t+1}}\rrbracket = \llbracket{n^{t}}\rrbracket + \sum_{i \in I^t} \llbracket{n_i}\rrbracket \\
\llbracket{\boldsymbol{s}^{t+1}_j}\rrbracket = \llbracket{\boldsymbol{s}^{t}}\rrbracket + \sum_{i \in I^t} \llbracket{S^t(i,j)}\rrbracket\\
\llbracket{\boldsymbol{o}^{t+1}_j}\rrbracket = \llbracket{\boldsymbol{o}^{t}}\rrbracket + \sum_{i \in I^t} \llbracket{O^t(i,j)}\rrbracket\\
\end{aligned}
\end{equation}

Meanwhile, the coordinator updates the global maximum and minimum values of all metrics. For metric $j$, according to Eq.~\ref{eq:agg_compare}, the maximum value and minimum value of any metric can be obtained. Specifically, the arbiter receives and decrypts the difference value and returns whether it is greater than 0 or not to the coordinator.
\begin{equation}
\label{eq:agg_compare}
\begin{aligned}
\llbracket{\boldsymbol{u}^{t+1}_j}\rrbracket = \llbracket{U^t(i,j)}\rrbracket \quad if \ \llbracket{\boldsymbol{u}^{t}_j-U^t(i,j)}\rrbracket < 0 \quad for\quad i \in I^t \\
\llbracket{\boldsymbol{v}^{t+1}_j}\rrbracket =\llbracket{V^t(i,j)}\rrbracket \quad if \ \llbracket{\boldsymbol{v}^{t}_j - V^t(i,j)}\rrbracket > 0 \quad for\quad i \in I^t \\
\end{aligned}
\end{equation}

It should be noted that Agg$_f$ in Eq~\ref{eq:fl_scoring} refers to the parameters required to aggregate local summaries to update PDF. So, after updating global statistical elements, the mean vector can be calculated simply by $\boldsymbol{\mu}=\boldsymbol{s}^{t+1}/(n^{t+1}-1)$, and the variance vector can be calculated element-wise by $\boldsymbol{\sigma}^2=\boldsymbol{o}^{t+1}/(n^{t+1}-\boldsymbol{\mu}\cdot \boldsymbol{\mu}^{\top})$. Depending on the type of distribution of the $j^{th}$ metric, it can be used to describe a normal distribution PDF or an exponential distribution PDF, etc.

On the client side, global $\boldsymbol{u}^{t+1}$ and $\boldsymbol{v}^{t+1}$ are first calculated through the items in $G^{t+1}$ to normalize $X^{(i)}$ according to Eq.~\ref{eq:normalize}. Note that global $\boldsymbol{\mu}^{t+1}$ and $\boldsymbol{\sigma}^{t+1}$ should also be normalized accordingly.
\begin{equation}
\label{eq:normalize}
Z^{(i)}=(X^{(i)} - \boldsymbol{v}^{t+1})/(\boldsymbol{u}^{t+1} - \boldsymbol{v}^{t+1})
\end{equation}

Then the local covariance matrix of $i^{th}$ client, denoted as $A^{(i)}$, is updated according to Eq.~\ref{eq:covariance}
\begin{equation}
\label{eq:covariance}
A^{(i)}=\frac{1}{n^t-1}(Z^{(i)} - \boldsymbol{\mu}^{t+1})(Z^{(i)} - \boldsymbol{\mu}^{t+1})^{\top}
\end{equation}

On the server side, the local variance matrices are summed up. The global covariance matrix is represented as $A^{t+1}$, whose element $A^{t+1}(j,k)$ is obtained by Eq.\ref{eq:global_cov}:
\begin{equation}
\label{eq:global_cov}
A^{t+1}(j,k)=\sum_{i \in I^t} A^{(i)}(j,k)
\end{equation}

Then, this global variance matrix is divided element-wise by $\boldsymbol{\sigma}$ to produce the Pearson correlation matrix. This correlation matrix is denoted as $R^{t+1}$. The correlation coefficient $R^{t+1}(j,k)$ between metric $j$ and $j$ is obtained by Eq.~\ref{eq_R}:
\begin{equation}
\label{eq_R}
R^{t+1}(j,k)= A^{t+1}(j,k) / (\boldsymbol{\sigma}^{t+1}_j \times \boldsymbol{\sigma}^{t+1}_k)
\end{equation}

With $R^{t+1}$, the following coefficient for each metric $j$ will be computed by Eq.~\ref{eq_Cj}:
\begin{equation}
\label{eq_Cj}
\boldsymbol{c}_j=\boldsymbol{\sigma}^{t+1}_j \cdot \sum_{k=1}^d(1-R^{t+1}(j,k))
\end{equation}

Given this coefficient vector, a new weight vector $\boldsymbol{\tilde{w}}$ is obtained according to Eq.~\ref{eq_wj}. 
\begin{equation}
\label{eq_wj}
\boldsymbol{\tilde{w}}_j=\frac{\boldsymbol{c}_j}{\sum_{k=1}^d \boldsymbol{c}_k}
\end{equation}

However, $\boldsymbol{\tilde{w}}$ is only the outcome depending on the current round. On the contrary, the aggregation step in Algo.~\ref{alg:flscorecard} uses Eq.~\ref{eq_w_expectation} to obtain the final weights.
\begin{equation}
\label{eq_w_expectation}
\boldsymbol{w}^{t+1} = (1-\frac{1}{t})\boldsymbol{w}^{t} + \frac{1}{t}\boldsymbol{\tilde{w}}
\end{equation}

It can be seen that this aggregation strategy of $Agg_w$ in Eq.~\ref{eq:fl_scoring} is a recursive aggregation method that considers historical weight changes, which can effectively reduce the impact of fluctuations in local parameters under statistical heterogeneity. In Section~\ref{sec:convergence}, the convergence analysis shows that as $t$ increases, the aggregated weights will approach the weights computed centrally on $X$.

In addition to the weight vector, our model also contains a vector of the metric-level scoring function $\vec f$. For a given value $x$ of the metric $j$, its scoring function $f_j(x)$ is formulated by Eq.~\ref{eq:metric_score_fn}. 
\begin{equation}
\label{eq:metric_score_fn}
f_{j}(x) =  \begin{cases}
F_j(x) & j \text{ is positive} \\
1-F_j(x))& j \text{ is negative}\\
1-2\times |F_j(\boldsymbol{\mu}_j) - F_j(x)| & j\text{ is oscillating} 
\end{cases}
\end{equation}
where $F_j$ is the CDF w.r.t metric $j$. If $j$ is a positive metric, then the CPD value $F_j(x)$ is the score; if the opposite, the score is 1-$F_j(x)$.
The third special case aims to show that $j$ is an oscillator and that typically $x$ follows a normal distribution. In this case, the mean tendency is preferred; therefore, the absolute value of $(F_j(\boldsymbol{\mu}_j) - F_j(x))$ is taken as the degree of deviation from the mean. The higher the deviation, the lower the score. Table~\ref{tab:CDf_forms} lists the PDFs and CDFs for the exponential and normal distributions used in this study. Note that our method is not limited to the two distributions but can cover more. This formulation not only considers the expectation types of metrics, but also normalizes each metric into the same range in case that metrics may have different scales. 

\begin{table}[htbp]\small
\caption{The two probability distributions used in this study}
\label{tab:CDf_forms}
\centering
\begin{tabular}{|c|c|c|}
\hline
\textbf{Distribution} & $x\sim E(\lambda)$ & $x\sim N(\mu, \sigma^2)$  \\ \hline
\textbf{PDF} & $\lambda e^{-\lambda x}$  & $\frac{1}{\sqrt{2\pi}\sigma} e^{-\frac{(x-\mu)^2}{2\sigma ^2} }$ \\ \hline 
\textbf{CDF} & $1-e^{-\lambda x}$ & $\frac{1}{\sqrt{2\pi}\sigma}\int_{-\infty }^{x}  e^{-\frac{(x-\mu)^2}{2\sigma ^2} }dx$ \\ \hline 
\textbf{Constraints} & $x\geq 0, \lambda=1/\mu>0$ & \\ \hline 
\end{tabular}
\end{table}

Finally, the federated scoring model $\hat{\mathcal{F}}$, depending on the globally estimated parameters: $\boldsymbol{w}^{T}$, $\boldsymbol{\mu}^T$ and $\boldsymbol{\sigma}^T$, could be achieved.

\subsubsection{Model inferring Process}
This section introduces how to score drivers after the federated scorecard model is built. First, the coordinator broadcasts the global model to all vehicles. Each vehicle then uses this model to evaluate its driving behavior in two steps as follows.

Given a member $\boldsymbol{x} \in X$, the first step is to calculate the vector of metric scores $\boldsymbol{q}$, where $\boldsymbol{q}_{j}=f_j(\boldsymbol{x}_j)$ for the $j^{th}$ metric. The second step is simply to summarize $\boldsymbol{q} \cdot \boldsymbol{w}^{\top}$ as the total score, which along with the corresponding driver ID will be returned to the coordination server. Throughout the whole process, the cloud server does not have access to the driver's private data but can still rank/distinguish drivers based on their summarizing scores.

\subsection{Convergence Analysis}
\label{sec:convergence}
This section provides a theoretical analysis of the convergence and consistency of the proposed method. For the metric statistics estimated by the CF4CRITIC-DM, we first present the following theorems. According to Theorem~\ref{them:unbiase_est_mean}, with increasing $t$, the global $\boldsymbol{\mu}$ and $\boldsymbol{\sigma}$ will eventually converge and their convergence direction is consistent with the actual mean and standard deviation of $X$.

\begin{them}
\label{them:unbiase_est_mean}
Assume $X$ a sample population in a homogeneous space $\mathbb{R}^d$, and its population mean and variance are $\mathbb{E}[X]$ and $\mathbb{D}[X]$ respectively . For each iteration $t$, it can be considered that $\mathcal{B}^t$ is the result of a random sampling of $X$, and $\boldsymbol{\mu}^t$ and ${(\boldsymbol{\sigma}^2)}^t$ are the mean and standard deviation estimated on $\mathcal{B}^t$. For a sufficiently small constant $\epsilon$, there exists 
\begin{equation}
\begin{aligned}
{\lim\limits_{t \to +\infty}}\mathcal{P}(|\boldsymbol{\mu}^t - \mathbb{E}[X]| > \epsilon)=0\\
{\lim\limits_{t \to +\infty}}\mathcal{P}(|\boldsymbol{(\sigma^2)}^t - \mathbb{D}[X]| > \epsilon)=0
\end{aligned}
\nonumber
\end{equation}
where $\boldsymbol{\mu}^t$ and $\mathbb{E}[X]$ are the estimated and real mean vector of $X$, respectively. ${(\boldsymbol{\sigma}^2)}^t$ and $\mathbb{D}[X]$ are the estimated and real variance of $X$, respectively.  
\end{them}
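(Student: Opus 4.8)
The plan is to reduce both limits to the weak law of large numbers (WLLN) applied to the sample moments, exploiting the fact that the recursive aggregation in Eq.~\ref{eq:agg_sum} \emph{accumulates} samples across communication rounds rather than discarding them. First I would argue that the effective sample size grows without bound: since each selected set $I^t$ contributes a nonnegative number of samples to $n^{t+1}=n^{t}+\sum_{i\in I^t}n_i$ and at least one client is selected with positive data per round, $n^t\to\infty$ as $t\to\infty$. Under the theorem's standing premise that $\mathcal{B}^t$ is a random sample of $X$, the accumulated data then behave as $n^t$ draws from the population distribution with mean $\mathbb{E}[X]$ and variance $\mathbb{D}[X]$. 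Because $X\in\mathbb{R}^d$, I would treat the $d$ coordinates separately and recombine the per-coordinate conclusions by a union bound, so it suffices to establish each scalar statement and then note $\mathcal{P}(\exists j:\,|\mu^t_j-\mathbb{E}[X]_j|>\epsilon)\le\sum_j \mathcal{P}(|\mu^t_j-\mathbb{E}[X]_j|>\epsilon)$.

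For the mean, the estimator is the running average $\boldsymbol{\mu}^t=\boldsymbol{s}^t/n^t=\frac{1}{n^t}\sum_k X_k$, and Chebyshev's inequality gives $\mathcal{P}(|\boldsymbol{\mu}^t-\mathbb{E}[X]|>\epsilon)\le \mathbb{D}[X]/(n^t\epsilon^2)\to 0$, which is precisely the WLLN and yields the first limit. For the variance I would rewrite the sample-variance estimator in terms of the first and second sample moments $M_1^t=\frac{1}{n^t}\sum_k X_k$ and $M_2^t=\frac{1}{n^t}\sum_k X_k^2$, i.e. $(\boldsymbol{\sigma}^2)^t=\frac{n^t}{n^t-1}\bigl(M_2^t-(M_1^t)^2\bigr)$. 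Applying the WLLN to each moment (the second one requiring a finite fourth moment so that $X_k^2$ has finite variance) gives $M_1^t\to\mathbb{E}[X]$ and $M_2^t\to\mathbb{E}[X^2]$ in probability; a continuous-mapping / Slutsky argument on the displayed expression then yields $(\boldsymbol{\sigma}^2)^t\to\mathbb{E}[X^2]-(\mathbb{E}[X])^2=\mathbb{D}[X]$ in probability, which is the second limit.

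The main obstacle I anticipate is not the algebra but justifying the random-sampling premise under the Non-IID regime (metric distribution skewness) that motivates the paper: within a single round the samples returned by one client follow that client's local distribution, not the population distribution, so the accumulated $\mathcal{B}^t$ is a genuine random sample of $X$ only if client selection is unbiased and each client is effectively weighted in proportion to its true population share. I would make this hypothesis explicit, linking the randomized client selection of Algorithm~\ref{alg:flscorecard} to the claim that the cumulative size-weighted mixture of the selected local samples is an unbiased estimator of $\mathcal{P}(X)$; once that is granted, the coordinate-wise WLLN applies and both limits follow. A secondary technical point, handled above via Slutsky, is the nonlinearity of the variance estimator together with the finite-fourth-moment condition needed to control the fluctuations of the squared terms.
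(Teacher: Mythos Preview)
Your proposal is correct and substantially more rigorous than the paper's own argument. The paper's proof is a short case split: it notes that for $K=1$ or $\tau=1.0$ the estimator coincides with the centralized one, and for the general case $K>1$, $\tau\in(0,1)$ it simply asserts that $\mathcal{B}^t$ is a random sample of $X$ and that therefore $\boldsymbol{\mu}^t$ and $(\boldsymbol{\sigma}^2)^t$ are unbiased estimates of $\mathbb{E}[X]$ and $\mathbb{D}[X]$. It does not invoke the WLLN, Chebyshev, or Slutsky, and it does not explicitly connect unbiasedness to the stated convergence-in-probability conclusion. By contrast, you actually derive the limit statements: the accumulation argument $n^t\to\infty$, the Chebyshev bound for the mean, and the moment decomposition with continuous mapping for the variance are exactly what is needed to turn ``unbiased estimate on a growing sample'' into the theorem's $\lim_t \mathcal{P}(\cdot>\epsilon)=0$ conclusions. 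Your explicit acknowledgment that the random-sampling premise must be \emph{assumed} under Non-IID client data, and your finite-fourth-moment caveat for the variance, are points the paper glosses over entirely. In short, both arguments rest on the same hypothesis (that $\mathcal{B}^t$ behaves as a random sample of $X$), but you supply the probabilistic machinery that the paper omits; what you lose is only the paper's brief observation that the $K=1$ and $\tau=1.0$ edge cases are immediate.
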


\begin{proof}
Our proof starts with the case where $K=1$, and the Theorem holds apparently since $X=X^{(1)}$;

When $\tau=1.0$, the theorem apparently holds since all clients participate in every training round, resulting in Eq.~\ref{eq:agg_sum} producing the same statistical items as those obtained with centralized computation;

When $K>1$ and $\tau \in (0,1)$, let $\mathcal{B}^t$, which contains local data from $K\times \tau$ clients, be the result of random sampling of $X$, then $\boldsymbol{\mu}^t$ is considered the mean vector of this sample, which is an unbiased estimate of $\mathbb{E}[X]$. Similarly, ${(\boldsymbol{\sigma}^2)}^t$ is an unbiased estimate of $\mathbb{D}[X]$.
\end{proof}

In addition, as $t$ increases, the aggregated weights obtained by our algorithm will also be close to the CRITIC weights calculated centrally on $X$. This conclusion is supported by the following Theorem~\ref{them:w_approximate}.

\begin{them}
\label{them:w_approximate}
Let $\boldsymbol{w}^*$ be the target weights computed on the invisible $X$ and let $\boldsymbol{w}^t$ be the aggregated global weight. If $t \to +\infty$, then there exists
\begin{equation}
{\lim\limits_{t \to +\infty}} \mathcal{P}(|\boldsymbol{w}^t - \boldsymbol{w}^*|> \epsilon)=0
\nonumber
\end{equation}
for a sufficiently small constant $\epsilon$.
\end{them}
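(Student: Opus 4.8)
The plan is to reduce the statement to a Cesàro (running-average) argument layered on top of Theorem~\ref{them:unbiase_est_mean}. First I would unroll the recursive aggregation in Eq.~\ref{eq_w_expectation}. Writing $\boldsymbol{\tilde{w}}^{s}$ for the per-round CRITIC weight produced at round $s$ and expanding the update $\boldsymbol{w}^{t+1}=(1-\frac{1}{t})\boldsymbol{w}^{t}+\frac{1}{t}\boldsymbol{\tilde{w}}^{s}$ from the base case shows that it telescopes into a plain arithmetic mean,
\begin{equation}
\boldsymbol{w}^{t+1}=\frac{1}{t}\sum_{s=1}^{t}\boldsymbol{\tilde{w}}^{s}. \nonumber
\end{equation}
Thus the object to control is the average of the single-round weight vectors, and the whole theorem becomes a law-of-large-numbers statement for these vectors, which also explains why the recursive scheme damps the per-round fluctuations caused by statistical heterogeneity.

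Next I would exhibit $\boldsymbol{\tilde{w}}^{s}$ as a fixed continuous map $g$ applied to the round-$s$ metric statistics. Composing Eq.~\ref{eq_R}, Eq.~\ref{eq_Cj}, and Eq.~\ref{eq_wj}, each $\boldsymbol{\tilde{w}}^{s}_{j}$ is an explicit rational function of the standard-deviation vector $\boldsymbol{\sigma}^{s}$ and the correlation matrix $R^{s}$ (which is itself built from $\boldsymbol{\sigma}^{s}$ and the covariance $A^{s}$ via the normalization of Eq.~\ref{eq:normalize} through Eq.~\ref{eq:global_cov}). Let $\boldsymbol{w}^{*}=g(\boldsymbol{\sigma}^{*},R^{*})$ denote the value of this map at the population standard deviations and population correlation; this is exactly the CRITIC weight computed centrally on $X$. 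Theorem~\ref{them:unbiase_est_mean} already supplies $\boldsymbol{\sigma}^{s}\to\boldsymbol{\sigma}^{*}$ in probability, and the same random-sampling argument applied to the second moments and cross-products delivers $A^{s}\to A^{*}$ and hence $R^{s}\to R^{*}$ in probability. Provided $g$ is continuous at $(\boldsymbol{\sigma}^{*},R^{*})$, the continuous mapping theorem yields $\boldsymbol{\tilde{w}}^{s}\to\boldsymbol{w}^{*}$ in probability.

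To pass from per-round convergence to convergence of the average I would exploit uniform boundedness. Since every $\boldsymbol{\tilde{w}}^{s}$ lies in the probability simplex, $\|\boldsymbol{\tilde{w}}^{s}-\boldsymbol{w}^{*}\|\le\sqrt{2}$ deterministically, so convergence in probability upgrades to $L^{1}$ convergence, i.e.\ $\mathbb{E}\|\boldsymbol{\tilde{w}}^{s}-\boldsymbol{w}^{*}\|\to0$. The triangle inequality then gives $\mathbb{E}\|\boldsymbol{w}^{t}-\boldsymbol{w}^{*}\|\le\frac{1}{t-1}\sum_{s=1}^{t-1}\mathbb{E}\|\boldsymbol{\tilde{w}}^{s}-\boldsymbol{w}^{*}\|$, whose right-hand side is the Cesàro mean of a null sequence and therefore tends to $0$. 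A final application of Markov's inequality converts this $L^{1}$ bound into the advertised $\lim_{t\to+\infty}\mathcal{P}(|\boldsymbol{w}^{t}-\boldsymbol{w}^{*}|>\epsilon)=0$.

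The step I expect to be the genuine obstacle is the continuity of $g$ at the population point, that is, keeping the normalizing denominator $\sum_{k}\boldsymbol{c}_{k}$ in Eq.~\ref{eq_wj} bounded away from $0$ and the standard deviations $\boldsymbol{\sigma}^{*}_{j}$ strictly positive so that $R^{s}$ in Eq.~\ref{eq_R} is well defined. Both require a mild non-degeneracy assumption on the metrics (no metric is almost surely constant and the metrics are not perfectly collinear), under which $g$ is smooth on a neighborhood of $(\boldsymbol{\sigma}^{*},R^{*})$ and the continuous mapping theorem applies. A secondary technical point is justifying $R^{s}\to R^{*}$: this needs the min/max normalization of Eq.~\ref{eq:agg_compare} to settle to the population extrema, which follows because the aggregated extrema are monotone and bounded, together with the joint convergence of the first and second sample moments entering $A^{s}$.
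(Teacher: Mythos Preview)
Your proposal is correct and follows essentially the same approach as the paper: both unroll the recursive update in Eq.~\ref{eq_w_expectation} into an arithmetic (Ces\`aro) mean of the per-round weights $\boldsymbol{\tilde{w}}^{s}$, then argue that $\boldsymbol{\tilde{w}}^{s}\to\boldsymbol{w}^{*}$ via Theorem~\ref{them:unbiase_est_mean} and that the running average therefore damps the per-round fluctuations and converges. The paper's argument is in fact far more informal than yours---it simply asserts that $\boldsymbol{\tilde{w}}$ ``gradually approaches $\boldsymbol{w}^{*}$'' and that the recursive mean ``alleviates the impact of vibration''---so your use of the continuous mapping theorem, the simplex bound, and Markov's inequality supplies rigor that the paper omits, and your flagged non-degeneracy condition on the metrics is a genuine technical point the paper does not address.
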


\begin{proof}
Our proof starts with the case where $K=1$ or $\tau=1.0$, this theorem apparently holds since the Theorem~\ref{them:unbiase_est_mean} holds and then $\boldsymbol{w}^{t}=\boldsymbol{w}^{*}$;

When $K>1$ and $\tau \in (0,1)$, since $\boldsymbol{w}^*$ depends on $\mathbb{E}[X]$ and $\mathbb{D}[X]$, $\boldsymbol{\tilde{w}}$ depends on $\boldsymbol{\mu}^t$ and ${(\boldsymbol{\sigma}^2)}^t$, then $\boldsymbol{\tilde{w}}$ gradually approaches $\boldsymbol{w}^*$ according to Theorem 4.1, but there may be vibration behavior that affects its convergence. Therefore, Eq.~\ref{eq_w_expectation} is further used to obtain the global weight vector. 

Note that Eq.~\ref{eq_w_expectation} is actually the recursive form for computing the mean vector of the sample (deduced from Eq.~\ref{eq:recursive_mean}). This effectively alleviates the impact of vibration and divergence of the weight $\boldsymbol{w}^{t}$, which eventually converges to $\boldsymbol{w}^{*}$.
\end{proof}

\begin{equation}
\label{eq:recursive_mean}
\begin{aligned}
\boldsymbol{w}^t = &\frac{1}{t} \sum_{i=1}^t \boldsymbol{\tilde{w}}^i \\
\boldsymbol{w}^{t+1}=&\boldsymbol{w}^t+\frac{(\boldsymbol{\tilde{w}}-\boldsymbol{w}^t)}{t}\\
=&(1-\frac{1}{t})\boldsymbol{w}^{t} + \frac{1}{t}\boldsymbol{\tilde{w}}
\end{aligned}
\end{equation}

\subsection{Privacy Analysis}
To analyze the privacy protection ability, we first define a commonly used privacy threat model, where clients and the server honestly follow the secure protocols according to the promised procedures, but there exists an adversary who is willing to learn more to benefit itself.
According to \cite{paverd2014modelling}, the definition of adversary $\mathcal{A}$ in our privacy model is defined as follows:

\begin{definition}[Semi-honest Adversary]{
 $\mathcal{A}$ is a legitimate participant in a communication protocol who will not deviate from the defined protocol, but will attempt to learn all possible information from legitimately received messages. In FL, both the clients and the server can be the $\mathcal{A}$ with the following abilities: $\mathcal{A}$ can intercept communication channels and record transmitted messages; the cost of $\mathcal{A}$ to collide with or corrupt more than $m$ users is unaffordable; $\mathcal{A}$ cannot extract legitimate inputs or random seeds of other honest users; $\mathcal{A}$ has the polynomial-time computing power to launch attacks.
}\end{definition}

As mentioned, we adopt CRITIC-DM as the scoring model. The critical point is to find the descriptive statistics of the clients. If these statistics are uploaded directly in plaintext format, the central server has the chance to deduce the original data, such as the local extreme points. Given this definition, the realization of the vehicle's privacy requirements to the cloud server can be described as follows: Suppose that the coordination server is $\mathcal{A}$, and the driver ID and their trip ID can be shared to the coordination server, but the driving behavior of a specific driver/trip must not be inferred. 

To fulfill this requirement, the degree of privacy protection of our FedDriveScore solution is based on the following assumptions.

\begin{assum}
Assume that under the FedDriveScore framework proposed in Section~\ref{sec:sys_arch}, no participant can directly access the original data of other parties.
\end{assum}

\begin{assum}
The neutrality of the arbiter. The arbiter honestly complies with the FL protocols and cannot collude with coordination servers or edge vehicles to expose Paillier private keys.
\end{assum}

\begin{assum}
The anti-attack capability of the arbiter server. During the $T$ rounds of the federated training process, it is difficult for any participant to break through the arbiter during this period.
\end{assum}

\begin{assum}
\label{ass:dcra}
Decisional Composite Residuosity Assumption (DCRA), that is, given a composite number $n$ and an integer $r$, it is difficult to determine whether $r$ is an $n^{th}$ residual under $n^2$.
\end{assum}

Under Assumption~\ref{ass:dcra}, the following theorem is obtained according to \cite{paillier1999public}.
\begin{them}
Paillier encryption scheme meets semantic security, that is, ciphertext will not reveal any information of plaintext, and it is difficult to crack ciphertext in polynomial time.
\end{them}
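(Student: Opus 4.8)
The plan is to reduce the semantic security of the scheme --- formalized as indistinguishability of ciphertexts under a chosen-plaintext attack --- to the Decisional Composite Residuosity Assumption of Assumption~\ref{ass:dcra}, reproducing the argument of \cite{paillier1999public}. First I would fix the structure of the scheme: with public modulus $n=pq$ and a base $g$ whose multiplicative order is a multiple of $n$ (e.g. $g=n+1$, which has order $n$ since $(1+n)^{m}\equiv 1+mn \pmod{n^{2}}$), a plaintext $m\in\mathbb{Z}_{n}$ is encrypted as $c=g^{m}r^{n}\bmod n^{2}$ for $r$ drawn uniformly from $\mathbb{Z}_{n}^{*}$. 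The masking factor $r^{n}\bmod n^{2}$ is precisely a uniformly random $n$-th residue modulo $n^{2}$, so a ciphertext conceals $m$ exactly to the extent that $n$-th residues are indistinguishable from generic elements of $\mathbb{Z}_{n^{2}}^{*}$.

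The key lemma I would establish (or cite) is that $(m,r)\mapsto g^{m}r^{n}\bmod n^{2}$ is a bijection from $\mathbb{Z}_{n}\times\mathbb{Z}_{n}^{*}$ onto $\mathbb{Z}_{n^{2}}^{*}$, which follows from the order of $g$ together with $|\mathbb{Z}_{n^{2}}^{*}|=n\phi(n)$. Under this bijection the $n$-th residues are exactly the images with $m=0$, and a uniformly chosen element of $\mathbb{Z}_{n^{2}}^{*}$ is exactly an encryption of a uniformly random message. Consequently, multiplying any residuosity instance by $g^{m}$ is itself a bijection that converts ``random residue versus random element'' into ``encryption of $m$ versus random ciphertext,'' which is the bridge between the two hardness notions.

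With this bridge in place I would run the security direction by contraposition: assume a polynomial-time semi-honest adversary $\mathcal{A}$ (as in Definition~5.2) distinguishes encryptions of chosen plaintexts $m_{0},m_{1}$ with non-negligible advantage. A two-step hybrid inserts a uniformly random ciphertext between $\mathrm{Enc}(m_{0})$ and $\mathrm{Enc}(m_{1})$, so $\mathcal{A}$ must distinguish $\mathrm{Enc}(m_{b})$ from a uniform element for some $b$ with non-negligible advantage. I then build a residuosity distinguisher $\mathcal{D}$: given a challenge $z\in\mathbb{Z}_{n^{2}}^{*}$, it forms $c=g^{m_{b}}z\bmod n^{2}$ and feeds $c$ to $\mathcal{A}$; by the bijection, $c$ is a genuine encryption of $m_{b}$ when $z$ is an $n$-th residue and a uniform ciphertext otherwise, so $\mathcal{D}$ inherits $\mathcal{A}$'s advantage and contradicts Assumption~\ref{ass:dcra}. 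This yields the stated conclusion that the ciphertext leaks no plaintext information and cannot be broken in polynomial time.

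The hard part will be the exact distributional bookkeeping in the hybrid. I must verify that multiplying by a uniform $n$-th residue carries a valid encryption of $m$ to a uniformly distributed encryption of the same $m$, and that a uniform element of $\mathbb{Z}_{n^{2}}^{*}$ is distributed identically to an encryption of a uniform message --- both facts rest on the bijection above being measure-preserving. Equal care is needed to confirm that $\mathcal{D}$ runs in polynomial time and that the advantage lost to the two hybrid steps and to the guess of $b$ is only a constant factor, so that a non-negligible advantage for $\mathcal{A}$ translates into a non-negligible advantage for $\mathcal{D}$ against the residuosity problem.
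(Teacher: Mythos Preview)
Your proposal is a correct and faithful reconstruction of the standard reduction from the semantic security of Paillier to DCRA, but it goes well beyond what the paper itself does. The paper offers no proof of this theorem at all: it simply states that, under Assumption~\ref{ass:dcra}, the result is ``obtained according to \cite{paillier1999public}'' and moves on. In other words, the paper treats this as a cited fact from Paillier's original work rather than something to be argued in situ. Your sketch---the bijection $\mathbb{Z}_n\times\mathbb{Z}_n^{*}\to\mathbb{Z}_{n^2}^{*}$, the hybrid argument, and the construction of a residuosity distinguisher from an IND-CPA adversary---is precisely the content of that citation, so there is no mathematical disagreement; you are simply unpacking what the paper leaves as a black-box reference. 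If you are writing this up for the paper's purposes, a one-line appeal to \cite{paillier1999public} is all that is expected.
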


Based on the assumptions and theorems described above, the method proposed in this paper ensures the security and privacy of vehicle data in the following aspects:

1) Vehicle-side data are kept locally and cannot be accessed directly. According to Assumption 4.1, the data of each participant is stored locally and not sent to other participants. This allows the vehicle to have full control over its own data;

2) In the parameter update process, there is a risk of inference attacks in the vehicle-side local model update. The parameters transmitted by this method include statistical information based on local data, so the coordination server can perform association speculation by continuously accessing local statistical items (such as samples and sums). Therefore, this paper further adopts the homomorphic encryption mechanism and introduces a trusted arbiter to manage the keys. Under the FedDriveScore architecture, the coordinator knows which clients participated in each round of FL and what they returned. However, the returned local statistical data are encrypted and cannot be decrypted by the coordinator. 

3) The decryption process is carried out on the arbiter side, but it cannot know which clients have participated in the training, nor can it know the private data returned by itself. In short, the arbiter does not have enough knowledge to infer the private information of individual clients from the decrypted intermediate results. For a client, it knows its own return content and can receive decrypted global parameters from the arbiter. However, the global parameters are aggregated by the server, so the client has no way to know the identities of other participants.

In summary, the proposed method is safe enough. The coordination server can only obtain the scoring model parameters, the driver or trip ID along with its summarizing score of each vehicle.

\section{Experiments and Results}
The following sections test and validate our method with real fleet driving data and virtual UBI data, respectively. The experiment is carried out on a server with an Intel i7-11700 CPU, 32GB memory, and the operating system is Ubuntu 18.04. All experiments take the CRITIC-DM parameters and the scoring results of the $\mathcal{F}$ obtained by CL as targets. We then simulate FL experiments to mainly prove the training convergence of the proposed method and the utility consistency of our federated scoring model with respect to the targets. It is worth mentioning that because the experimental data set has no prior score labels and the scoring model constructed in this unsupervised way cannot be verified like the clustering method, this section validates the utility loss of the federated scoring models based on the goal in the problem definition section.

For comparison, we also implement a CRITIC-DM model training method based on federated averaging. The idea is to directly perform recursive averaging of local CRITIC-DM models to obtain a global scoring model. In this method, the vehicle directly calculates the local mean, standard deviation, and weight of the metrics and returns them to the coordination server. The coordination server obtains the parameters corresponding to the metrics estimated in the current round through FedAvg aggregation. Then, with recursive update similar to Eq.~\ref{eq_w_expectation} to obtain the global mean, standard deviation, and weight of the metrics for the next round of training. The client selection rate of this method is fixed at 0.5, and the global scoring model trained in the same rounds as the proposed method is used for scoring.

\subsection{Fleet Driving Data}
In this section, a real fleet driving data~\cite{lu2021bi} is used to validate the practical effectiveness of our method for fleet management purposes. This data set collects driving trip data from the Telematics devices of 53 trucks in a logistics fleet. For each truck, its CAN-bus signals can be collected through the on-board Telematics Box (T-Box) and sent to the cloud server through the IoV communication channel.

With these data, the fleet manager hopes to evaluate and rank the driving performance of the hired drivers and use it as an objective measurement to take the corresponding rewards and punishments for excellent or poor drivers. However, most logistics fleets do not have the ability to independently construct a driver scoring model and need to rely on the data from the OEM and the strength of the algorithm service provider. If the traditional CL method is used to evaluate driving performance, it will involve the risk of leakage of vehicle sensor data and fleet operation information. Therefore, our FedDriveScore solution can effectively promote the collaboration of all parties to jointly build a scoring model.

Table~\ref{tab:truck_info} lists the information related to these data. Data collection time was January 1, 2019 to March 31, 2019, and the collection frequency was 1 Hz. A total of 32,135 trips are included, where the minimum, average and maximum trips per driver are 147, 606, and 995 trips, respectively. According to the fleet manager, after the delivery task was assigned, the assigned trucks were escorted between the surrounding cities. It should be noted that drivers are not given any instructions about the route they have to take or the speed or behavior they have to follow while driving, other than their destination. Therefore, the driver is free to drive in a completely uncontrolled open driving environment.

\begin{table}[htpb]\small
\caption{Truck configuration and data summary}
\label{tab:truck_info}
\begin{tabularx}{0.48\textwidth}{>{\setlength{\hsize}{0.4\hsize}}X|>{\setlength{\hsize}{0.6\hsize}}X}
\hline
\multicolumn{2}{c}{\textbf{Truck Configuration Information}}                        \\ \hline
truck brand                                       & Dongfeng Tianlong               \\
maximum speed                                     & 110$km/h$                       \\
maximum torque                                    & 2,000$N\cdot m$                 \\
maximum speed                                     & 4,000$rpm$                      \\
towing weight                                     & 40$tons$                        \\
idle range                                        & (0,700)$rpm$                    \\ \hline
\multicolumn{2}{c}{\textbf{Driving Trip Summary}}                                                              \\ \hline
number of vehicles                                & 53                                                         \\
road type                                         & Mainly intercity expressways with a speed limit of 90 km/h \\
\#trips                                           & 32,135                                                     \\
\#trips per vehicle                               & 606                                                        \\
shortest travel distance                          & 3km                                                        \\ \hline
\multicolumn{2}{c}{\textbf{Selected signals and their purpose}}                                                \\ \hline
engine torque ($N\cdot m$), speed ($rpm$)           & Detect ignition status or idle status                      \\
vehicle speed ($km/h$), brake pedal opening (\%) & Detect acceleration and braking events                     \\
bearing ($deg$)                                     & Detect turn events                                         \\
mileage ($km$)                                      & Calculate travel distance                                  \\
instantaneous fuel consumption ($mL$)               & Cumulative trip fuel consumption                           \\ \hline
\end{tabularx}
\end{table}

Since there are very few records containing missing values in the data table, such records are directly discarded in the experiment. In this experiment, two consecutive records are used to calculate the acceleration. First, the points whose absolute acceleration value is greater than 20 $km/h/s$ are outliers and will be ignored. Then, when the speed change rate reaches more than 3 $km/h/s$, events of rapid acceleration and rapid deceleration will occur. The effective bearing angle is between 0 and 360 degrees. When the vehicle speed exceeds 40 $km/h$ and the continuous bearing angle change rate is greater than 50 $deg/s$, it is considered a sharp turn event. The $99^{th}$ percentile of the instantaneous fuel consumption reading is used to filter the noisy fuel consumption reading. Also, the mileage signal has no outliers. The data set is taken from different routes, but with a larger proportion of highways to save time and avoid entering the city center. The data set is first pre-processed to handle outliers and missing records, and then split into trips by detecting if the engine is down for more than 30 minutes. Furthermore, data with consecutive missing values of more than 1 minute in the time series are also considered to belong to two different driving trips. Then, trips with a distance of less than 3 $km$ are discarded.

\subsubsection{Evaluation Metrics}
Under the assumption of FL, driving data are still kept in these 53 vehicles. For each driving trip, it is first necessary to define relevant evaluation metrics. Table~\ref{tab:truck_metrics} lists these evaluation metrics and their desired expectation types. Notice that the first 3 metrics focus on the aggressive driving behavior of the driver, which can be explained for good or bad performance. For example, sudden braking can be a signal that the driver is not paying attention or that a rear-end collision is likely. To obtain these metrics, the vehicle first uses predefined rules to detect these driving events in real time. The number of occurrences of these driving events is then recorded and divided by the distance traveled to obtain the frequency of the event per km. This normalization ensures that all trips are compared on the same scale.

The remaining 3 metrics in the table above assess a driver's skill at how to drive economically and efficiently. The idle ratio is the total idle time divided by the total run time for the trip. A high idle ratio will cause more fuel consumption and exhaust emissions, so this is a negative indicator that needs to be minimized. For the "AvgRPM" metric, since the most economical speed range is neither too high nor too low, it is regarded as an oscillatory indicator. This metric indicates that the greater the deviation from the average RPM or economy RPM, the worse the performance. "AvgSpd" is the metric to maximize. The higher the value, the longer it takes to reach the destination, which means a more convenient experience.

\begin{table}[]\small
\caption{Defined evaluation metrics for truck driving data}
\label{tab:truck_metrics}
\centering
\begin{tabularx}{0.49\textwidth}{>{\setlength{\hsize}{0.8\hsize}}X|>{\setlength{\hsize}{0.2\hsize}}X}
\hline
\multicolumn{1}{c|}{\textbf{Driving trip evaluation metrics}} & \multicolumn{1}{c}{\textbf{Expectation type}} \\ \hline
Harsh Accelerations per Kilometer (\#HarshAcc)              & negative                                                  \\
Hard braking times per kilometer (\#HarshDec)               & negative                                                  \\
Number of sharp turns per kilometer (\#HarshTurn)           & negative                                                  \\
Parking idle ratio (IdleRatio)                              & negative                                                  \\
Average Speed (AvgSpd)                                      & positive                                                  \\
Average Engine Speed (AvgRPM)                               & oscillator                                     \\ \hline
\end{tabularx}
\end{table}

Next, secure histogram plotting is performed to observe the distributions of evaluation metrics. The final plots are shown in Fig.~\ref{hosf_fleet}, the first 3 metrics are all exponentially distributed. On the contrary, the histograms for the remaining metrics show normal distributions. Based on the above observations, we can determine the distribution type of each metric. For comparison, Fig.~\ref{fig:client_fleet_hosf} shows the distribution of evaluation metrics in the local trip data of a certain truck. Although the distribution types of the vehicle's local indicator data are similar, the statistical items show that the local distribution and the overall distribution are inconsistent, that is, the non-identical distribution.

\begin{figure}[!htbp]
\centering
\subfigure[] {
      \includegraphics*[width=1.6in]{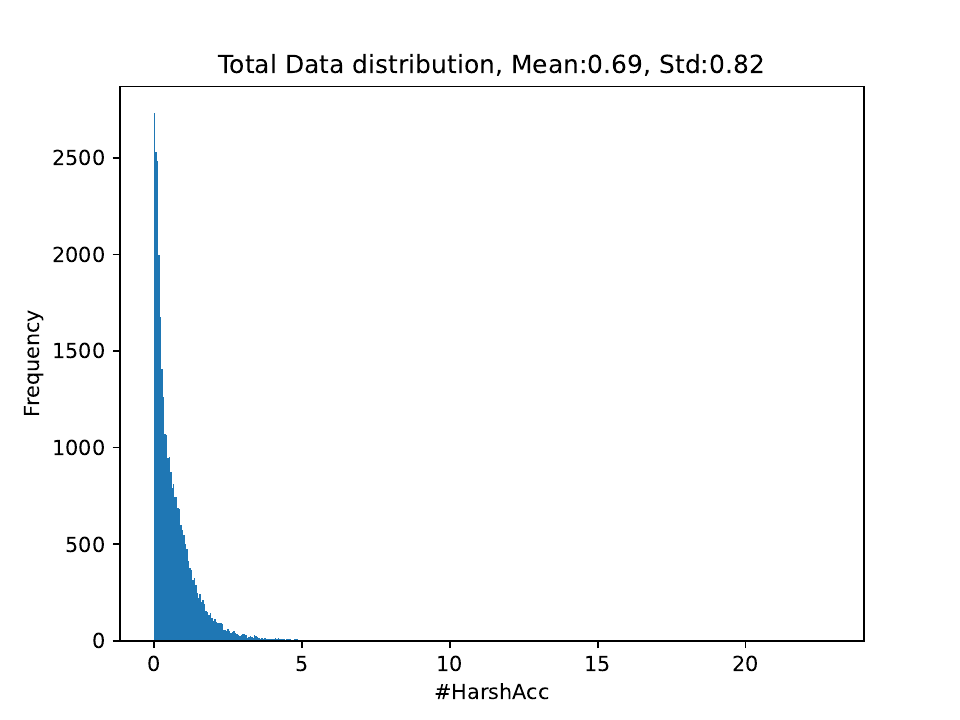}
  }
\subfigure[] {
      \includegraphics*[width=1.6in]{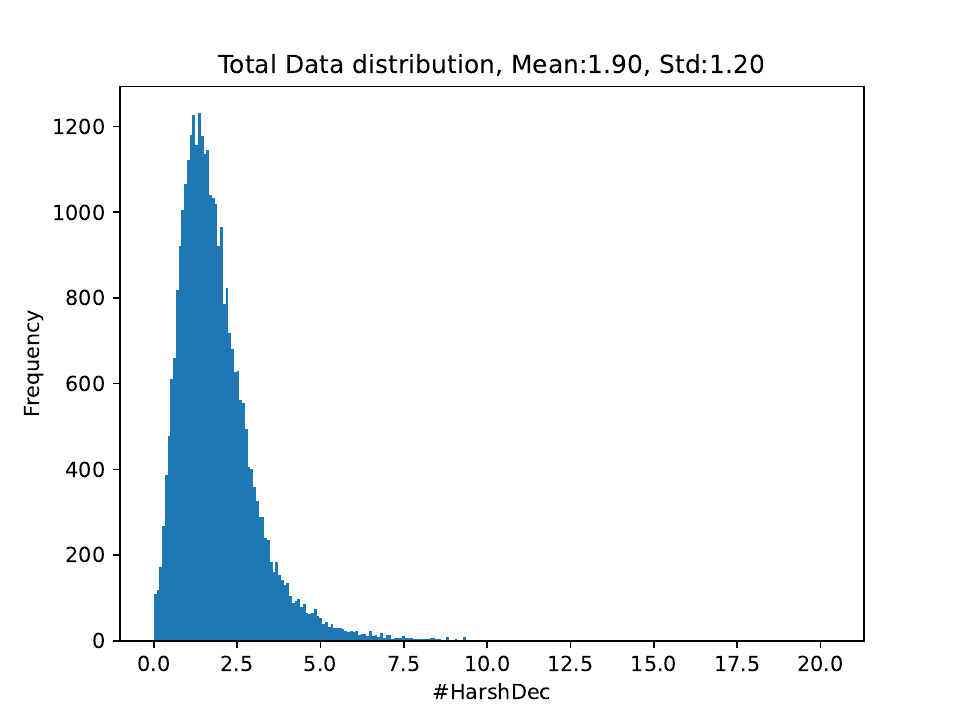}
  }
\subfigure[] {
      \includegraphics*[width=1.6in]{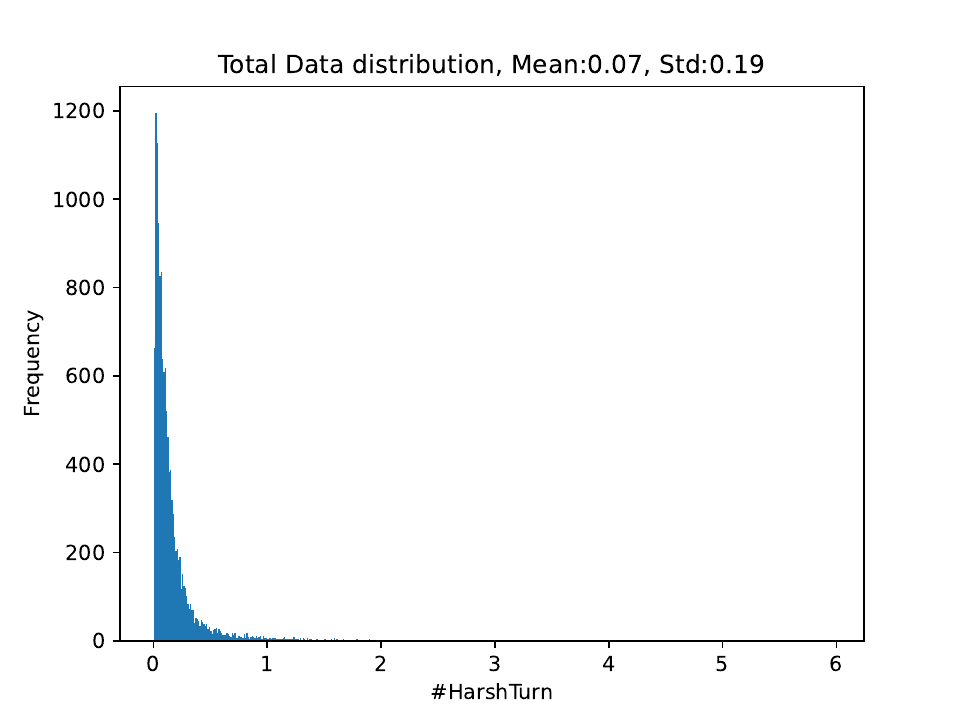}
}
\subfigure[] {
      \includegraphics*[width=1.6in]{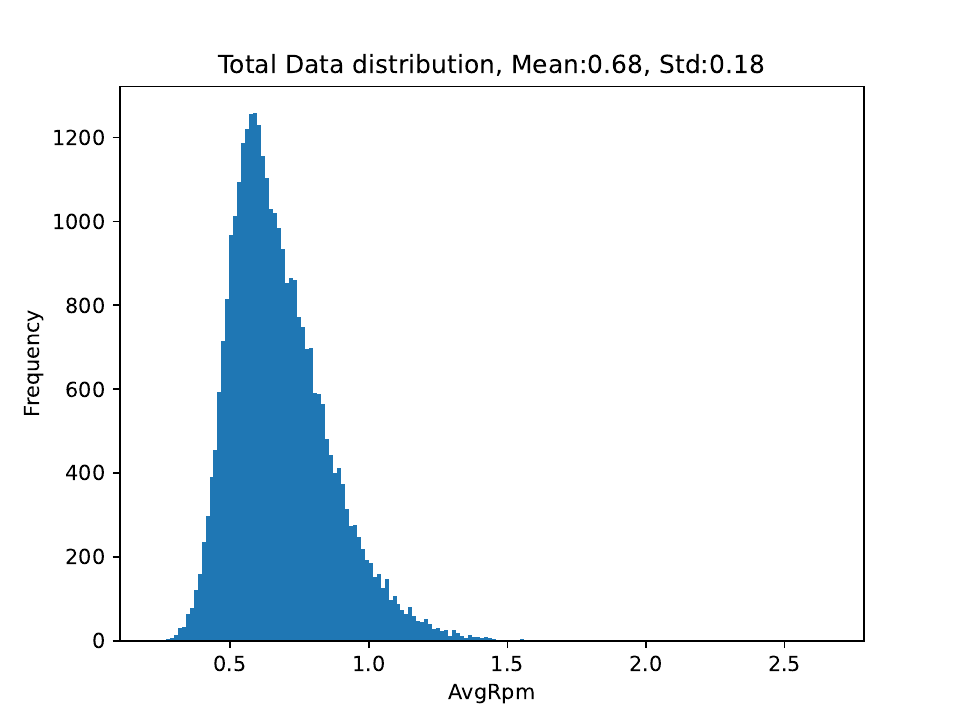}
  }
\subfigure[] {
      \includegraphics*[width=1.6in]{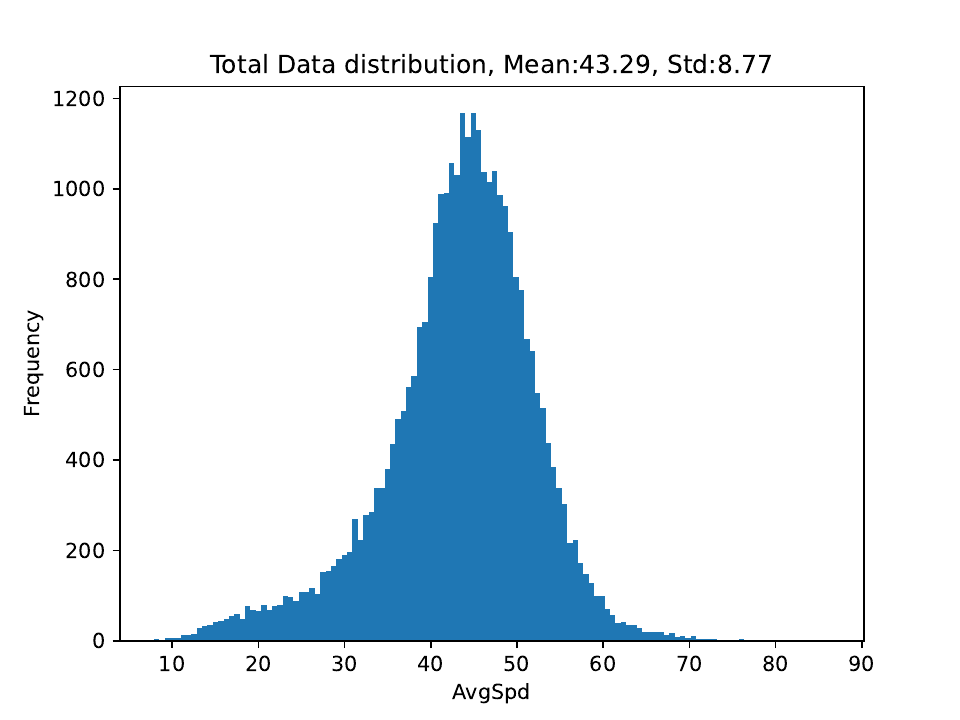}
  }
\subfigure[] {
      \includegraphics*[width=1.6in]{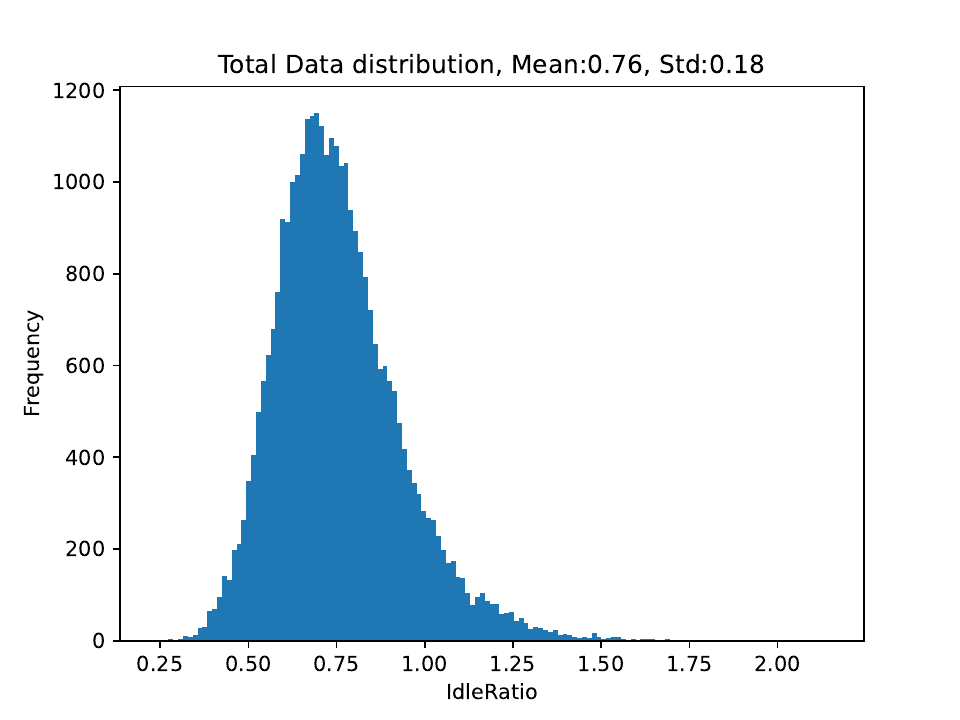}
  }
\caption{The overall distributions of metrics in the fleet driving data.}
\label{hosf_fleet}
\end{figure}

\begin{figure}[!htbp]
\centering
\subfigure[] {
      \includegraphics*[width=1.6in]{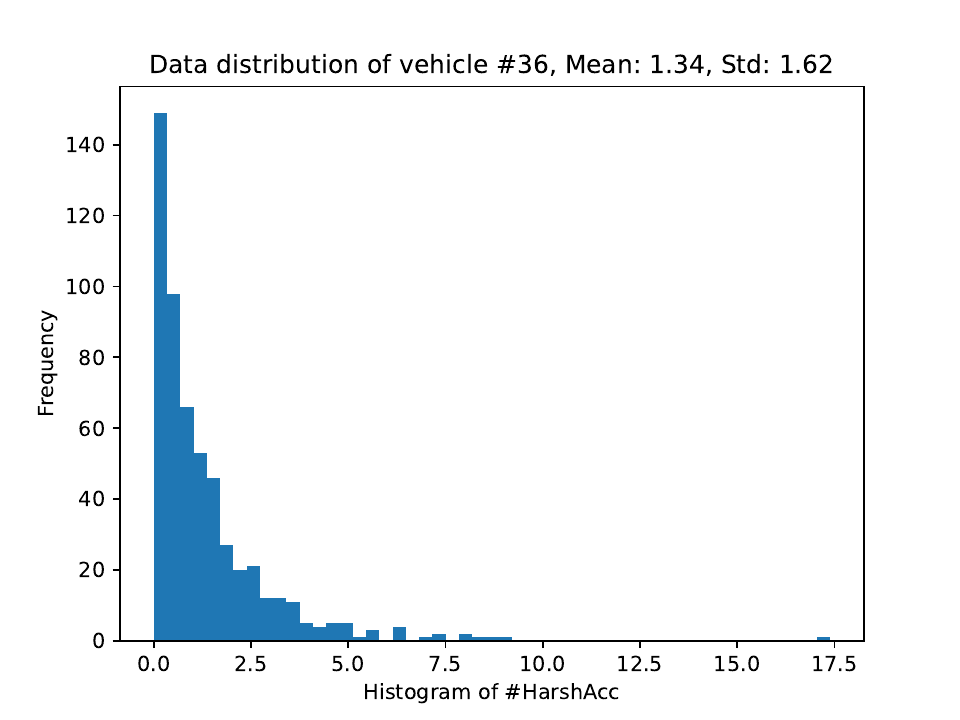}
  }
\subfigure[] {
      \includegraphics*[width=1.6in]{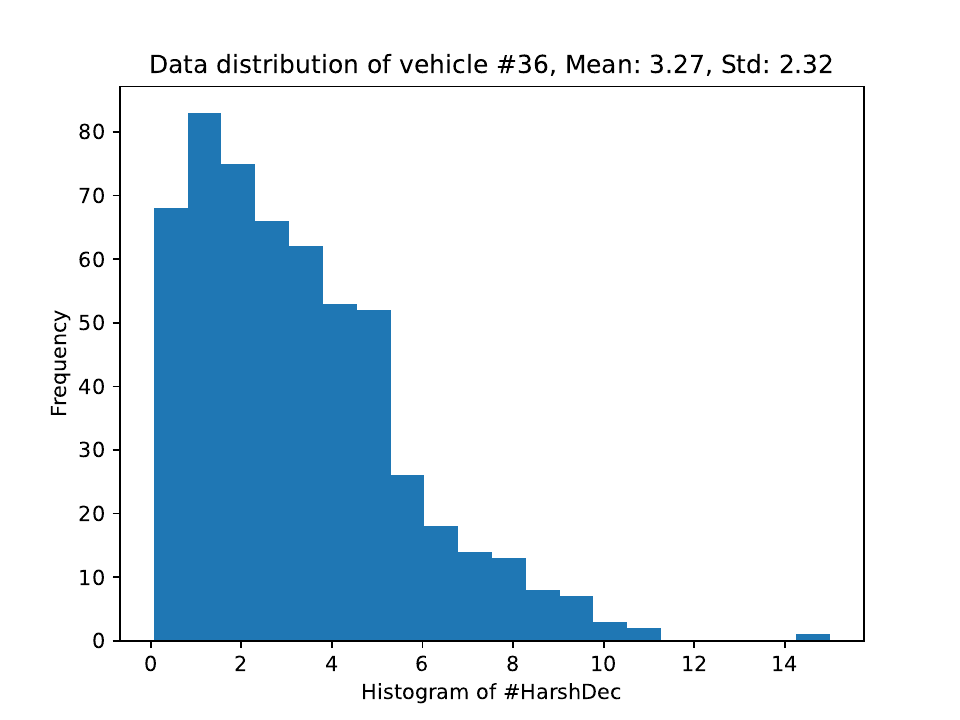}
  }
\subfigure[] {
      \includegraphics*[width=1.6in]{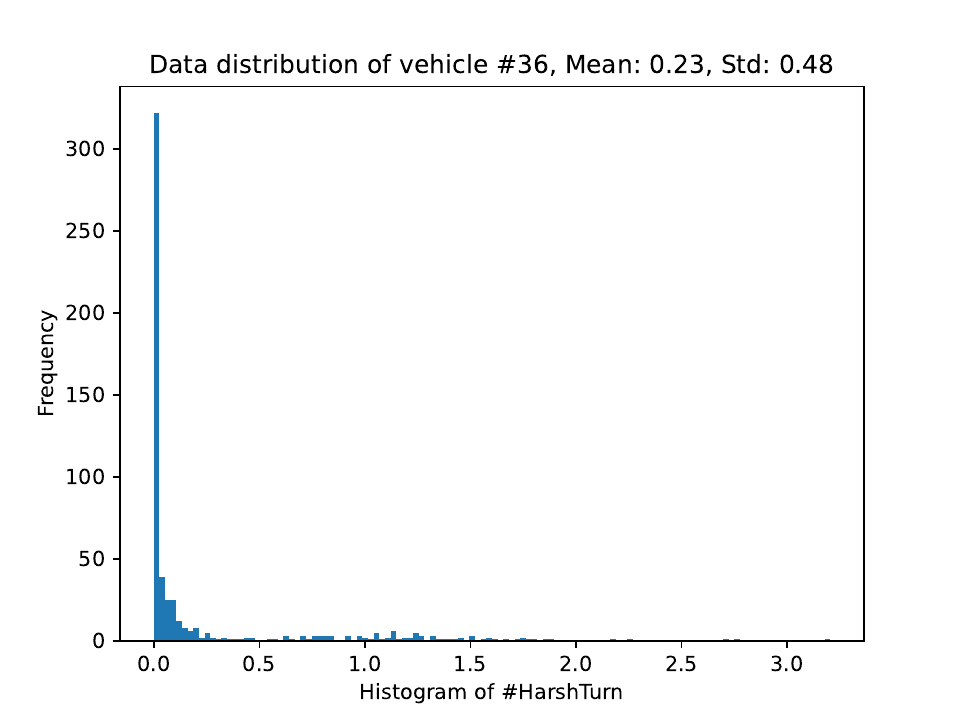}
}
\subfigure[] {
      \includegraphics*[width=1.6in]{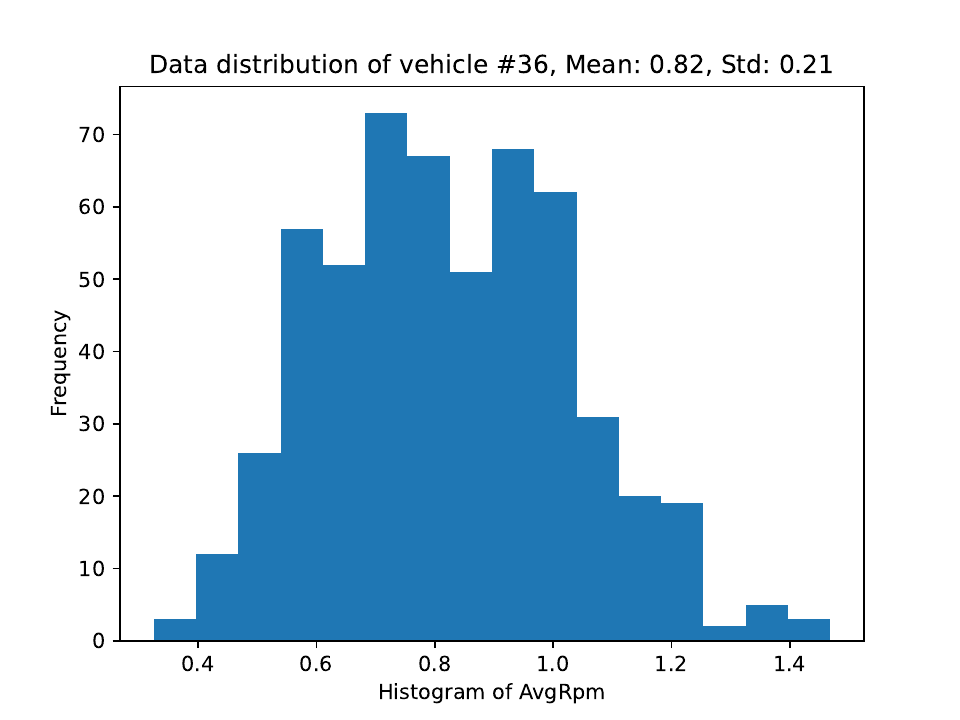}
  }
\subfigure[] {
      \includegraphics*[width=1.6in]{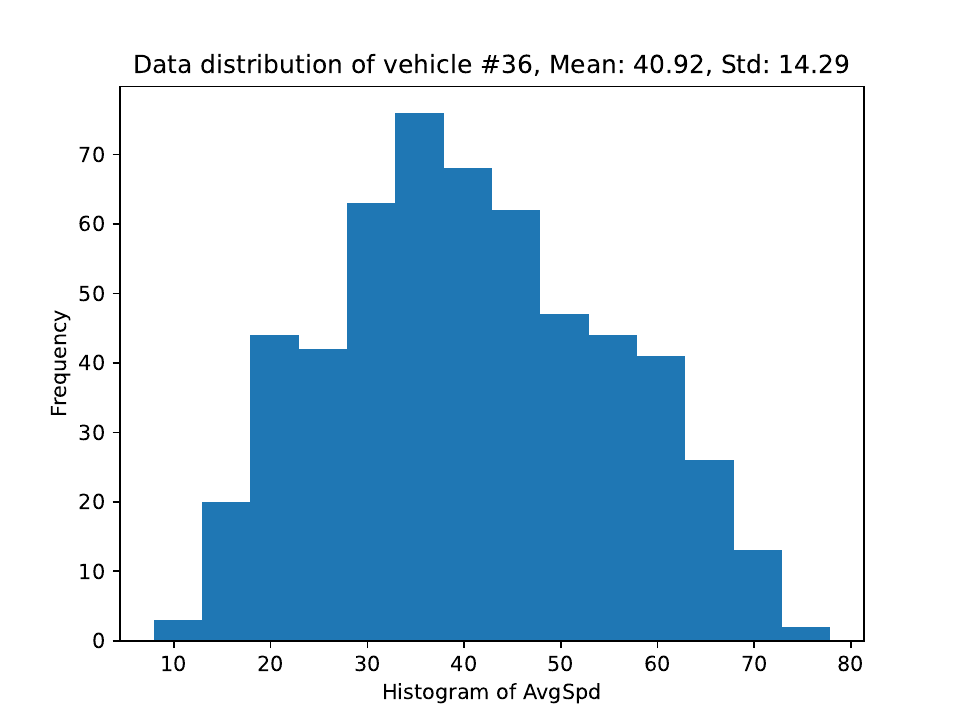}
  }
\subfigure[] {
      \includegraphics*[width=1.6in]{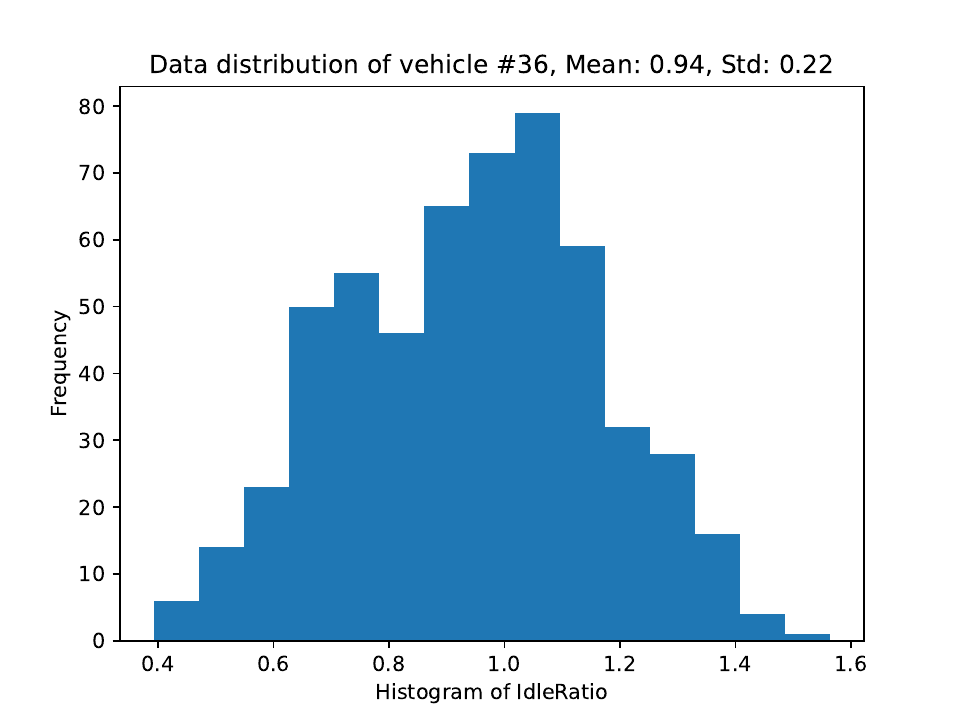}
  }
\caption{A client-level histograms of metrics in the fleet driving data.}
\label{fig:client_fleet_hosf}
\end{figure}

\subsubsection{Convergence Validation}
In the model training phase, we set $T$ to 300 and study the changing trend of estimated statistics and weights in each round when the client selection rate is 0.1 and 0.5.

Under the above two client selection rates, the convergence trend of metrics' means and variances is shown in Fig.~\ref{fig:fleet_mean_var}. Among the four subfigures, the dotted line represents the values vs. rounds estimated by CF4CRITIC-DM, and the solid line of the corresponding color is the target line given by CRITIC-DM. The metric mean vector on this data set actually includes 6 metrics, but there are only 4 lines that are clearly visible due to some lines overlapping. It can be observed that as the number of rounds increases, all dotted lines gradually converge and move closer to their target lines after experiencing short-term fluctuations, no matter when the selection rate is 0.1 or 0.5.

\begin{figure*}[!htbp]
\centering
\subfigure[$\tau=0.1$] {
\label{fig:fleet_mean_vara}
\includegraphics*[width=1.6in]{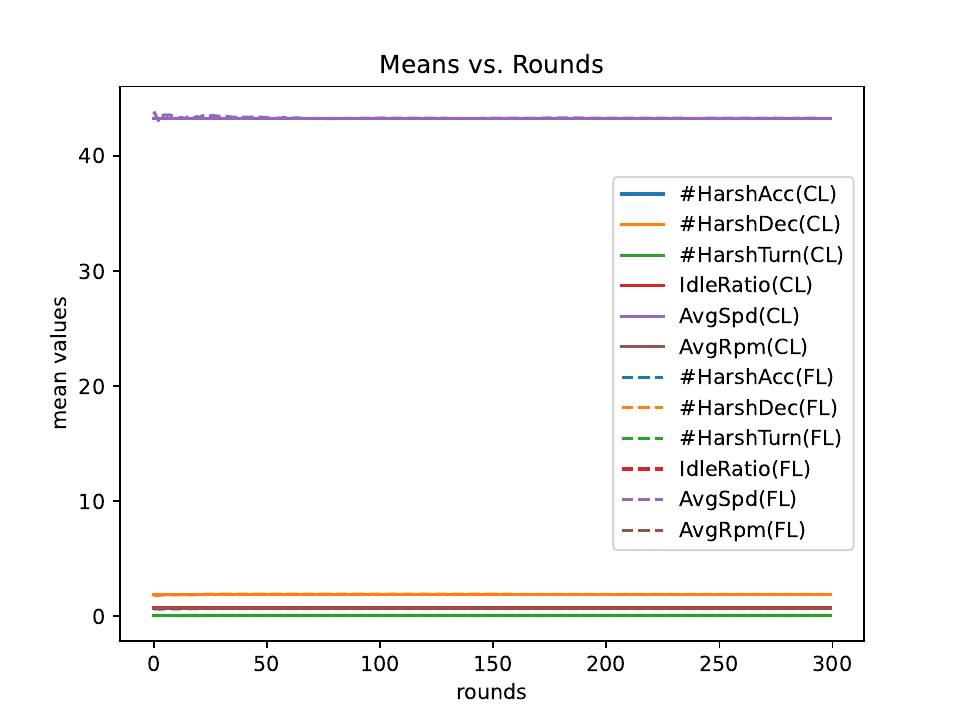}
}
\subfigure[$\tau=0.5$] {
\label{fig:fleet_mean_varb}
      \includegraphics*[width=1.6in]{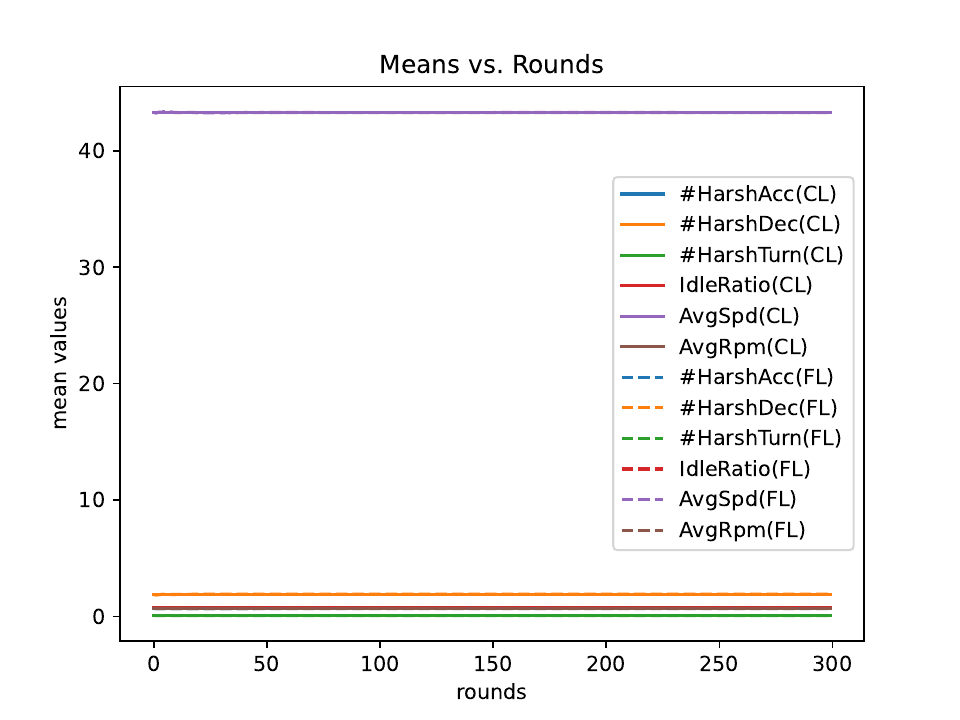}
  }
\subfigure[$\tau=0.1$] {
\label{fig:fleet_mean_varc}
      \includegraphics*[width=1.6in]{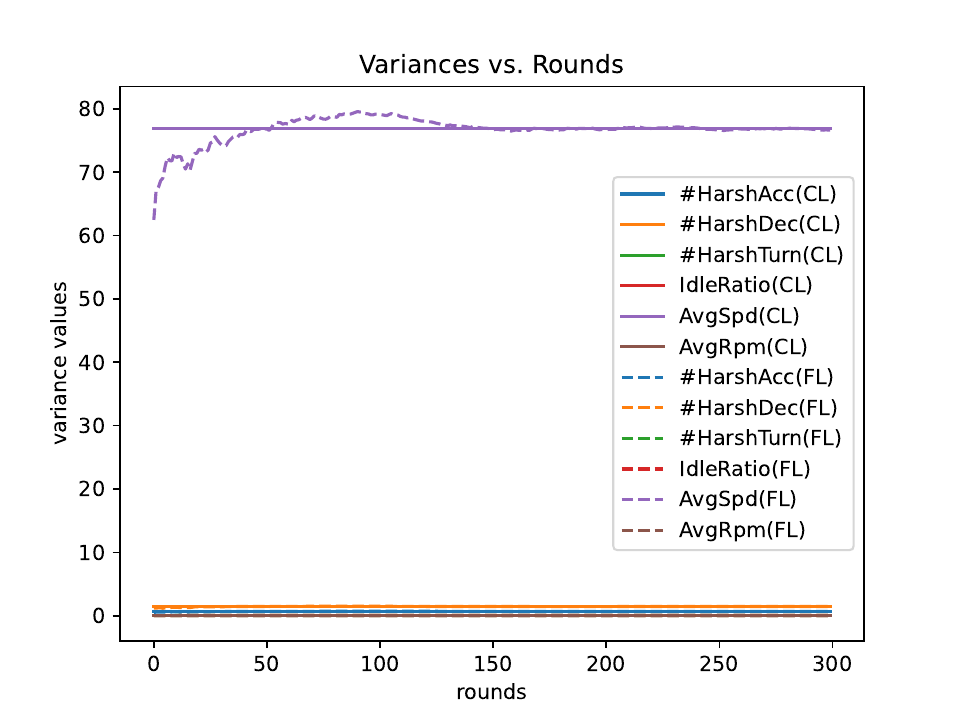}
}
\subfigure[$\tau=0.5$] {
\label{fig:fleet_mean_vard}
      \includegraphics*[width=1.6in]{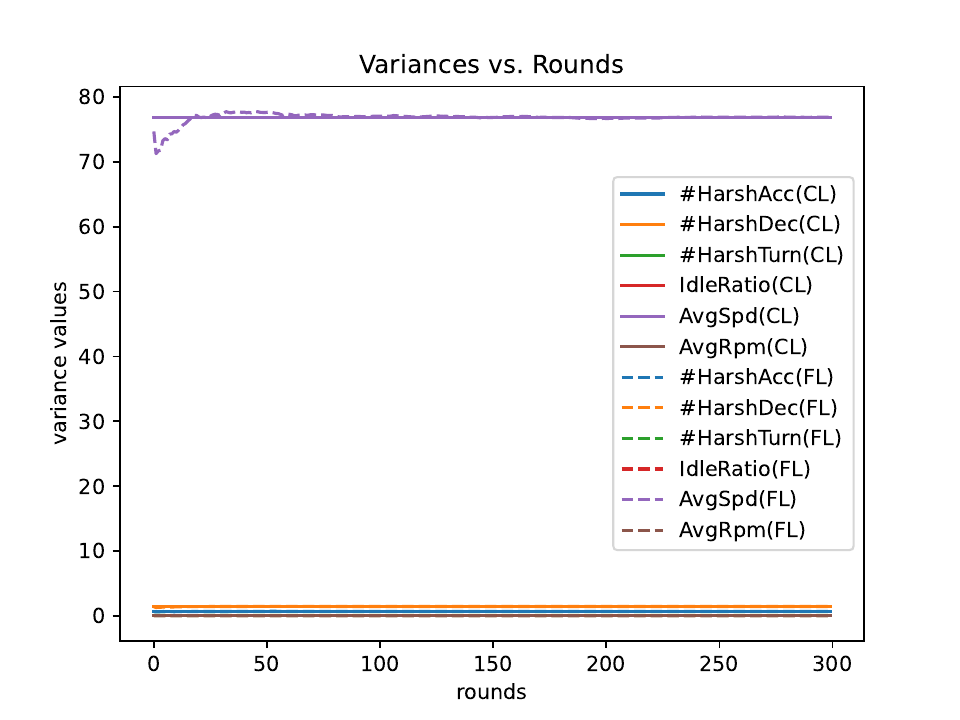}
  }
\caption{The estimated mean and variance of the metrics in the fleet driving data.}
\label{fig:fleet_mean_var}
\end{figure*}

Next, we analyze the convergence of the global metric weights estimated by our method. Fig.~\ref{fig:fleet_weights} plots the estimated weights of the 6 metrics that change with respect to the training rounds. It can also be observed from the dotted lines in these two figures that the estimated weights oscillate greatly at the beginning and deviate from their target weights. However, as the number of training rounds increases, they will eventually converge to their expected levels.

\begin{figure}[!htbp]
\centering
\subfigure[$\tau=0.1$] {
      \includegraphics*[width=1.6in]{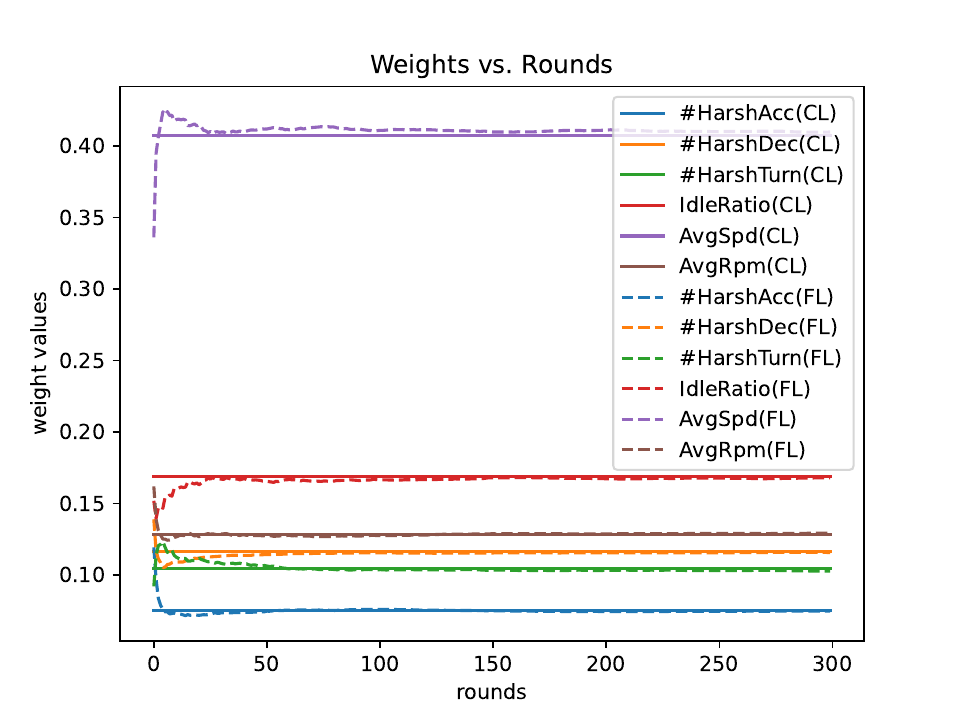}
}
\subfigure[$\tau=0.5$] {
      \includegraphics*[width=1.6in]{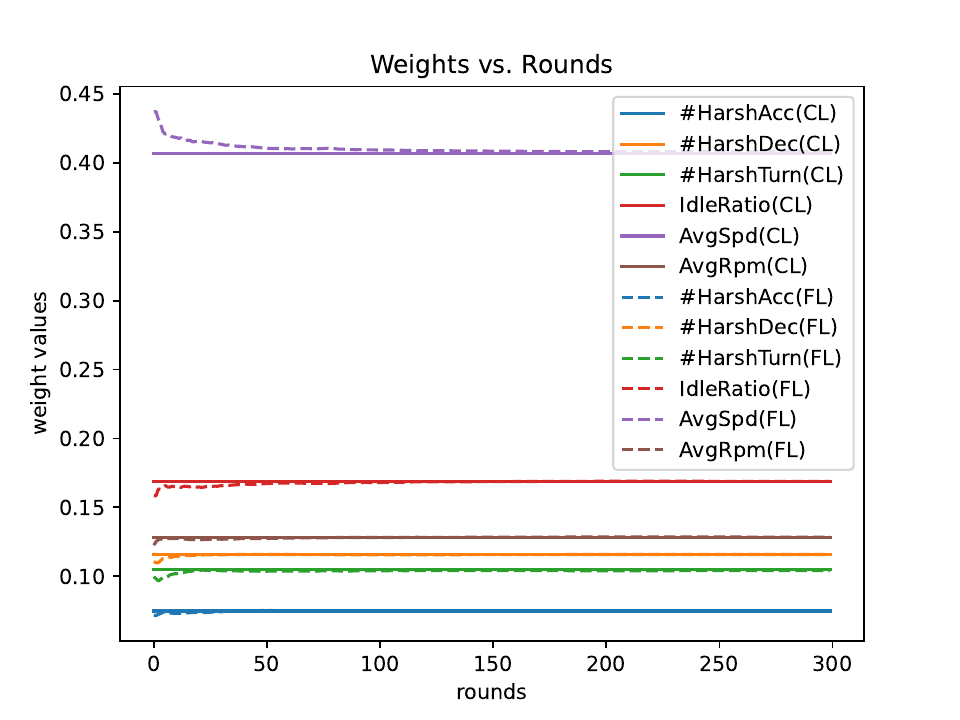}
}
\caption{The estimated weights of metrics in the fleet driving data.}
\label{fig:fleet_weights}
\end{figure}

In addition, from these three experimental results, we can also see the impact of $\tau$ on the convergence speed: When it is 0.5, the weights of the federated model converge faster compared to when it is 0.1, and so are the estimated metric statistics.

\subsubsection{Utility Consistency Validation}
After finishing the training process, two federated scoring models (denoted $\hat{\mathcal{F}}_{|\tau=0.1}$ and $\hat{\mathcal{F}}_{|\tau=0.5}$) could be obtained. We then analyze the utility consistency of both models compared to the CL scoring model (denoted $\mathcal{F}$) in this section.

First, in terms of the model parameters, the CRITIC weight vector in $\mathcal{F}$ and those in $\hat{\mathcal{F}}_{|\tau=0.1}$ and $\hat{\mathcal{F}}_{|\tau=0.5}$ are listed in Table~:

\begin{table*}[]\small
\caption{Comparison of metric weight from different scoring models on fleet driving data}
\label{tab:fleet_metric_weight}
\centering
\begin{tabular}{l|c|c|c|c|c|c}
\hline
Models & $\#HarshAcc$ & $\#HarshDec$ & $\#HarshTurn$ & IdleRatio & AvgSpd &AvgRPM \\ \hline
$\mathcal{F}$&0.07478049 & 0.11605587 &0.10467963 &0.16904160 & 0.40712076 &0.12832165 \\
$\hat{\mathcal{F}}_{|\tau=0.1}$ & 0.07468682 & 0.11570110 & 0.10269152&0.16784807& 0.40987586&0.12919664 \\
$\hat{\mathcal{F}}_{|\tau=0.5}$& 0.07481089 &0.11577172 &0.10409142&0.16880511&0.40715536&0.12836550 \\ \hline
\end{tabular}
\end{table*}

By intuitive comparison, we can see that the metric weights of the 3 models are very close. Especially when the client selection rate is 0.5, the weights of $\hat{\mathcal{F}}_{|\tau=0.5}$ are confident enough to represent the weights of each metric in $\mathcal{F}$.

\begin{table*}[htbp]\small
\caption{Pearson coorelation coefficients between metrics of fleet driving data}
\label{tab:pearson_fleet_metrics}
\centering
\begin{tabular}{|c|c|c|c|c|c|c|}
\hline
\textbf{} & HarshAcc       & HarshDec       & Harsh Turn & IdleRatio      & AvgSpd & AvgRPM         \\ \hline
HarshAcc  & 1.000          & \textbf{0.773} & 0.037      & 0.604          & -0.166 & \textbf{0.809} \\ \hline
HarshDec  & \textbf{0.773} & 1.000          & 0.043      & \textbf{0.811} & -0.187 & \textbf{0.794} \\ \hline
HarshTurn & 0.037          & 0.043          & 1.000      & 0.074          & 0.057  & 0.069          \\ \hline
IdleRatio & 0.604          & \textbf{0.811} & 0.074      & 1.000          & 0.134  & \textbf{0.829} \\ \hline
AvgSpd    & -0.166         & -0.187         & 0.057      & 0.134          & 1.000  & 0.014          \\ \hline
AvgRPM    & \textbf{0.809} & \textbf{0.794} & 0.069      & \textbf{0.829} & 0.014  & 1.000          \\ \hline
\end{tabular}
\end{table*}

Furthermore, the numerical values in the weight vector show that "AvgSpd" and "IdleRatio" are given greater importance, which is related to convenient and economical driving. On the contrary, "$\#$HarshAcc" is given the lowest weight. As can be observed from the matrix of metric correlation coefficients calculated in Table~\ref{tab:pearson_fleet_metrics}, "$\#$HarshAcc" has a high direct correlation with "$\#$HarshDec" and "AvgRPM", but "AvgSpd" has a low correlation with other metrics. This shows the better weighting strategy of the CRITIC method by considering inter-correlation of different metrics.

Next, we use our federated models to score the data separately and observe their score distributions. The distributions are then compared with the one given by $\mathcal{F}$. Fig.~\ref{fig:fleet_score_compare} shows the histograms of the scores scored by the CL and FL models, respectively. It can be seen that all of these subgraphs exhibit a normal distribution trend, that is, most trip scores are concentrated in the middle area and trips with higher or lower scores are in the minority. This kind of distribution is beneficial to the fleet manager's perception of the overall level of the driver group. In addition, the score distributions plotted in Fig.~\ref{fig:fleet_score_compareb} and Fig.~\ref{fig:fleet_score_comparec} are with very slightly average score difference, but the same stand deviation as in Fig.~\ref{fig:fleet_score_comparea}.

\begin{figure}[!htbp]
\centering
\subfigure[$\mathcal{F}$] {
\includegraphics*[width=1.0in]{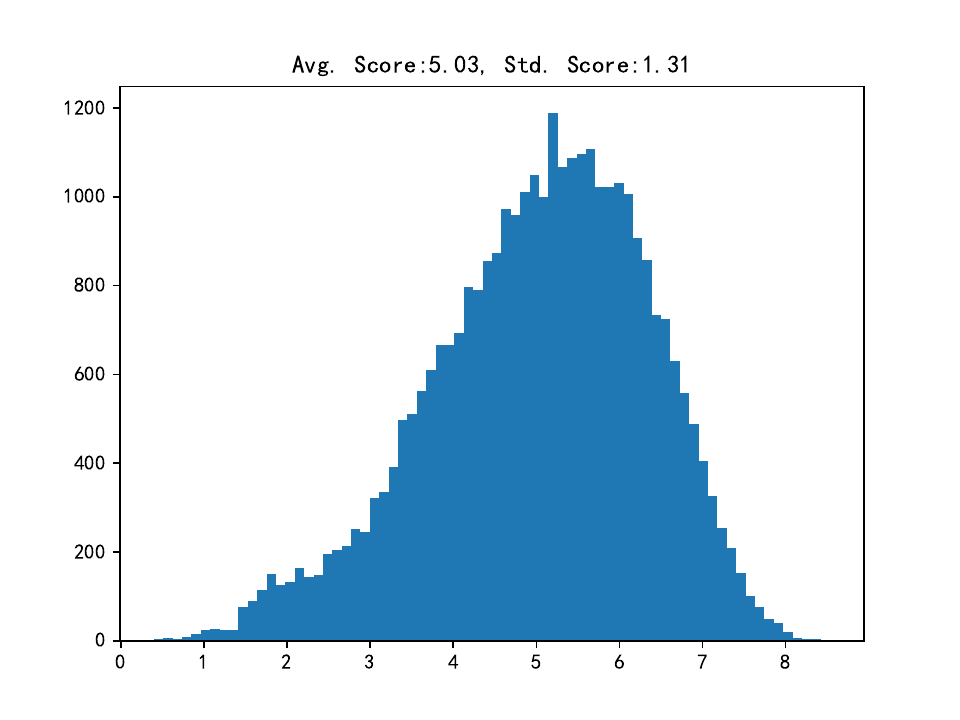}\label{fig:fleet_score_comparea}
}
\subfigure[$\hat{\mathcal{F}}_{|\tau=0.1}$] {
\includegraphics*[width=1.0in]{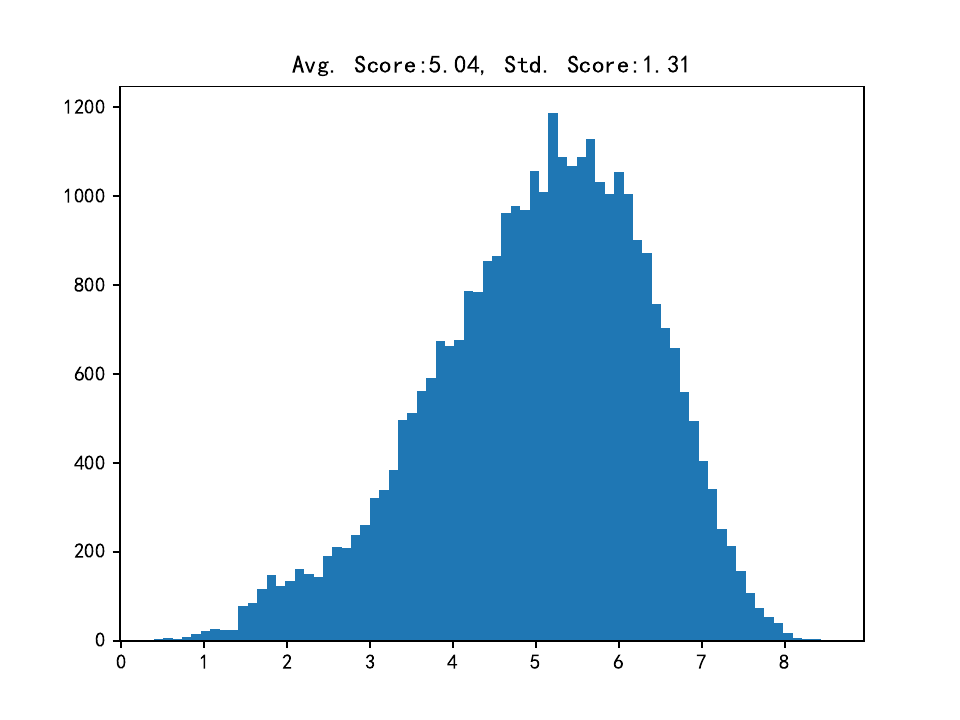}\label{fig:fleet_score_compareb}
  }
\subfigure[$\hat{\mathcal{F}}_{|\tau=0.5}$] {
\includegraphics*[width=1.0in]{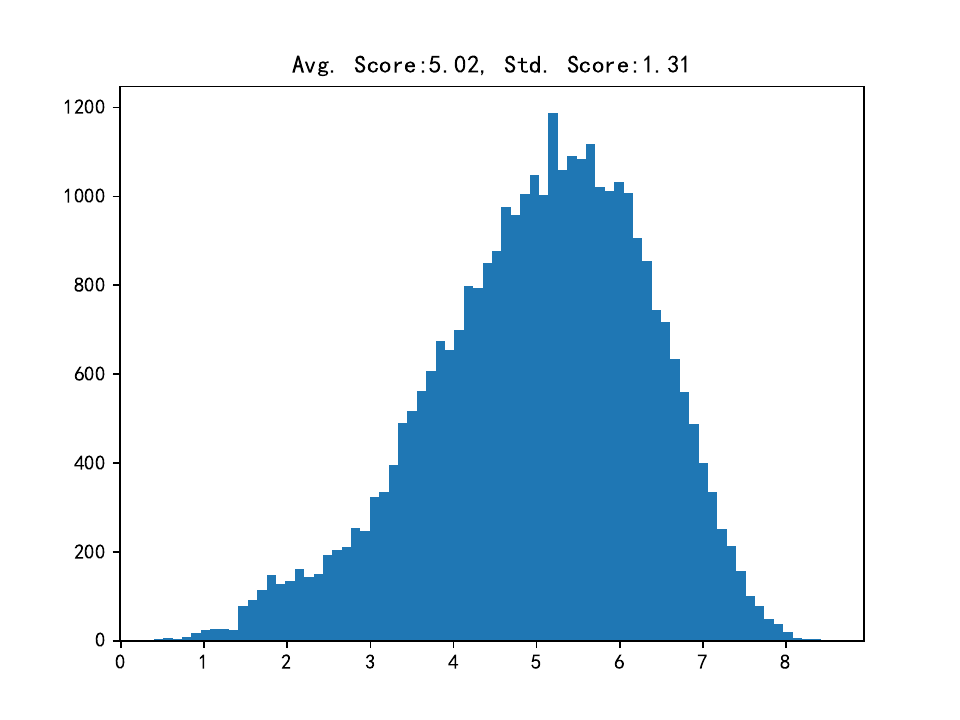}\label{fig:fleet_score_comparec}
}
\caption{Comparison of trip score distributions on fleet driving data.}\label{fig:fleet_score_compare}
\end{figure}

Furthermore, with the scores given by $\mathcal{F}$ as target labels, we use the common regression evaluation indexes, namely MSE, MAE, RMSE, and $R^2$-score, to verify the effect of our federated scoring models. Table~\ref{tab:fleet_score_val} lists these index values. In this table,  the FedAvg method is considered for comparison. We denote it as $\text{FedAvg}_{|\tau=0.5}$, which is trained with $\tau=0.5$ and the same number of training rounds as ours.

Our purpose is to show that the performance of FedAvg is affected by the skewness of the metric distribution. As expected, there is a large utility loss between $\text{FedAvg}_{|\tau=0.5}$ and $\mathcal{F}$. On the contrary, the difference in the scores of our two federated models is very small. The utility of our models is almost the same as $\mathcal{F}$. 
The above results show that once the training process converges, it is quite possible to obtain a lossless FL scoring model through the proposed CF4CRITIC-DM method.

\begin{table}[htbp]\small
\caption{Model score validation indexes on fleet driving trips}
\label{tab:fleet_score_val}
\centering
\begin{tabular}{c|l|l|l|l}
\hline
\textbf{} & $\mathcal{F}$ & $\hat{\mathcal{F}}_{|\tau=0.1}$ & $\hat{\mathcal{F}}_{|\tau=0.5}$ & $\text{FedAvg}_{|\tau=0.5}$ \\ \hline
MSE       & 0                      & 0.0001                & 0.0000   & 0.1322             \\ 
MAE       & 0                      & 0.0088                & 0.0011  & 0.3005              \\ 
RMSE      & 0                      & 0.0066            & 0.0035 & 0.3635          \\ 
$R^2$-score        & 1.0                    & 0.9999            & 1.0000    & 0.9419       \\ \hline
\end{tabular}
\end{table}

\subsection{Virtual UBI Data}
In addition to the fleet management application, evaluating driving performance is also critical in the development of the UBI application. Insurance companies can formulate personalized rate adjustment strategies based on the driving performance scores of different drivers. Therefore, this section selects the virtual UBI score data used by \citet{handel2014insurance} and \citet{lopez2018genetic} for further testing and verification purposes.

The data consist of 200 virtual driving trips, each of which includes 8 driving events, and the generated values can span a wider range of driving scores. In particular, \cite{lopez2018genetic} also invited 50 different subjects to rate these trips individually to assess the risk of driving. The records of each trip were scored by three subjects and then the average value was taken as the final subjective scoring result.

Table~\ref{tab:one_ubi_trip} lists an example of a given driving trip. In this table, the first 2 (distance and average speed) are each normalized to a 1-10 range from shortest/slowest to longest/fastest travel. The following six features represent the occurrence times of different risky driving maneuvers. Those who participated in the scoring were asked to assign an integer number from 1 to 10 to the entire trip based on their perception of the safety/risk of driving, with 1 representing very dangerous or unsafe travel and 10 representing very safe travel.

\begin{table}[htbp]
\caption{Example of a virtual trip and a score given by a subject}
\label{tab:one_ubi_trip}
\centering
\scalebox{0.8}{
\begin{tabular}{|l|c|c|}
\hline
\textbf{driving events}                        & \textbf{value} & \textbf{subjective score} \\ \hline
Distance                                       & 7              & \multirow{8}{*}{8}          \\ \cline{1-2}
Average speed                                  & 6              &                             \\ \cline{1-2}
\# of acceleration events(\#Accel)             & 5              &                             \\ \cline{1-2}
\# of sudden starts(\#SuddenStart)             & 3              &                             \\ \cline{1-2}
\# of abrupt lane changes(\#AbruptLanceChange) & 2              &                             \\ \cline{1-2}
\# of intense brakes(\#IntenseBrake)           & 7              &                             \\ \cline{1-2}
\# of sudden stops(\#SuddenStop)               & 0              &                             \\ \cline{1-2}
\# of abrupt steerings(\#AbruptSteering)       & 1              &                             \\ \hline
\end{tabular}
}
\end{table}

Since there are no driver IDs associated with these trips, we randomly distributed the 200 trips to 40 clients with unbalanced local distributions to simulate the FL setting. 
We simulate the skewness of the metric distribution based on the shards partition strategy provided by the fedlab~\cite{JMLR:v24:22-0440} tool, where the number of shards is 40.
Furthermore, the number of training rounds is set to 1000, and the client selection rate is also 0.1 and 0.5. Then we mainly verify the effectiveness of the proposed method from two points of view. The first point is to compare the federated scoring model with the CL scoring model to prove the consistency of the two. The other point is to compare the scoring results of our scoring model with the mentioned subjective scores to further illustrate the fairness and objectivity of our approach.

\subsubsection{Evaluation Metrics}
For this data, the distance was first used to normalize other events and convert them into evaluation metrics, such as the average speed per km and the number of sudden brakes per km. Such a normalization allows all trips to be compared on the same scale. Next, through the histogram of these metrics shown in Fig.~\ref{fig:hosf_ubi}, it can be seen that all metrics obey the exponential distribution. Also, except for the average speed per kilometer (AvgSpd), which is an oscillating indicator, all the others are negative metrics.

Fig.~\ref{fig:client_hosf_ubi} shows an example of client-level metric distributions in the virtual UBI data. As can be seen, there is a significant difference in the local histogram compared to those in Fig.~\ref{fig:hosf_ubi}. We will then validate whether our method can overcome such a statistical heterogeneity.   

\begin{figure*}[htbp]
\centering
\subfigure[] {
      \includegraphics*[width=1.6in]{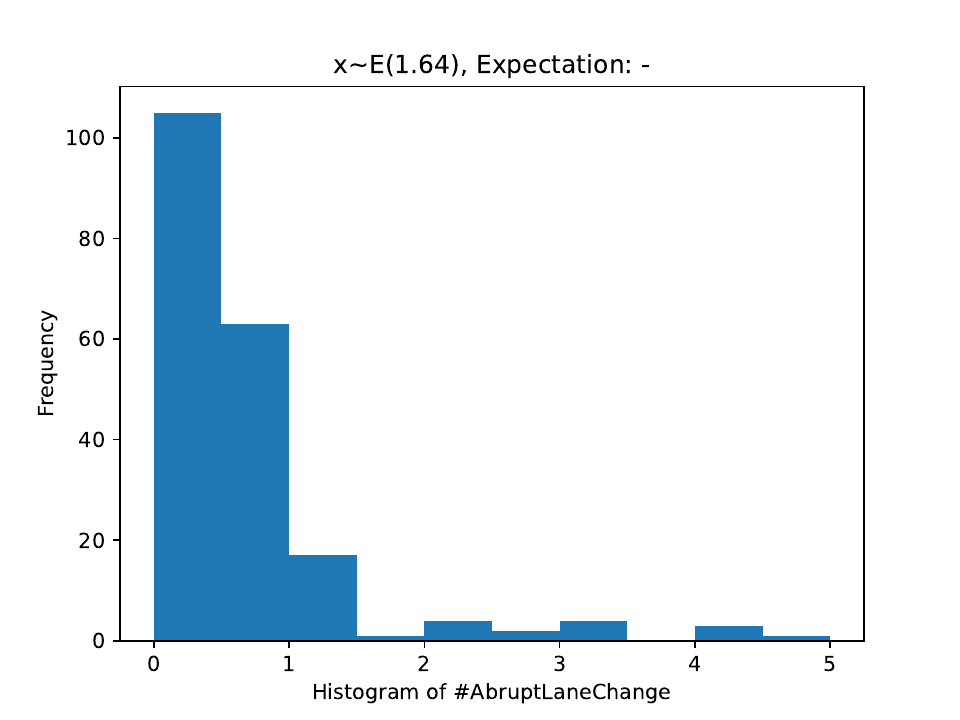}
  }
\subfigure[] {
      \includegraphics*[width=1.6in]{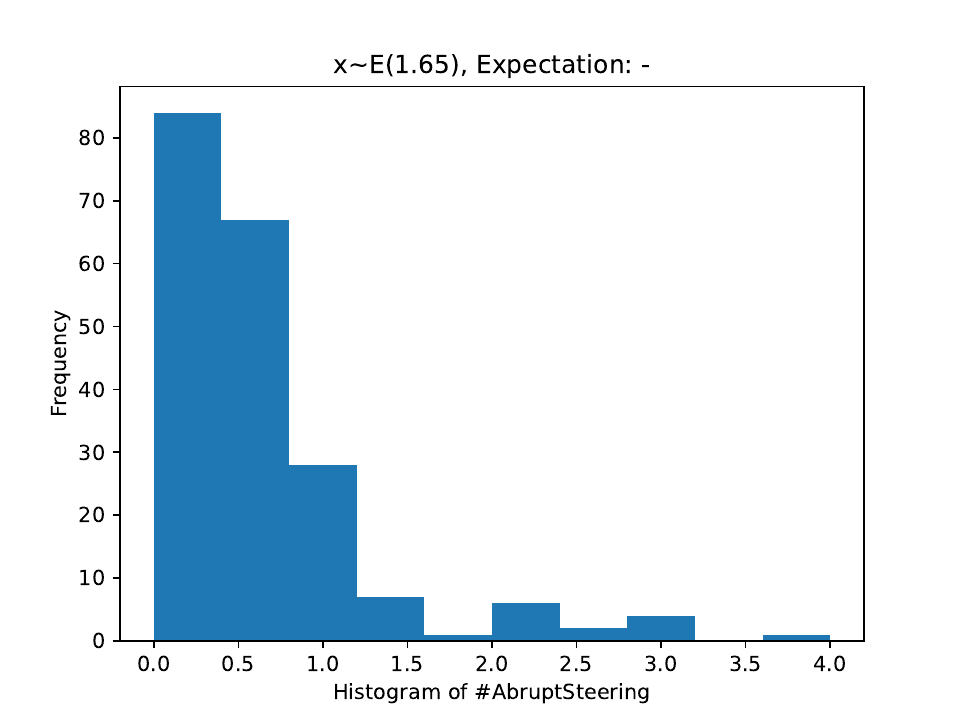}
  }
\subfigure[] {
      \includegraphics*[width=1.6in]{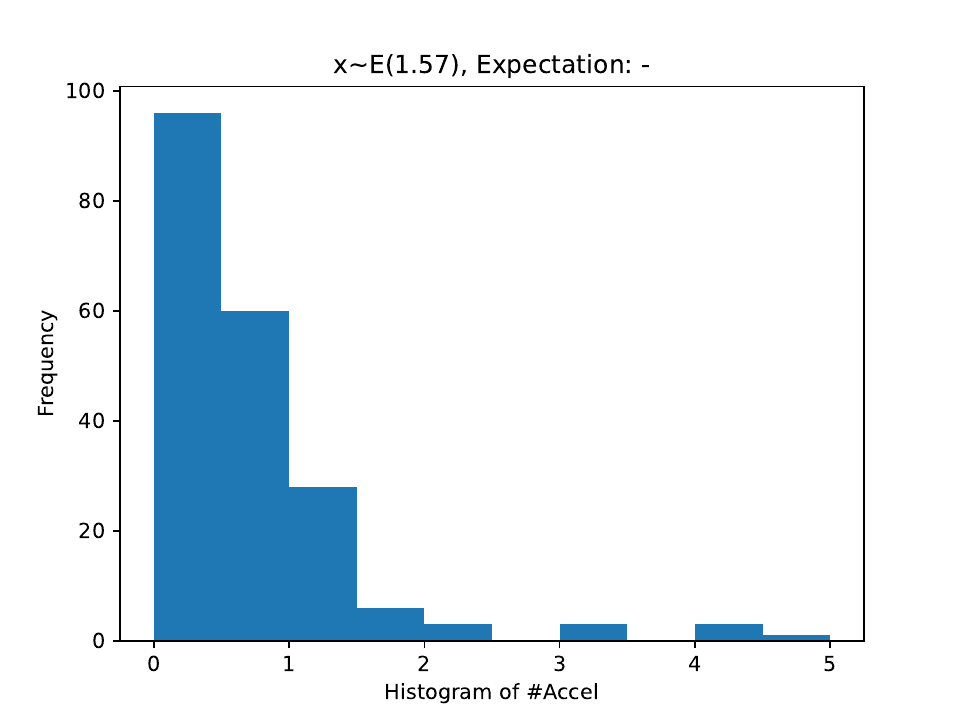}
}
\subfigure[] {
      \includegraphics*[width=1.6in]{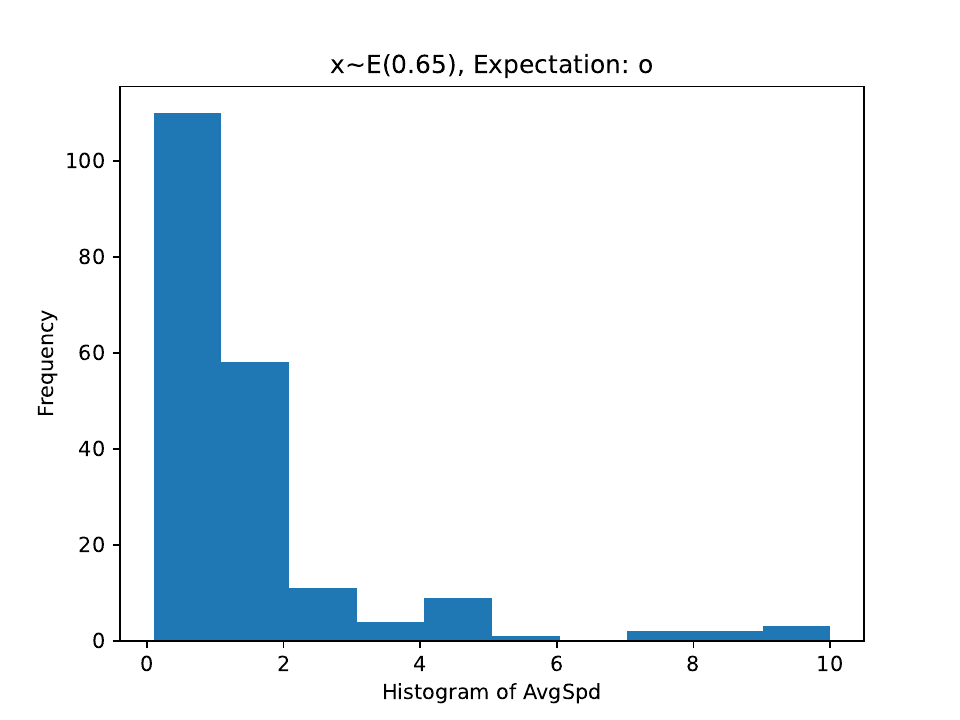}
  }
\subfigure[] {
      \includegraphics*[width=1.6in]{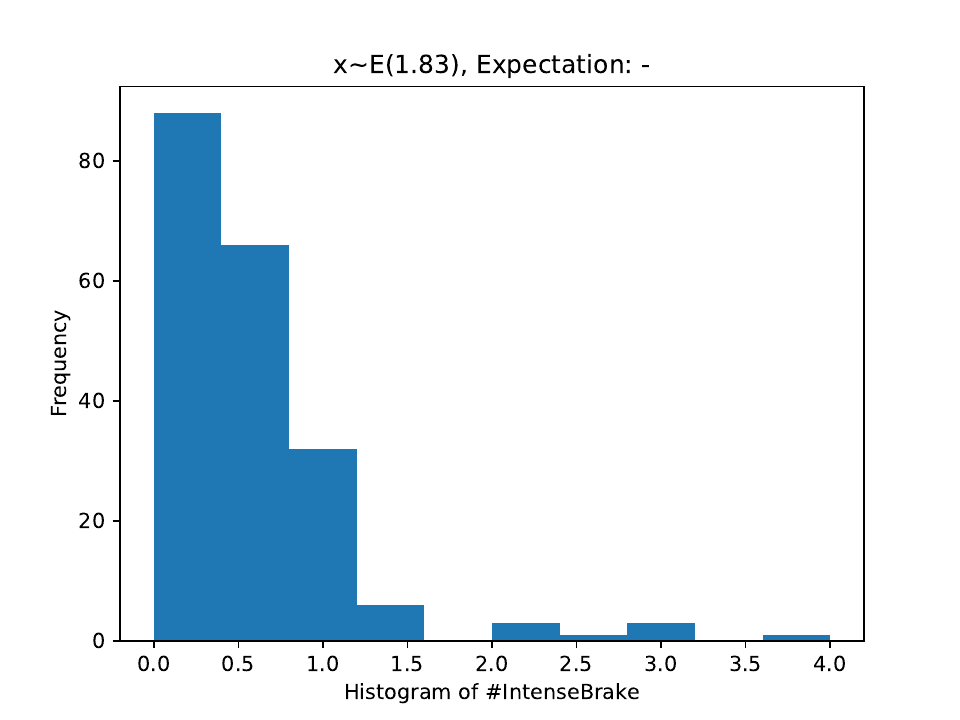}
  }
\subfigure[] {
      \includegraphics*[width=1.6in]{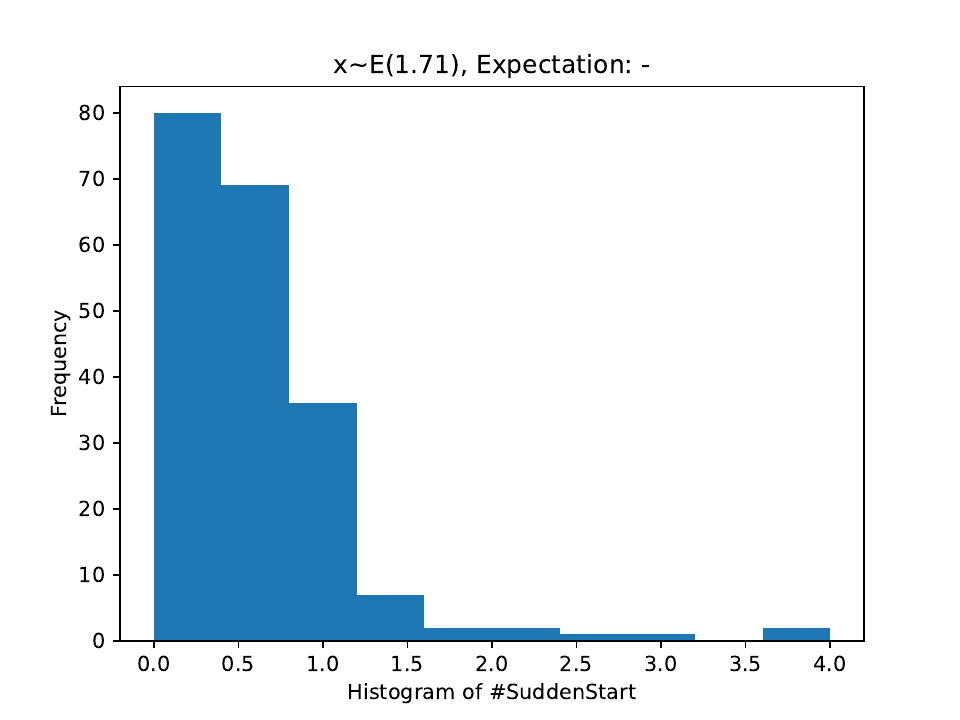}
  }
\subfigure[] {
      \includegraphics*[width=1.6in]{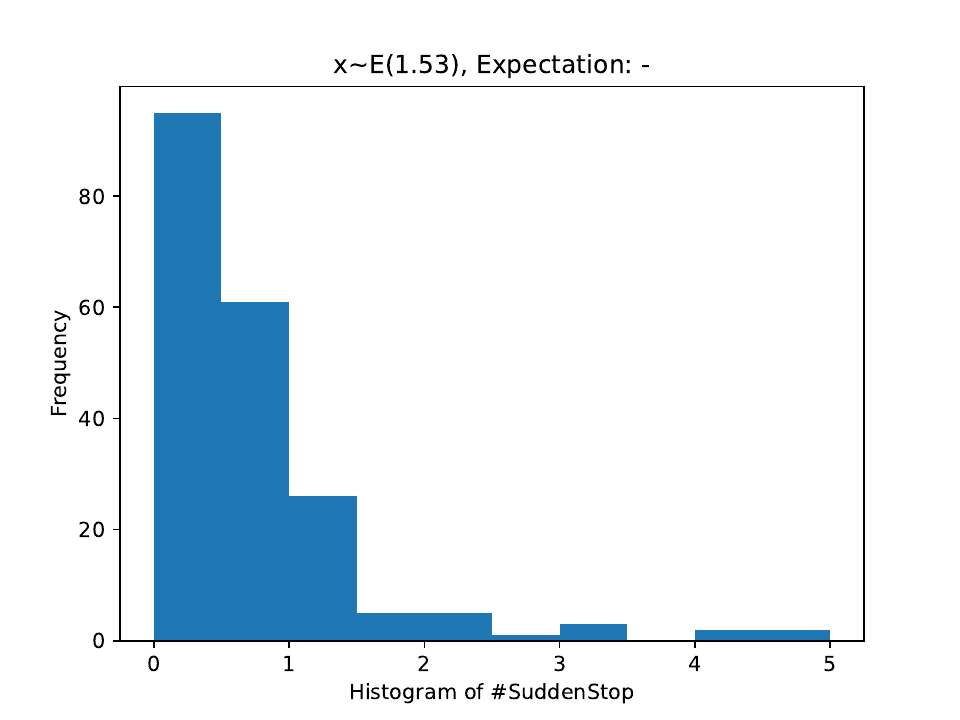}
  }
\caption{The overall distributions of the metrics in the virtual UBI data.}
\label{fig:hosf_ubi}
\end{figure*}

\begin{figure*}[htbp]
\centering
\subfigure[] {
      \includegraphics*[width=1.6in]{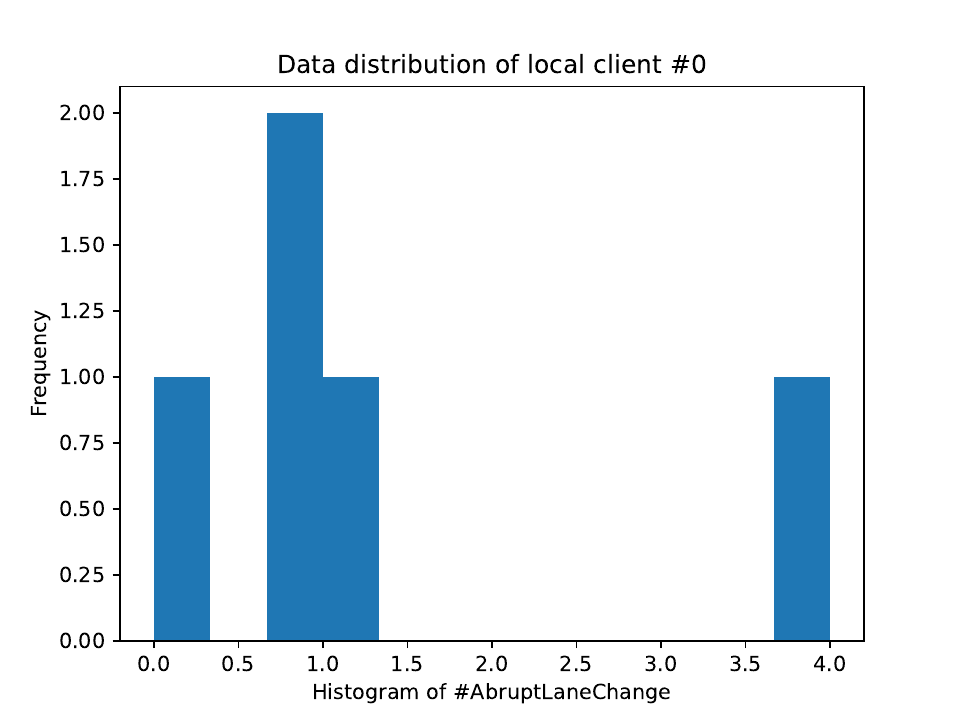}
  }
\subfigure[] {
      \includegraphics*[width=1.6in]{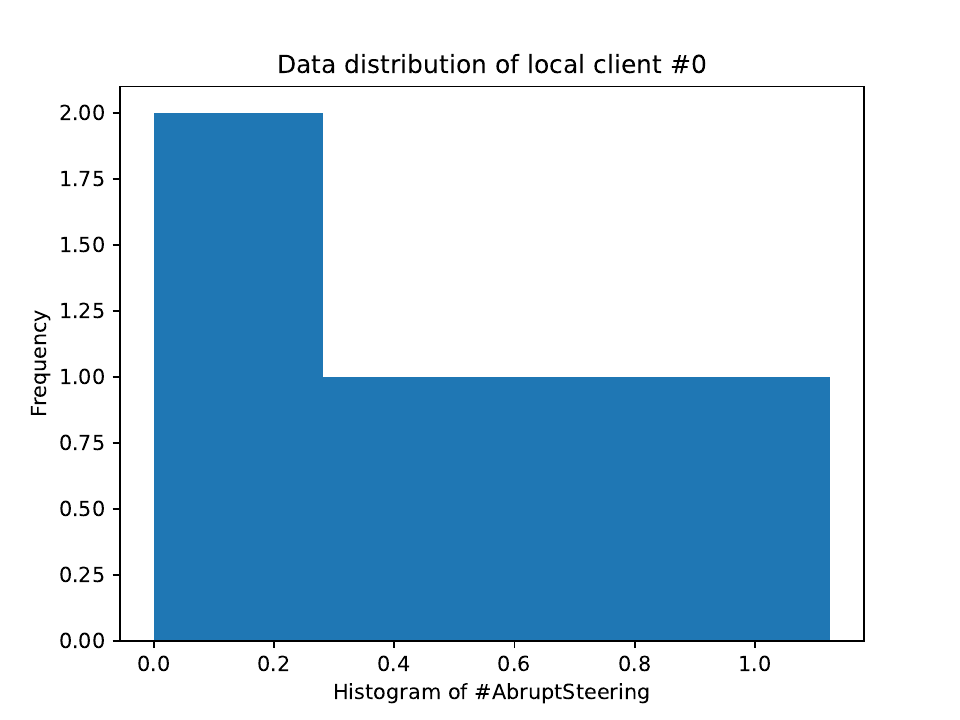}
  }
\subfigure[] {
      \includegraphics*[width=1.6in]{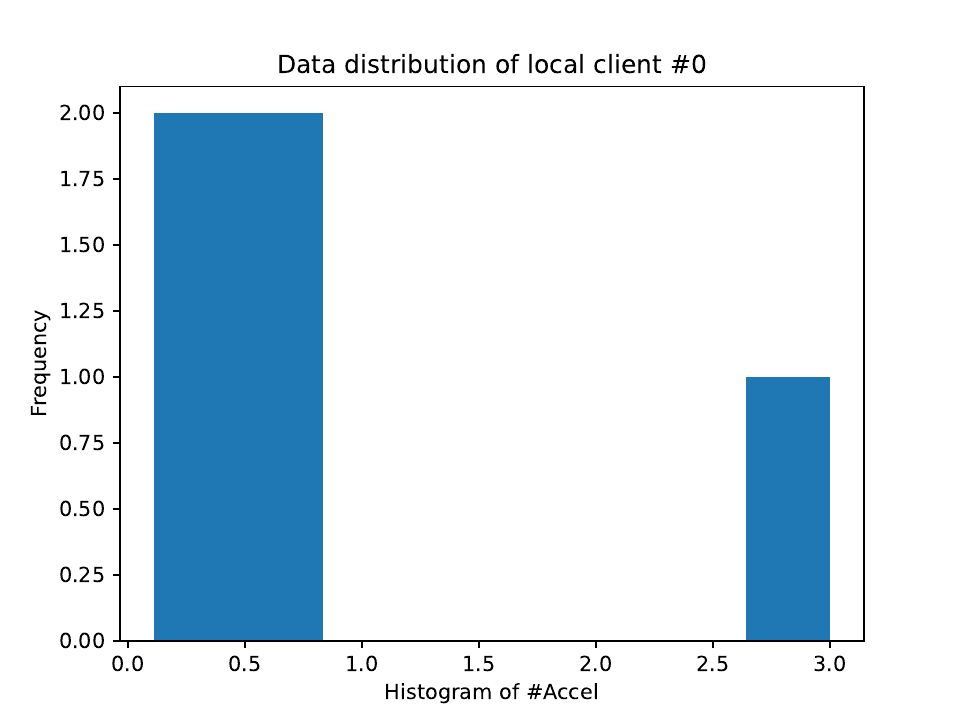}
}
\subfigure[] {
      \includegraphics*[width=1.6in]{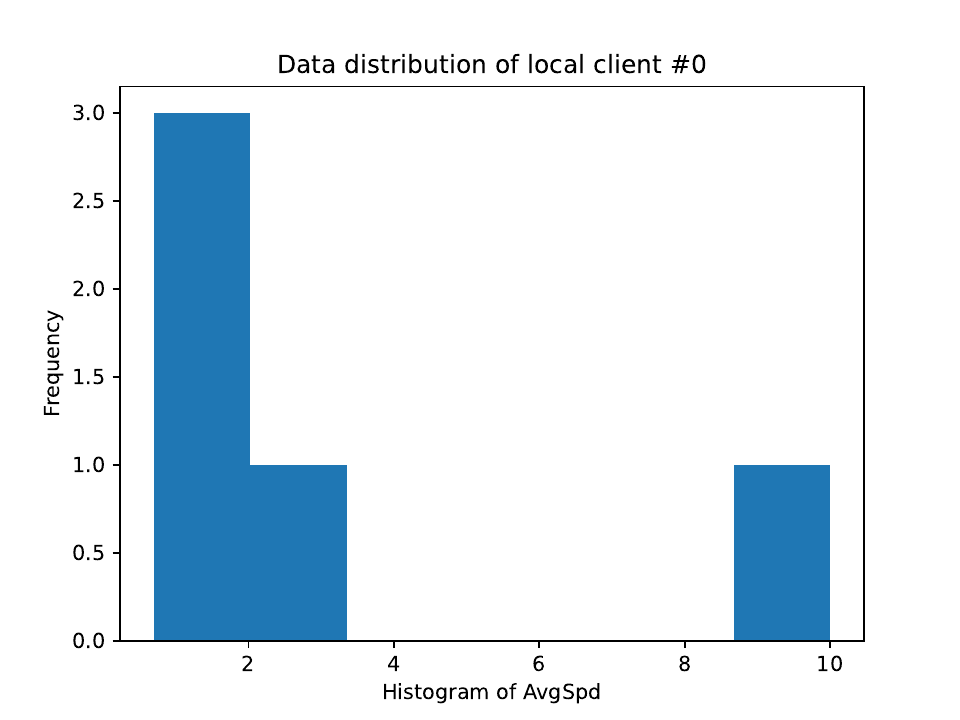}
  }
\subfigure[] {
      \includegraphics*[width=1.6in]{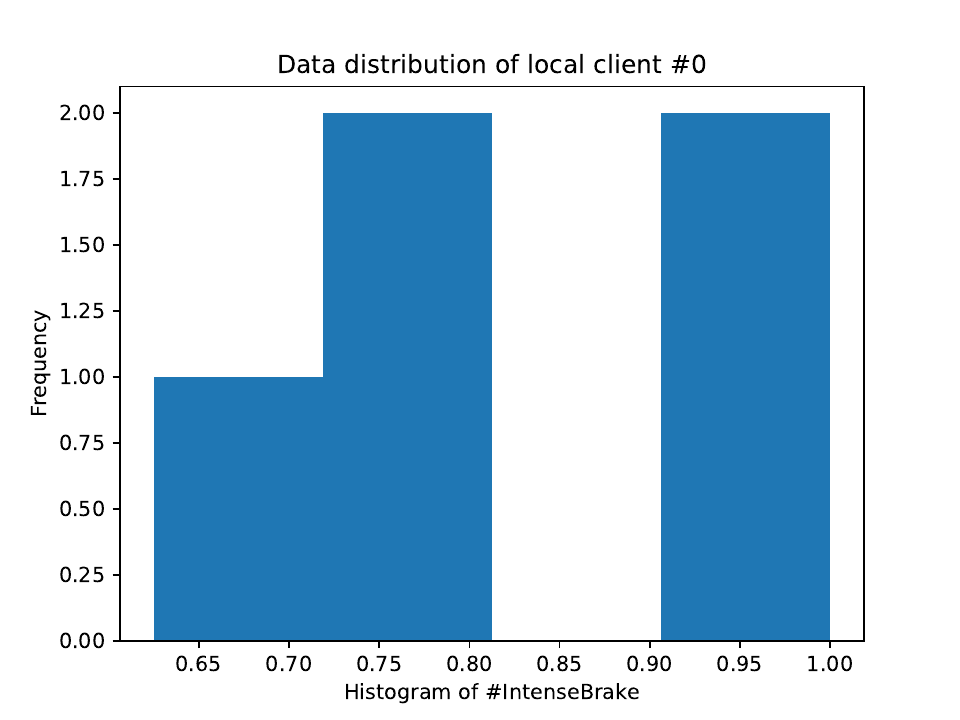}
  }
\subfigure[] {
      \includegraphics*[width=1.6in]{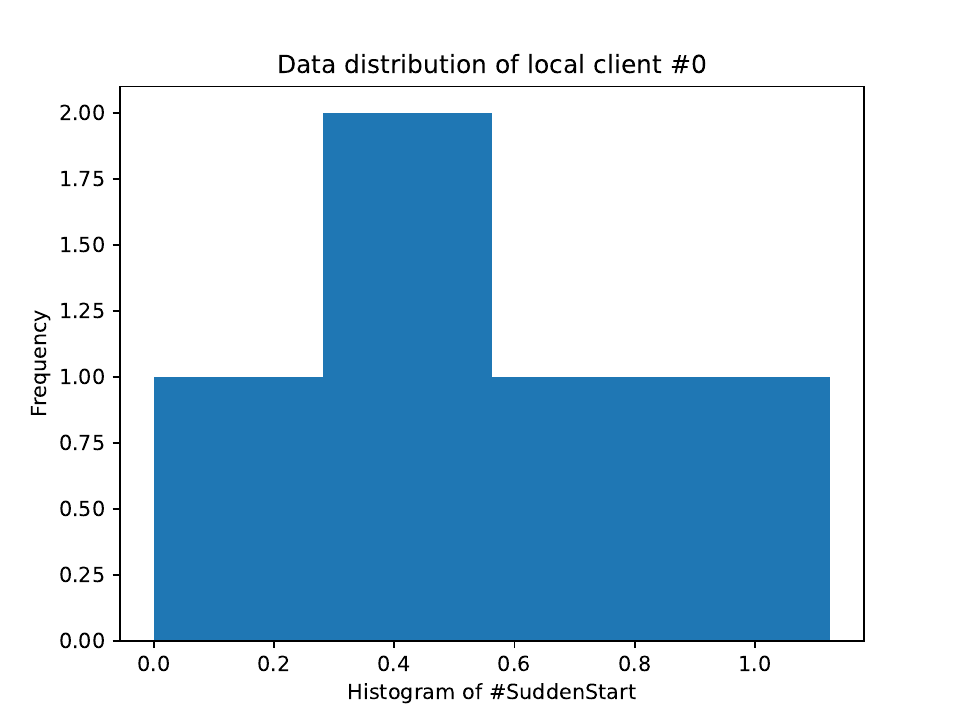}
  }
\subfigure[] {
      \includegraphics*[width=1.6in]{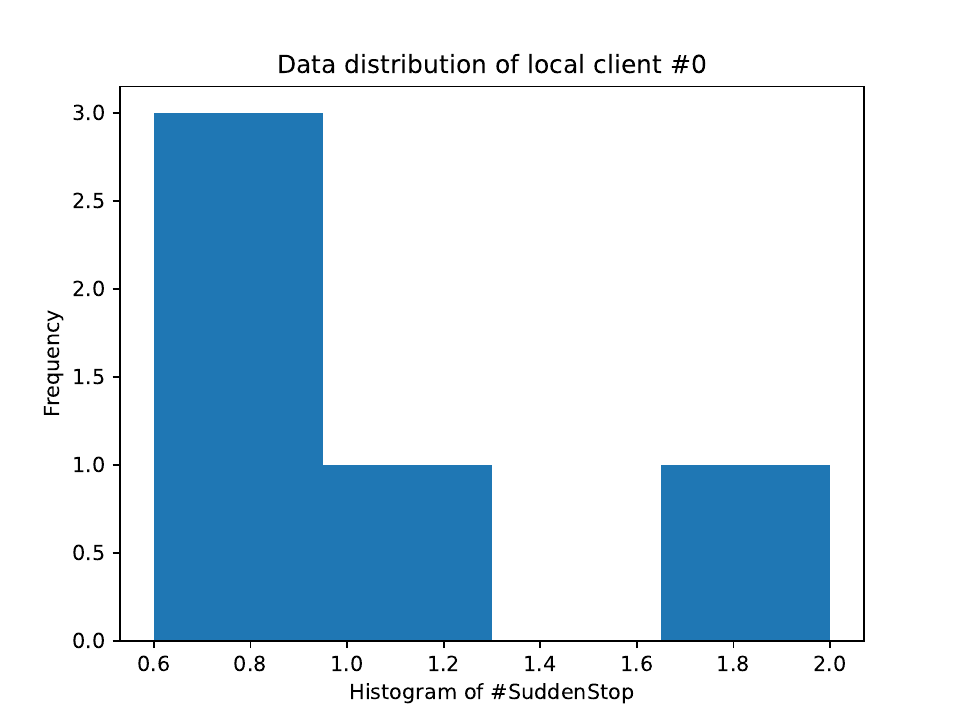}
  }
\caption{A client-level histograms of metrics in the virtual UBI data.}
\label{fig:client_hosf_ubi}
\end{figure*}

\subsubsection{Convergence Validation}
This section mainly demonstrates the convergence of the mean, variance and weights of the 7 evaluation metrics estimated by our method. Under the two client selection rates, the changing trend of the estimated mean and variance values of the metrics with the number of rounds during the federated training process is shown by the dotted line in Fig.~\ref{fig:ubi_mean_var}. Compared to the solid line of the corresponding color (the mean value calculated by CL), it can be seen that the mean lines of these metrics have achieved good convergence within 500 rounds. The two subgraphs on the right of Figure~\ref{fig:ubi_mean_var} illustrate the estimated variance of the metrics obtained by CF4CRITIC-DM with the number of rounds under two different client selection rates. It is evident that the estimated metric variances converge.

\begin{figure*}[!htbp]
\centering
\subfigure[$\tau=0.1$] {
      \includegraphics*[width=1.6in]{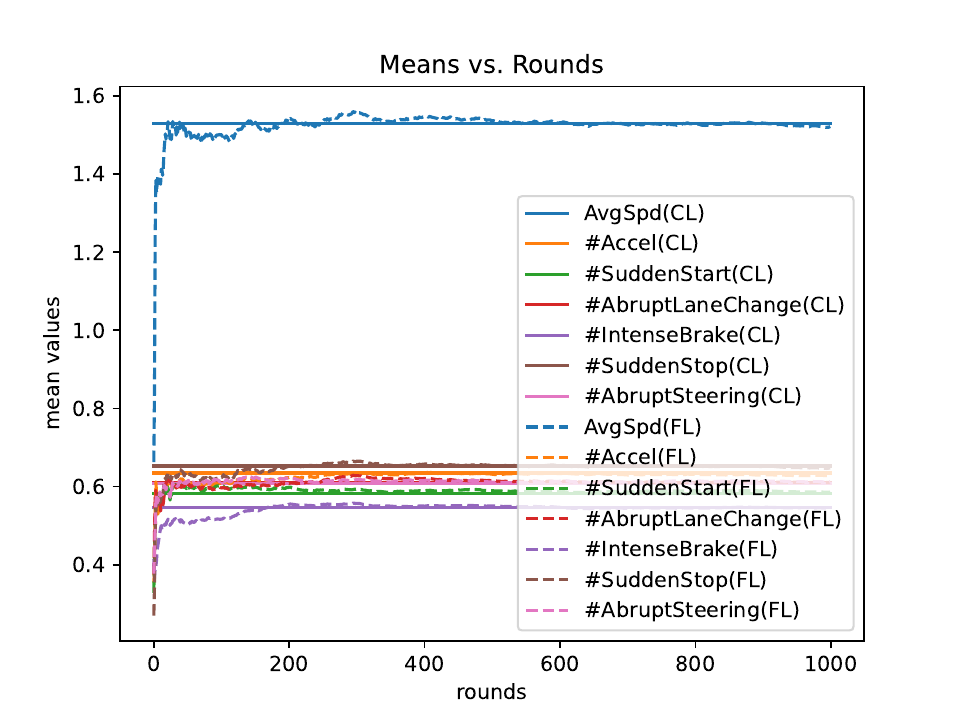}
}
\subfigure[$\tau=0.5$] {
      \includegraphics*[width=1.6in]{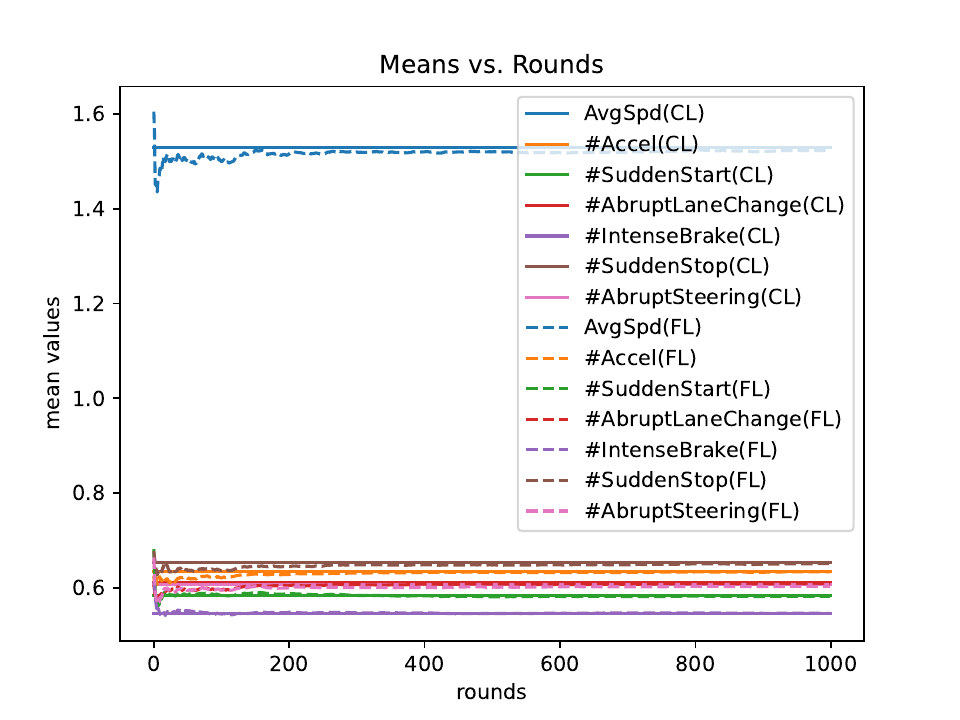}
  }
\subfigure[$\tau=0.1$] {
      \includegraphics*[width=1.6in]{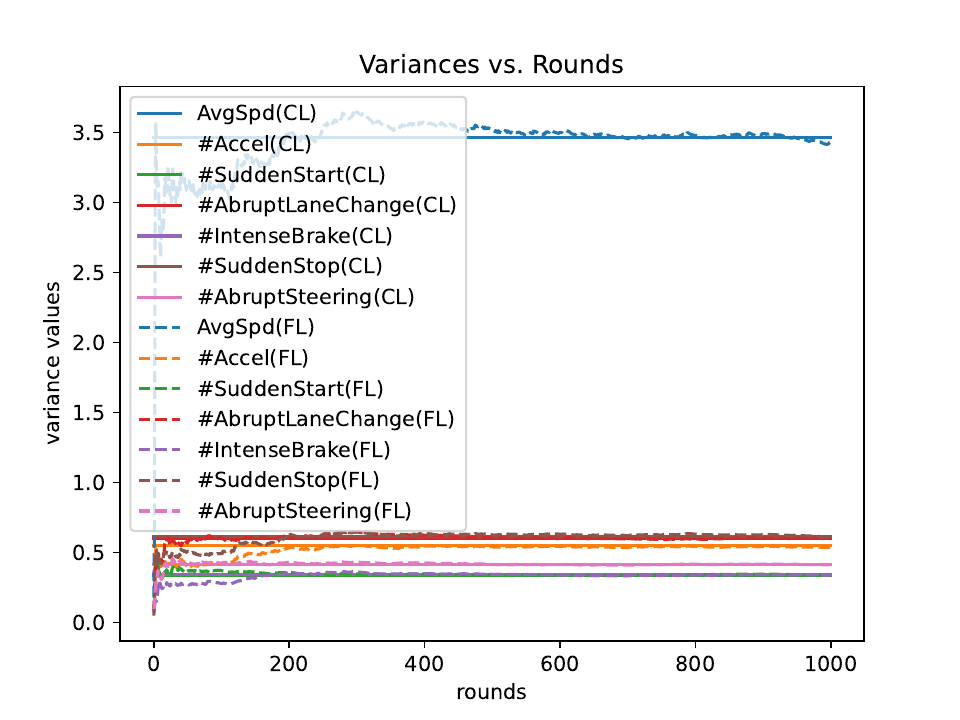}
}
\subfigure[$\tau=0.5$] {
      \includegraphics*[width=1.6in]{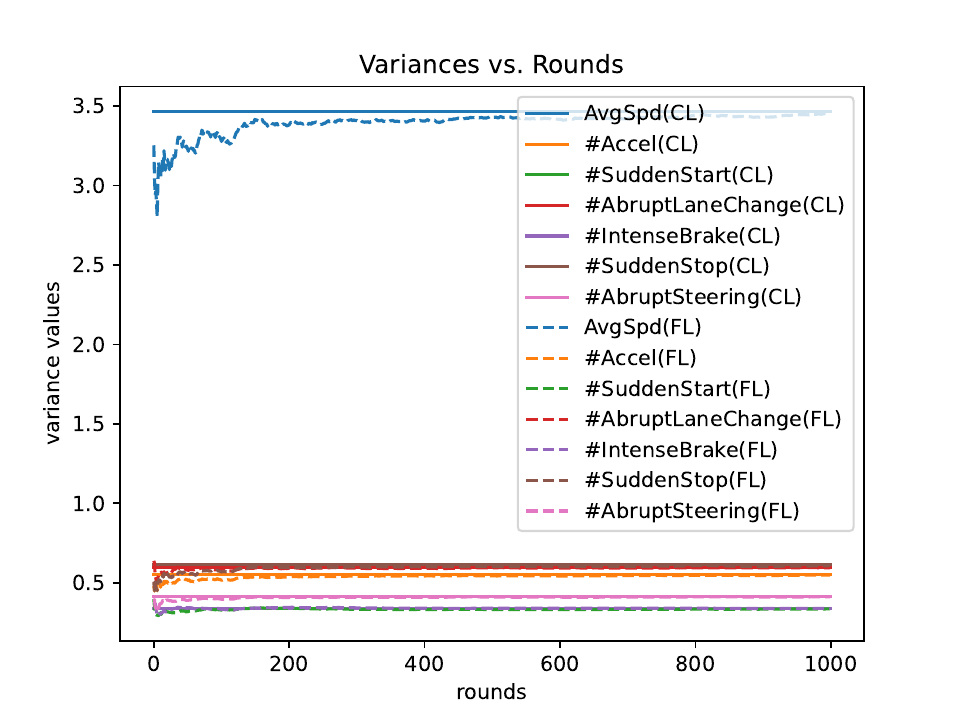}
  }
\caption{The estimated mean and variance of metrics in the virtual UBI data.}\label{fig:ubi_mean_var}
\end{figure*}

We then studied the metric weights. The two graphs in Fig.~\ref{fig:ubi_weights} demonstrate the alteration in the estimated weights with the number of rounds for two different client selection rates. It is clear that the 7 dotted lines rapidly reach their target lines, regardless of the selection rate being 0.1 or 0.5. The only difference is that the estimated weights converge slower when $\tau$ is 0.1.

\begin{figure}[!htbp]
\centering
\subfigure[$\tau=0.1$] {
      \includegraphics*[width=1.6in]{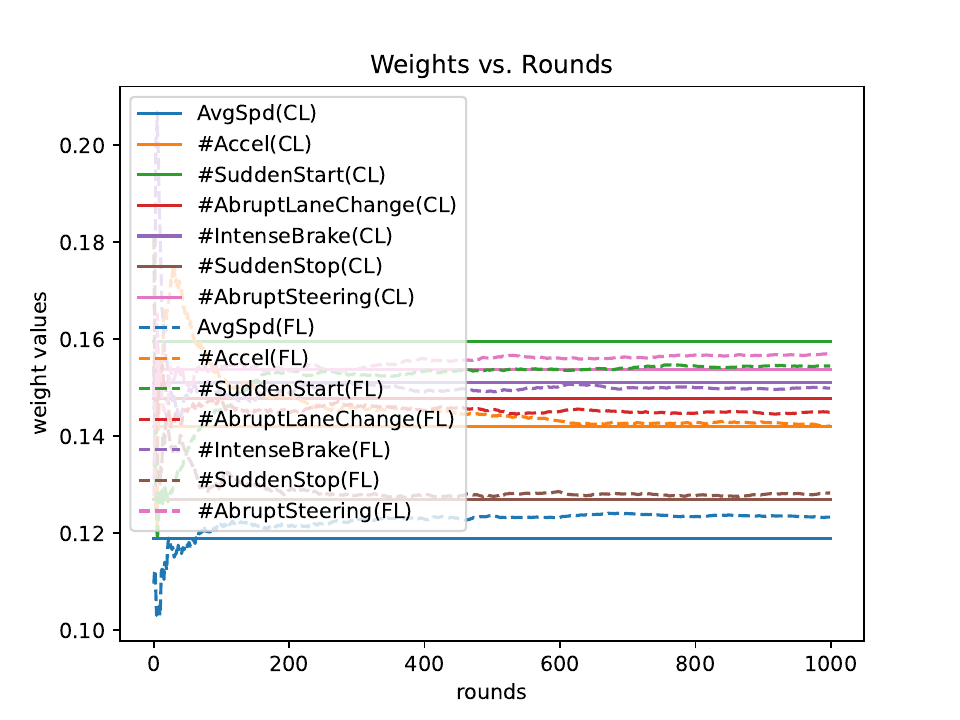}
}
\subfigure[$\tau=0.5$] {
      \includegraphics*[width=1.6in]{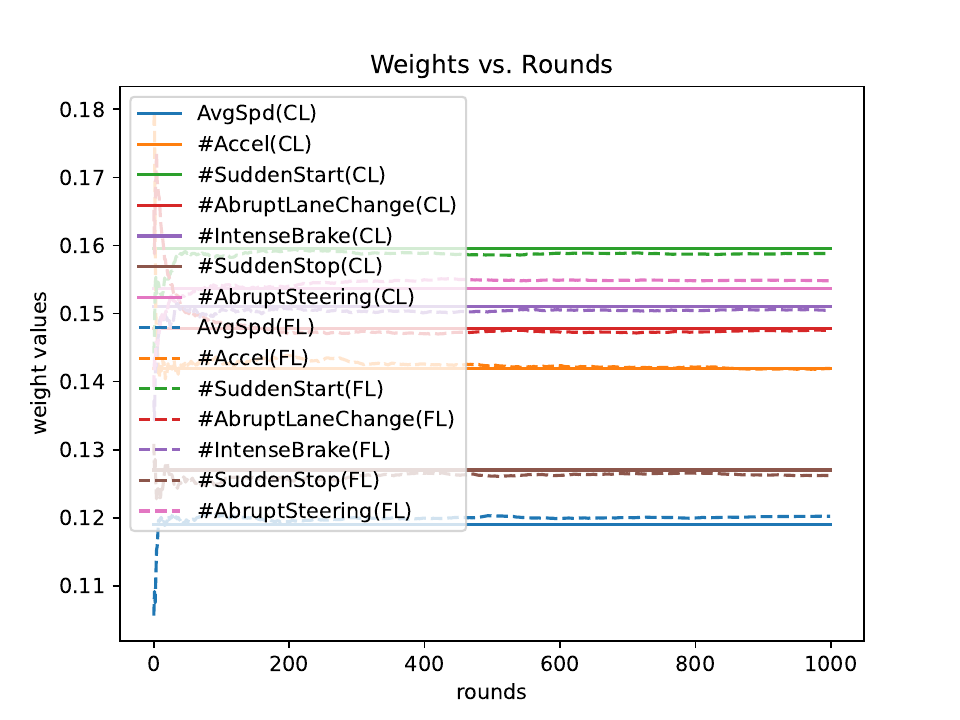}
  }
\caption{The estimated weights of the metrics in the virtual UBI data.} \label{fig:ubi_weights}
\end{figure}

\subsubsection{Utility Consistency Validation}
In this section, we take the metric weights as an example to validate the consistency of the model parameters. The CRITIC weight vector in $\mathcal{F}$ and the weight given by the two federated scoring models are listed in Table~\ref{tab:ubi_metric_weight}. 

\begin{table*}[]\small
\caption{Comparison of metric weight from different scoring models on virtual UBI data}
\label{tab:ubi_metric_weight}
\centering
\begin{tabular}{l|c|c|c|c|c|c|c}
\hline
\textbf{Models} & \textbf{AvgSpd} & \textbf{\#Accel} & \textbf{\#SuddenStart} & \textbf{\#AbruptLaneChange} & \textbf{\#IntenseBrake} & \textbf{\#SuddenStop} & \textbf{\#AbruptSteering} \\ \hline
$\mathcal{F}$& 0.1190&0.1420&	0.1595&0.1478&0.1511&0.1270&0.1537 \\
$\hat{\mathcal{F}}_{|\tau=0.1}$ & 0.1228&0.1418&0.1563&0.1435&0.1488&0.1283&0.1584 \\
$\hat{\mathcal{F}}_{|\tau=0.5}$ & 0.1202&0.1423&0.1587&0.1472&0.1504&0.1265&0.1546 \\ \hline
\end{tabular}
\end{table*}

It can also be seen that $\hat{\mathcal{F}}_{|\tau=0.1}$ and $\hat{\mathcal{F}}_{|\tau=0.5}$ give weight values very close to those of $\mathcal{F}$. To further prove the advantages of CRITIC weights, we first analyze the correlation between these evaluation metrics. Table~\ref{tab:ubi_pearson} lists the Pearson's correlation coefficients among these 7 metrics, where the coefficients with strong correlation are marked in bold. As can be noted from the table: 1) metrics with low correlation are given higher weights. For example, the metric $\#$SuddenStart has the highest value; 2) conversely, metrics with higher correlations are given lower weights. For example, the AvgSpd metric has a relatively high correlation with $\#$AbruptLaneChange, $\#$SuddenStop, $\#$Accel and $\#$AbruptSteering, so its weight value is also the lowest.

\begin{table*}[]\small
\caption{Pearson correlation coefficient matrix between evaluation metrics of virtual UBI data}
\label{tab:ubi_pearson}
\centering
\scalebox{0.8}{
\begin{tabular}{|l|c|c|c|c|c|c|c|}
\hline
\multicolumn{1}{|c|}{\textbf{}} & \textbf{AvgSpd} & \textbf{\#Accel} & \textbf{\#SuddenStart} & \textbf{\#AbruptLaneChange} & \textbf{\#IntenseBrake} & \textbf{\#SuddenStop} & \textbf{\#AbruptSteering} \\ \hline
\textbf{AvgSpd}                 & 1.0             & \textbf{0.69}    & 0.53                   & \textbf{0.73}               & 0.57                    & \textbf{0.76}         & \textbf{0.63}             \\ \hline
\textbf{\#Accel}                & \textbf{0.69}   & 1.0              & 0.31                   & 0.42                        & 0.38                    & 0.57                  & 0.45                      \\ \hline
\textbf{\#SuddenStart}          & 0.53            & 0.31             & 1.0                    & 0.26                        & 0.25                    & 0.48                  & 0.51                      \\ \hline
\textbf{\#AbruptLaneChange}     & \textbf{0.73}   & 0.42             & 0.26                   & 1.0                         & 0.49                    & 0.5                   & 0.42                      \\ \hline
\textbf{\#IntenseBrake}         & 0.57            & 0.38             & 0.25                   & 0.49                        & 1.0                     & 0.52                  & 0.34                      \\ \hline
\textbf{\#SuddenStop}           & \textbf{0.76}   & 0.57             & 0.48                   & 0.5                         & 0.52                    & 1.0                   & 0.48                      \\ \hline
\textbf{\#AbruptSteering}       & \textbf{0.63}   & 0.45             & 0.51                   & 0.42                        & 0.34                    & 0.48                  & 1.0                       \\ \hline
\end{tabular}}
\end{table*}

We then compare the consistency of the utility of the 3 scoring models. From the perspective of the score distribution, Fig.~\ref{fig:ubi_score_compare} shows the histograms of the scoring results of different models, in which Fig.~\ref{fig:ubi_score_comparea} first draws the histogram of the scores from $\mathcal{F}$, and the two subgraphs on the right are the histograms of $\hat{\mathcal{F}}_{|\tau=0.1}$ and $\hat{\mathcal{F}}_{|\tau=0.5}$, respectively. Once again, our two models present an identical distribution of scores for the target model $\mathcal{F}$. 

\begin{figure}[!htbp]
\centering
\subfigure[$\mathcal{F}$] {
\label{fig:ubi_score_comparea}
      \includegraphics*[width=1.0in]{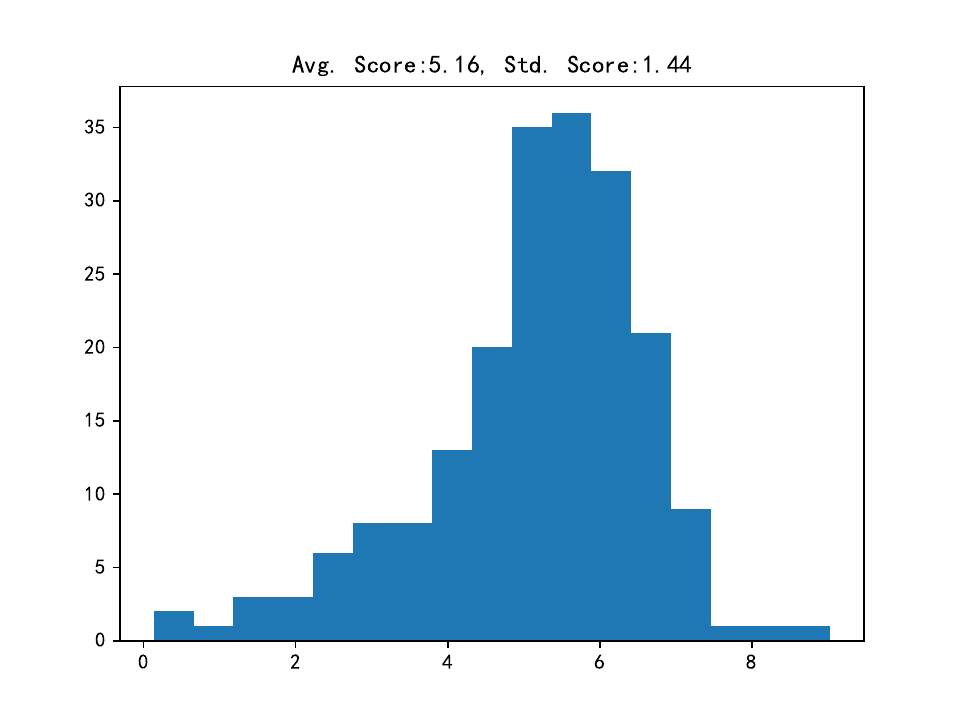}
}
\subfigure[$\hat{\mathcal{F}}_{|\tau=0.1}$] {
\label{fig:ubi_score_compareb}
      \includegraphics*[width=1.0in]{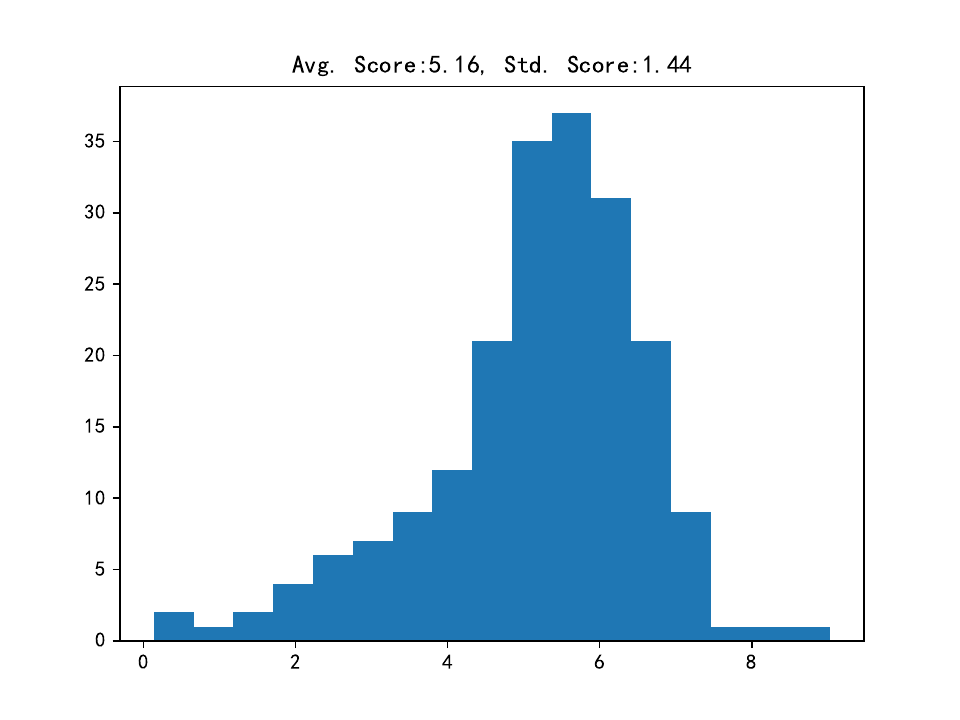}
  }
\subfigure[$\hat{\mathcal{F}}_{|\tau=0.5}$] {
\label{fig:ubi_score_comparec}
      \includegraphics*[width=1.0in]{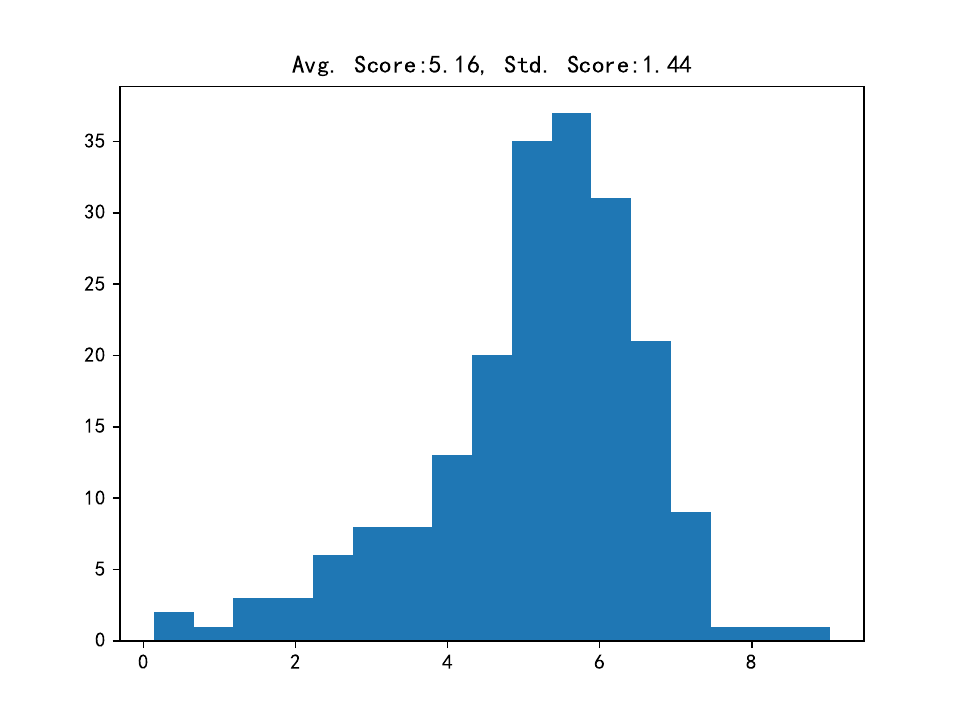}
}
\caption{Comparison of trip score distributions on the virtual UBI data.}
\label{fig:ubi_score_compare}
\end{figure}

The regression indexes used above are also used for quantitative consistency verification. Specifically, the scoring result of $\mathcal{F}$ is used as the ground truth, and the difference between it and the scoring results of the three FL scoring models is investigated. Table~\ref{tab:ubi_score_val} lists these indexes. First, we note that the CRITIC-DM model, that is, $\text{FedAvg}_{|\tau=0.5}$, learned by the typical FedAvg method, again fails to reach consistency with the target model $\mathcal{F}$. It can be clearly seen that the difference between the model scoring results of our method and the target model is very small, and the utility loss is negligible.

\begin{table}[]\small
\caption{Model score validation indexes on virtual UBI Trips}
\label{tab:ubi_score_val}
\centering
\begin{tabular}{c|l|l|l|l}
\hline
\textbf{} & $\mathcal{F}$ & $\hat{\mathcal{F}}_{|\tau=0.1}$ & $\hat{\mathcal{F}}_{|\tau=0.5}$ & $\text{FedAvg}_{|\tau=0.5}$ \\ \hline
MSE       & 0                      & 0.0006                & 0.0000    & 0.1990            \\ 
MAE       & 0                      & 0.0186                & 0.0024     & 0.3627           \\ 
RMSE      & 0                      & 0.0200            & 0.0051     & 0.4461      \\ 
$R^2$-score        & 1.0                    & 0.9998            & 0.9999  & 0.8929         \\ \hline
\end{tabular}
\end{table}

\subsubsection{Objective Fairness Analysis}
In this section, we illustrate the superiority of our method by comparing it with the subjective scores in this data set. First, Fig.~\ref{fig:ubi_score_subject_object} shows the distribution of subjects' subjective scores and our model's objective scores. It can be seen that the subjective rating is more in line with the standard normal distribution, while $\hat{\mathcal{F}}_{|\tau=0.5}$ produces a biased normal distribution. From Fig.~\ref{fig:ubi_score_subject_objecta}, we can see that most trips are rated between 6 and 8 points, and there are few trips rated as low and high scores in Fig.~\ref{fig:ubi_score_subject_objectb}. 

Then we also calculated the MSE, MAE, RMSE and $R^2$-score indexes between the $\hat{\mathcal{F}}_{|\tau=0.5}$ score and the subjective score, which were: 3.6387 and 1.5132, 1.9075, and 0.1419, respectively. No large differences are observed in these indexes. Moreover, our model performs even better than the supervised multilayer perceptron regression and safety index methods used in the work of \cite{lopez2018genetic}.

\begin{figure}[!htbp]

\centering
\subfigure[$\mathcal{F}$] {
\label{fig:ubi_score_subject_objecta}
      \includegraphics*[width=1.6in]{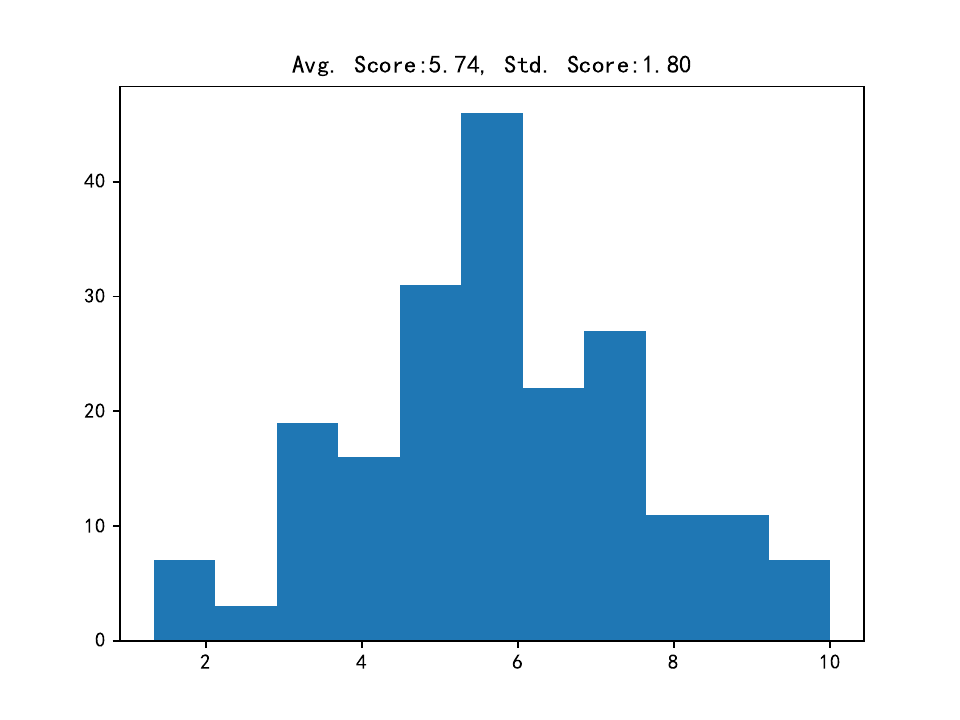}
}
\subfigure[$\hat{\mathcal{F}}_{|\tau=0.5}$] {
\label{fig:ubi_score_subject_objectb}
      \includegraphics*[width=1.6in]{ubi_results/FL_Overall_Score0.5.pdf}
}
\caption{Distributions of subjective scores and our model's objective scores.}\label{fig:ubi_score_subject_object}
\end{figure}

However, subjective scoring is not necessarily justified. Table~\ref{tab:ubi_effect_study} selects three typical trips for comparison and discussion. As shown in the table, the first four rows give the scoring results of the three trips. Followed by, we list the original evaluation metric values of each trip, and their values per kilometer after normalization based on distance. The purpose of normalizing by distance traveled is to ensure that scores are fairly scaled regardless of distance traveled. Therefore, considering those trips normalized by driving distance can achieve a more accurate and fair scoring mechanism.

\begin{table}[]\small
\caption{Examples of rationality comparison between subjective scoring and model scoring}
\label{tab:ubi_effect_study}
\centering
\scalebox{0.8}{
\begin{tabular}{|l|cc|cc|cc|}
\hline
\textbf{}                           & \multicolumn{2}{c|}{\textbf{Trip \#1}}                       & \multicolumn{2}{c|}{\textbf{Trip \#2}}                       & \multicolumn{2}{c|}{\textbf{Trip \#23}}                      \\ \hline
\textbf{Subjective scoring results} & \multicolumn{2}{c|}{4}                                       & \multicolumn{2}{c|}{6}                                       & \multicolumn{2}{c|}{9.3333}                                  \\ \hline
\textbf{}                           & \multicolumn{2}{c|}{4.5161}                                  & \multicolumn{2}{c|}{7.2339}                                  & \multicolumn{2}{c|}{3.4615}                                  \\ \hline
\textbf{}                           & \multicolumn{2}{c|}{4.5152}                                  & \multicolumn{2}{c|}{7.2353}                                  & \multicolumn{2}{c|}{3.4568}                                  \\ \hline
\textbf{}                           & \multicolumn{2}{c|}{4.5148}                                  & \multicolumn{2}{c|}{7.2288}                                  & \multicolumn{2}{c|}{3.4510}                                  \\ \hline
\textbf{Distance}                   & \multicolumn{2}{c|}{5km}                                     & \multicolumn{2}{c|}{9km}                                     & \multicolumn{2}{c|}{1km}                                     \\ \hline
\textbf{}                           & \multicolumn{1}{c|}{raw} & \multicolumn{1}{l|}{per km} & \multicolumn{1}{l|}{raw} & \multicolumn{1}{l|}{per km} & \multicolumn{1}{l|}{raw} & \multicolumn{1}{l|}{per km} \\ \hline
\textbf{AvgSpd}                     & \multicolumn{1}{c|}{8}   & 1.6                               & \multicolumn{1}{c|}{9}   & 1.0                               & \multicolumn{1}{c|}{3}   & 3.0                               \\ \hline
\textbf{\#Accel}                    & \multicolumn{1}{c|}{3}   & 0.6                               & \multicolumn{1}{c|}{5}   & 0.5556                            & \multicolumn{1}{c|}{1}   & 1.0                               \\ \hline
\textbf{\#SuddenStart}              & \multicolumn{1}{c|}{6}   & 1.2                               & \multicolumn{1}{c|}{5}   & 0.5556                            & \multicolumn{1}{c|}{1}   & 1.0                               \\ \hline
\textbf{\#AbruptLaneChange}         & \multicolumn{1}{c|}{3}   & 0.6                               & \multicolumn{1}{c|}{3}   & 0.3333                            & \multicolumn{1}{c|}{2}   & 2.0                               \\ \hline
\textbf{\#IntenseBrake}             & \multicolumn{1}{c|}{3}   & 0.6                               & \multicolumn{1}{c|}{0}   & 0.0                               & \multicolumn{1}{c|}{1}   & 1.0                               \\ \hline
\textbf{\#SuddenStop}               & \multicolumn{1}{c|}{4}   & 0.8                               & \multicolumn{1}{c|}{0}   & 0.0                               & \multicolumn{1}{c|}{0}   & 0.0                               \\ \hline
\textbf{\#AbruptSteering}           & \multicolumn{1}{c|}{5}   & 1.0                               & \multicolumn{1}{c|}{6}   & 0.6667                            & \multicolumn{1}{c|}{2}   & 2.0                               \\ \hline
\end{tabular}
}
\end{table}

Among the three trips, the subjective and objective scoring results of the trip $\#$1 are relatively close, and we observed that there are more trips with similar scores on all trips, which is why the four validation indexes mentioned above have relatively small differences. For trip $\#$23, subjects rated it highly due to its short distance and low absolute frequency of aggressive driving events. However, when these driving events are normalized by distance to the per kilometer scale, we find that most of the normalized metric values are higher than the normalized ones of the first two trips, so our model gives trip $\#$23 the lowest score. It can be seen that subjective scoring will be disturbed by the intuition of "what you see is what you get", thus lacking relative fairness when evaluating the driving performance of different drivers.

\subsection{Discussion}
Based on the experimental results, this section discusses the effectiveness of the proposed solution from the following points of view.

1. The convergence verification experiments on the two datasets show that : 1) The proposed CF4CRITIC-DM method can effectively guarantee convergence during the scoring model training process. As the number of training rounds increases, the globally aggregated statistics and weights of metrics (i.e., model parameters) are convergent; 2) Client selection rate affects the convergence speed. The $T$ required for the global model to reach convergence is in a cross-correlation relationship with the client selection rate. Furthermore, the experimental results strongly support Theorem 4.1 and Theorem 4.2 in this paper.

2. The main goal of the proposed method is to reduce the loss of utility of the federated scoring model. From the utility consistency verification results on the two datasets, it is obvious that the proposed CF4CRITIC-DM method can obtain a federated scoring model that is highly consistent with the CL-based scoring model.

3. Finally, in terms of the application effect of the proposed scoring model, this study proves the proposed method from two aspects: 1) In the absence of prior knowledge(labels), it can still effectively distinguish different driving performances of drivers; 2) It does not require too much manual intervention and has better fairness than subjective scoring.

\section{Conclusion}
Driver profile is a trending task in intelligent transportation systems, where driving performance is usually given a score as a profile. The popularity of intelligent connected vehicles allows this task to collect large-scale driving data to build a scoring model. However, the lack of target labels and data privacy concerns requires an unsupervised and privacy-friendly solution.
 
This paper presents a FedDriveScore framework to compensate for the drawback of the data-centralized scoring method. First, by treating the final score as a mixture density derived from the evaluation metrics, a CRITIC-DM method is introduced to construct a scoring model on unlabeled driving trip data. Then, 
a consistently federated version of the CRITIC-DM method is proposed: 1) by combining homomorphic encryption and federated learning to deal with privacy, regulations, and other issues in the centralized learning process. 2) to ensure the effectiveness of the federated scoring model suffered from the statistical heterogeneity of local driving data on the vehicle side.

This method is tested on two datasets from two application views. The results show that our method ensures that the training process converges to the CL level and that the federated scorecard model can achieve lossless performance in scoring driving performance.

\ifCLASSOPTIONcompsoc
  \section*{Acknowledgments}
\else
  \section*{Acknowledgment}
\fi

\bibliographystyle{unsrtnat}
\bibliography{IEEEabrv,sample_library.bib}


\end{document}